\definecolor{gray}{rgb}{0.35,0.35,0.35}
\definecolor{blue}{rgb}{0,0,1}
\definecolor{red}{rgb}{1,0,0}
\definecolor{orange}{rgb}{0.75, 0.4, 0}
\definecolor{green}{rgb}{0.0, 0.5, 0.0}
\newcommand{\tzvika}[1]{{\color{blue}\textbf{Tzvika: }\sf#1}}
\newtheorem*{proposition*}{proposition}
\newcommand{\ignore}[1]{}
\def\F{\mathcal{F}}  
\def\O{\mathcal{O}} \def\I{\mathcal{I}} \def\S{\mathcal{S}}
  \def\I{\mathcal{I}}
\def\W{\mathcal{W}} \def\R{\mathcal{R}} 
 \def\A{\mathcal{A}}
\def\eps{\varepsilon}
\renewcommand{\leq}{\leqslant}
\renewcommand{\geq}{\geqslant}
\newcommand{\Cpp}{C\raise.08ex\hbox{\tt ++}\xspace}
\newcommand{\argmin}{\operatornamewithlimits{argmin}}
\def\0{\bm{0}}
\def\thmhead@plain#1#2#3{%
  \thmname{#1}\thmnumber{\@ifnotempty{#1}{ }\@upn{#2}}%
  \thmnote{ {\the\thm@notefont#3}}}
\let\thmhead\thmhead@plain
\def\eps{\varepsilon}
\def\reals{\mathbb{R}}
\def\seq#1{\langle #1 \rangle}
\def\abs#1{\mathopen| #1 \mathclose|}		% use instead of $|x|$
\def\norm#1{\mathopen\| #1 \mathclose\|}	% use instead of $\|x\|$
\def\Paren#1{\left( #1 \right)}		% need better macro name!
\DeclareRobustCommand{\cost}{\textit{\raisebox{0.27pt}{\scalebox{0.97}{\textcent}}}}
\def\interior{{\text{int}}}
\newcommand{\poslit}{\ensuremath{\R^+}}
\newcommand{\neglit}{\ensuremath{\R^-}}
\def\ass{\Lambda}
\def\pth{\pi_0}
\newcommand{\starts}{\mathbf{s}}
\newcommand{\finals}{\mathbf{f}}
\newcommand{\pathens}{\Pi}
\newcommand{\shrtpath}{\Gamma}
\newcommand{\optc}{\cost^*(\I)}
\title{Multi-Robot Motion Planning for Unit Discs with Revolving Areas\thanks{Work by Pankaj K. Agarwal and Erin Taylor is supported by IIS-1814493, CCF-2007556, and CCF-2223870. 
Work by Dan Halperin and Tzvika Geft has been supported in part by the Israel Science Foundation (grant no.~1736/19), by NSF/US-Israel-BSF (grant no.~2019754), by the Israel Ministry of Science and Technology (grant no.~103129), by the Blavatnik Computer Science Research Fund, and by the Yandex Machine Learning Initiative for Machine Learning
at Tel Aviv University. Tzvika Geft has also been supported by a scholarship from the Shlomo Shmeltzer Institute for Smart Transportation at Tel Aviv University.}}
\author {
    % Authors
    Pankaj K.~Agarwal \thanks{Department of Computer Science, Duke University, USA.} \\ \small{pankaj@cs.duke.edu}  \and 
    Tzvika Geft\thanks{School of Computer Science, Tel Aviv University, Israel.} \\ \small{zvigreg@mail.tau.ac.il} \and 
    Dan Halperin\footnotemark[3] \\ \small{danha@tauex.tau.ac.il} \and 
    Erin Taylor\footnotemark[2] \\ \small{ect15@cs.duke.edu}
}
\renewcommand\subparagraph{%
 \@startsection {subparagraph}{5}{\z@ }{3.25ex \@plus 1ex
 \@minus .2ex}{-1em}{\normalfont \normalsize \bfseries }}%
\newtheorem{theorem}{Theorem}
\newtheorem{corollary}[theorem]{Corollary}
\newtheorem{lemma}[theorem]{Lemma}
\theoremstyle{definition}
\def\eat#1{#1}
\def\short#1{}
\begin{document}
\maketitle

\begin{abstract}
We study the problem of motion planning for a collection of $n$ labeled unit disc robots in a polygonal environment. 
We assume that the robots have \emph{revolving areas} around their start and final positions: that each start and each final is contained in a radius $2$ disc lying in the free space, not necessarily concentric with the start or final position, which is free from other start or final positions.  
This assumption allows a \emph{weakly-monotone} motion plan, in which robots move according to an ordering as follows: during the turn of a robot $R$ in the ordering, it moves fully from its start to final position, while other robots do not leave their revolving areas.
As $R$ passes through a revolving area, a robot $R'$ that is inside this area may move within the revolving area to avoid a collision.
Notwithstanding the existence of a motion plan, we show that minimizing the total traveled distance in this setting, specifically even when the motion plan is restricted to be weakly-monotone, is APX-hard, ruling out any polynomial-time $(1+\eps)$-approximation algorithm. 

On the positive side, we present the first constant-factor approximation algorithm for computing a feasible weakly-monotone motion plan. The total distance traveled by the robots is within an $O(1)$ factor of that of the optimal motion plan, which need not be weakly monotone. 
Our algorithm extends to an online setting in which the polygonal environment is fixed but the initial and final positions of robots are specified in an online manner. 
Finally, we observe that the overhead in the overall cost that we add while editing the paths to avoid robot-robot collision can vary significantly depending on the ordering we chose. Finding the best ordering in this respect is known to be NP-hard, and we provide a polynomial time $O(\log n \log \log n)$-approximation algorithm for this problem.
\end{abstract}

% \clearpage 
% \pagenumbering{arabic} 
\section{Introduction} 

Multi-robot systems are already in use in logistics, in a variety of civil engineering and nature preserving tasks, and in agriculture, to name a few areas. They are anticipated to proliferate in the coming years, and accordingly they attract intensive research efforts in diverse communities.
% with a dedicated IEEE conference, graduate schools, and massive portions of established conferences, such as ICRA, IROS or AAMAS, devoted to multi-robot and multi-agent research.

% An important aspect of a multi-robot system is its collision-free motion capability, namely the ability of its constituent robots to navigate in their workspace without colliding with obstacles nor with other robots. 
A basic motion-planning problem for a team of robots is to plan such collision-free paths for the robots between given start and final positions. Among the many dimensions along which the multi-robot motion planning (MRMP) problem has been studied, we focus on three: (1) we distinguish between distributed and centralized control. In the former each robot has limited knowledge of the entire environment where the robots move, and each robot may communicate with few neighboring robots. In the latter, which is typical in factory automation and other well-structured environments, a central authority has control over all the robots and the planning for each robot takes into consideration knowledge about the state of all the other robots in the system. (2) In the \emph{labeled} version the robots are distinguishable from one another and each robot has its own assigned target, whereas in the \emph{unlabeled} version the robots are indistinguishable, i.e., each target can be occupied by any robot in the team and the motion-planning problem is considered solved if at the end of the motion all the target positions are occupied. (3) We further distinguish between \emph{continuous} or \emph{discrete} domains. Much of the study of motion planning in computational geometry and robotics assumes that the workspace is \emph{continuous}. In AI research, where the problem is typically called multi-agent path finding (MAPF)~\cite{stern2019multiagent}, the domain is modeled as a graph. Nowadays the MAPF problem is studied in diverse research communities, often as an approximation of the continuous domain.

In our study here we consider a \emph{centralized, labeled}, and \emph{continuous} version of MRMP.  Furthermore, we are not only interested in finding a solution to the given  motion-planning problem, but rather in finding a high-quality solution. Specifically, we aim to find a solution that minimizes the total path length traveled by the robots. 

\subparagraph{Related Work.}
Computing a feasible motion plan (not necessarily a good one) itself is in general computationally hard for MRMP (see, e.g., \cite{hopcroft1984complexity, DBLP:conf/fun/BrunnerCDHHSZ21, DBLP:journals/ijrr/SoloveyH16,geft2021complexity}).  In the results that we cite next, some additional mitigating conditions are assumed on the system to obtain efficient motion-planning algorithms. 

There are  few results that guarantee bounds on the quality of the motion plans for multi-robot systems. For complete algorithms\footnote{A motion planning algorithm is called \emph{complete} if, in finite time, it is guaranteed to find a solution or determine that no solution exists.} in the unlabeled case, there are bounds on the length of the longest path taken by a robot in the system \cite{DBLP:journals/arobots/TurpinMMK14}, or on the sum of distance traveled by all the robots~\cite{DBLP:conf/rss/SoloveyYZH15}. 
For the labeled case, Demaine et al.~\cite{demaine2019coordinated} provide constant-factor approximation algorithms for minimizing the execution time of a coordinated parallel motion if there are no obstacles. 
Still for the labeled case, Solomon and Halperin obtained a very crude bound on the sum of distances~\cite{SolomonHalperin2018} (the approximation factor can be linear in the complexity of the environment in the worst case) in a setting identical to the setting of the current paper, namely assuming the existence of \emph{revolving areas}---see below for a formal definition. 
No sublinear approximation algorithm is known for MRMP even if we assume the existence of revolving areas and the cost of a motion plan is the sum of the lengths of individual paths. In the current paper we significantly improve over and expand the results in~\cite{SolomonHalperin2018} in several ways, as we discuss below.  

An alternative approach to cope with the hardness of motion planning is to use \emph{sampling-based} methods~\cite{DBLP:journals/cacm/Salzman19}. In their seminal paper, Karaman and Frazzoli~\cite{karaman2011sampling} (see also~\cite{DBLP:conf/icra/SoloveyJSFP20}) introduced an algorithm, called RRT*, which guarantees near optimality if the number of samples tends to infinity. A related algorithm dRRT* handles the multi-robot case with the same type of guarantee~\cite{DBLP:journals/arobots/ShomeSDHB20}. Recently Dayan et~al.~\cite{DBLP:conf/icra/DayanSPH21} have obtained near-optimality with finite sample size for the multi-robot case.
% ~\cite{DBLP:conf/icra/DayanSPH21}.

% \erin{Changed \textbackslash~pathens (bold \textbackslash pi) to be captial \textbackslash Pi on Danny's suggestion.}
\subparagraph{Problem Statement.} Let $\W$ be a polygonal environment, that is, a polygon with holes in $\reals^2$ and a total of $m$ vertices. Let $R_1, \dots, R_n$ be $n$ robots, each modeled as a unit disc, that move around in $\W$.
Let $\mathcal{O} = \reals^2 \setminus \W$ be the obstacle space.
For a point $p \in \reals^2$, let $D_p$ denote the unit disc centered at point $p$. 
Let $\F = \{x \in \W : D_x \cap \O = \emptyset \}$ represent the free space of $\W$ (with respect to one $R_i$). 
A \emph{path} is a continuous function $\mathbf{\pi}: I \rightarrow \reals^2$ from an interval $I$ to $\reals^2$, and is \emph{collision-free} if it is contained in $\F$. 
Let $\ell(\pi)$ denote the arc length of $\pi$, i.e., $\ell(\pi) = \int_I \abs{\pi'(t)} dt$.
The position of each $R_i$ is specified by the $x$- and $y$-coordinates of its center $c_i$ and we use $R_i(c)$ to denote $R_i$ being at $c$ (note that $R_i(c)$ is the same as $D_c$), and a motion of $R_i$ is specified by the path followed by its center. 
Let $\interior\  D$ denote the interior of disc $D$.  
A \emph{path ensemble} $\pathens = \{ \pi_1, \dots,
\pi_n\}$ 
% \danny{be cautious with the notation \textbackslash~pi: we are using two different symbols \textbackslash~pathens and \textbackslash~pi; furthermore \textbackslash~pi has been used differently in the beginning of the paragraph}
is a set of $n$ paths defined over a common interval $I$, i.e. $\pi_i: I \rightarrow \reals^2$, for $1 \leq i \leq n$; $\pathens$ is called \emph{feasible} if (i) $\pi_i \subset \F$ for every $i \leq n$, and (ii) for any $t \in I$ and for any pair $i \neq j$, $\interior\  R_i(\pi_i(t)) \cap \interior\  R_j(\pi_j(t)) = \emptyset$, i.e., the $R_i$'s remain in $\W$ and they do not \emph{collide} with each other (but may touch each other) during the entire motion. 
% ($\interior(R_i)$ denotes the interior of the disc $R_i$). 
We also refer to $\pathens$ as a \emph{motion plan} of $R_1, \dots, R_n$. 
The cost of $\pathens$, denoted by $\cost(\pathens)$, is defined as $\cost(\pathens) = \sum_{i =1}^n \ell(\pi_i)$.

We are given a set of \emph{start} positions $\starts = \{s_1, \dots, s_n \}$ where the $n$ robots initially lie  and a set of \emph{final} (also called \emph{target}) positions $\finals = \{ f_1, \dots, f_n \}$. 
Our goal is to find a path ensemble $\pathens^* = \{ \pi^*_1, \dots, \pi^*_n\}$ over an interval $[0, T]$ where $T$ denotes the ending time of the last robot movement, 
% \tzvika{Should this be $\pathens^*$? Otherwise (ii) is not clear.}
\begin{description} 
\item[(i)] $\pi^*_i(0) = s_i$ and $\pi^*_i(T) = f_i$ for all $i$, and 
\item[(ii)] $\cost(\pathens^*) = \min_{\pathens} \cost(\pathens)$ where the minimum is taken over all feasible path ensembles.
% \tzvika{Inconsistent minimum/inf.}
%Tzvika 30.11.21: Shouldn't it be $\cost(\pathens)$ ?
% \danny{why infimum and not minimum?}
\end{description}

% \danny{We often use MRMP for Multi Robot Motion Planning}
% We refer to this problem as \emph{optimal multi-robot motion planning (MRMP)}.
% An instance of optimal MRMP is specified as $\I = (\W, n, \starts, \finals)$ where $n$ is the number of moving robots and $\W, \starts, \finals$ are as defined above. 
% Let $\pathens^*(\I)$ denote the optimal solution of $\I$ and let $\cost^*(\I) = \cost(\pathens^*(\I))$. 
% \tzvika{Changed from $\cost(\I)$ to $\cost^*(\I)$}
% MRMP is known to be intractable-- see below. 

We refer to the problem as \emph{optimal multi-robot motion planning (MRMP)}.
In this paper, we investigate optimal MRMP under the assumption that there is some free space around the starting and final positions of $R_1, \dots, R_1$, a formulation introduced in~\cite{SolomonHalperin2018}.
A \emph{revolving area} of a start or final position $z \in \starts \cup \finals$, is a disc $A_z$ of radius $2$ such that:
\begin{inparaenum}[(i)]
\item $D_z \subseteq A_z$, 
\item  $A_z \cap \O = \emptyset$, and
\item for any other start or final position $y \in (\starts \cup \finals) \setminus \{z\}$, $A_z \cap D_y = \emptyset$. 
\end{inparaenum}
That is, each $R_i$ lies in a revolving area at its start and final position (note that $z$ need not be the center of the revolving area $A_z)$ and does not intersect any other revolving areas, and the revolving areas do not intersect any obstacles.
% \danny{The following is an important aspect of the revolving area model.}
We remark that the revolving areas may intersect one another; this makes the separation assumptions in the current paper lighter than in related results (e.g.,\cite{DBLP:journals/tase/AdlerBHS15}), which in turn makes the analysis more involved.
See Figure~\ref{fig:simple-revolving-area} for an example.
Set $\A = \{A_z : z \in \starts \cup \finals \}$.
We refer to this problem as \emph{optimal multi-robot motion planning with Revolving Areas (MRMP-RA)}.

We define the \textit{active interval} $\tau_i \subseteq [0,1]$ as the open interval from the first time $R_i$ leaves the revolving area $A_{s_i}$ of $s_i$ to the last time $R_i$ is not in the revolving area $A_{f_i}$ of $f_i$. If the active intervals $\{\tau_1,\ldots,\tau_n\}$ are pairwise disjoint then we call $\pathens$ a \emph{weakly-monotone motion plan} (with respect to revolving areas).\footnote{We use the term ``weakly-monotone" because a plan is called monotone if the active interval of $R_i$ is defined from when $R_i$ leaves $s_i$ for the first time until $R_i$ reaches $f_i$ for the last time, (rather than the leaving/reaching the revolving area $A_{s_i}$/$A_{f_i}$).} Finally, an instance of optimal MRMP is specified as $\I = (\W, \starts, \finals)$ where 
% $n$ is the number of moving robots and %Erin: n is implicit in size of \starts, \finals 
$\W, \starts, \finals$ are as defined above. 
Let $\pathens^*(\I)$ denote an optimal solution of $\I$ and let $\cost^*(\I) = \cost(\pathens^*(\I))$.

\subparagraph{Our Results.}

\begin{figure}
    \captionsetup[subfigure]{justification=centering}
     \centering
     \begin{subfigure}[b]{0.3\textwidth}
         \centering
         \includegraphics[width=\textwidth]{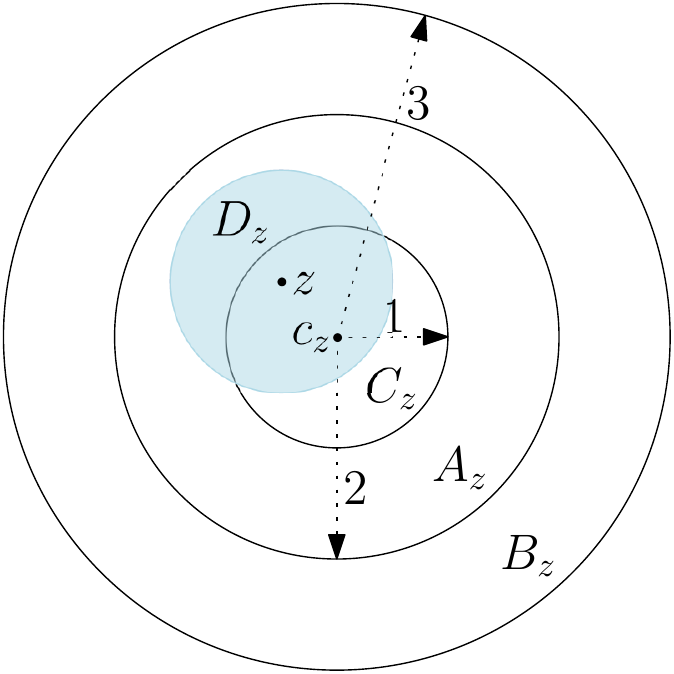}
         \caption{}
     \end{subfigure}
     \hfill
     \begin{subfigure}[b]{0.6\textwidth}
         \centering
         \includegraphics[width=.9\textwidth]{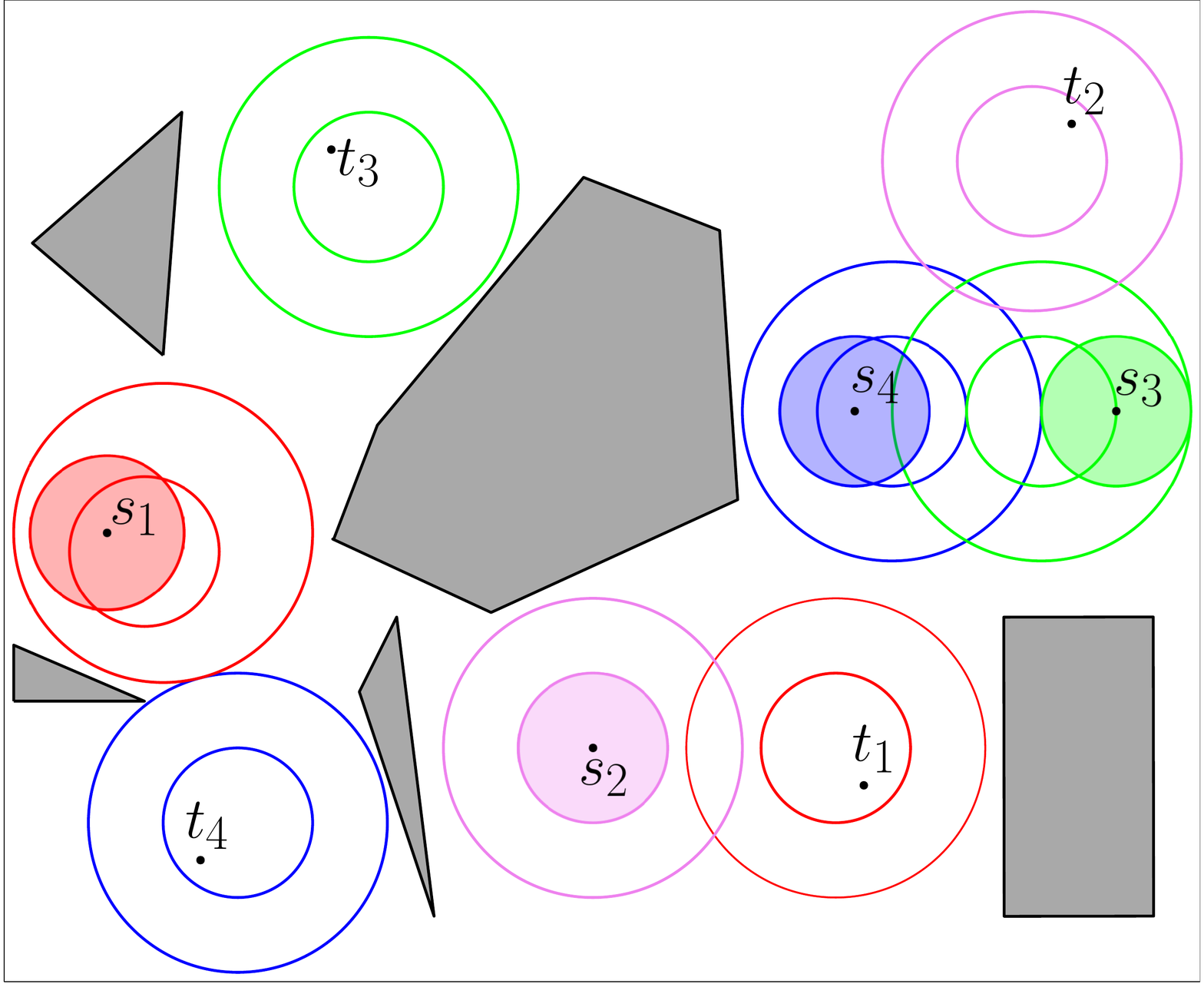}
         \caption{}
     \end{subfigure}
        \caption{(a) On the left, revolving area $A_z$ for some $z \in \starts \cup \finals$ with $D_z \subseteq A_z$, core $C_z$, and buffer $B_z$. (b) On the right, we show an instance of MRMP-RA. Each robot is shown as a filled disc in its starting revolving area, and its target revolving area is shown in the same color. Obstacles are dark gray.}
\label{fig:simple-revolving-area}
\end{figure}

The paper contains the following three main results: 
\begin{description}
\item[(A) Hardness results.] In Section~\ref{sec:np-hardness}, we show that MRMP-RA is NP-hard under the weakly-monotone assumption.
The NP-hardness of optimizing sum of distances (i.e., optimal MRMP) for the monotone and the general (non-monotone) case was shown in \cite{GH2021Refined}, but without revolving areas.
Our main result here is the extension of the NP-hardness construction to prove that MRMP-RA, under the weakly-monotone assumption, is in fact APX-hard, which rules out a polynomial-time $(1+\eps)$-approximation algorithm for it.
To the best of our knowledge, this is the first APX-hardness result for any MRMP variant.
\item[(B) Approximation algorithm.] In Section~\ref{sec:alg}, we present the first $O(1)$-approximation algorithm that given an instance $\I = (\W, \starts, \finals, \A)$ of MRMP-RA computes a feasible path ensemble $\pathens$ from $\starts$ to $\finals$ such that $\pathens$ is weakly-monotone and $\cost(\pathens) = O(1) \cdot \cost^*(\I)$; note that $\Pi^*(\I)$ need not be weakly-monotone, i.e., we approximate the general optimal path ensemble. 
In fact, we show that the robots can be moved in any order, so our algorithm can be extended to an online setting where the robots $R_i$, and their start/final positions, $(s_i, f_i)$ are given in an online manner, or $R_i$'s may have to execute multiple tasks which are given in an online manner-- the so-called life-long planning problem. Our algorithm ensures an $O(1)$ competitive ratio, i.e., the cost is $O(1)$ times the optimal cost of the offline problem.

The algorithm begins by computing a set of shortest paths $\shrtpath$ that avoid obstacles but ignore robot-robot collisions. Then, $\shrtpath$ is edited to avoid robot-robot collisions by moving non-active robots within their revolving areas.
Our overall approach is the same as by Solomon and Halperin~\cite{SolomonHalperin2018}, but the editing of $\shrtpath$ differs significantly from~\cite{SolomonHalperin2018}, so that the cost of the paths does not increase by too much. 
We use a more conservative editing of $\Gamma$, which  enables us to prove that
the cost of the edited path ensemble is $O(1) \cdot \cost(\Gamma)$ (see Section~\ref{sec:analysis}), while the cost of the edited path in~\cite{SolomonHalperin2018} is\footnote{Notice that the roles of $m$ and $n$ here are reversed with respect to~\cite{SolomonHalperin2018}.}  $O(\cost(\Gamma) +mn + m^2)$. 
Our main technical contributions are defining a more conservative retraction, proving that the motion plan remains feasible even under this conservative retraction, and bounding the total cost of the motion plan by using a combination of local and global arguments.  Analyzing both the feasibility and the cost of the motion plan are nontrivial and require new ideas. 
% the final set of paths are feasible with cost $O(\cost(\shrtpath))$; see Section~\ref{sec:analysis}. 
% Done: \danny{I believe that switching to capital Greek letters for collections of paths, e.g. $\Gamma$ here, will be more easy on the reader.}
\item[(C) Computing a good ordering.] 
The result above shows that editing the paths increases the total cost of the motion plan only by a constant factor irrespective of the order in which we move the robots. 
However, the overhead in the overall cost due to editing (to avoid robot-robot collisions) can vary significantly depending on the ordering we chose. 
This raises the question whether we can find a ``good" ordering that minimizes the overhead. 
The result in~\cite{SolomonHalperin2018} implies that the problem of finding a good ordering that minimizes the amount of overhead is NP-hard.\footnote{The model in~\cite{SolomonHalperin2018} for defining the overhead is different from ours, their construction can nevertheless be adapted to our setting.} 
We present a polynomial time $O(\log n \log \log n)$-approximation algorithm for finding a good ordering. This is achieved by reducing the problem to an instance of weighted feedback arc set in a directed graph, and applying an approximation algorithm for the latter problem~\cite{even1998approximating}. 
Due to lack of space, this result is described in Section~\ref{sec:ordering}.
% See Section~\ref{sec:ordering}.
\end{description}

We emphasize that without additional, mitigating, assumptions, MRMP is intractable. Sampling-based planners assume that the full solution paths have some clearance around them---namely, each robot has some distance from the obstacles along its entire path, as well as from the other robots. Here, we  assume certain clearance only at the start and goal positions; we do not make any assumption about the clearance along the paths.
Indeed, we assume non-negligible clearance, as we require that each robot at a start or goal position is encapsulated inside a disc of radius~$\boldmath{2}$, which does not contain any other robot at its start or goal position. The choice of the number~$2$ here is not arbitrary. In a couple of related results for MRMP of unit discs~\cite{DBLP:journals/tase/AdlerBHS15, BanyassadyEtAl.SoCG.2022} this is the critical value of clearance below which there does not always exist a solution to the problem. %These other problems differ from the problem studied in this paper in several ways, and most importantly they do not require any clearance from \emph{obstacles}. Therefore it might be that the threshold $2$ used here could be improved. 

% \noindent Due to length constraints of the paper, some proofs have been moved to the appendix.

\section{Hardness of Distance Optimal MRMP-RA} \label{sec:np-hardness}

In this section we present our hardness results.
Throughout this section all path ensembles are weakly-monotone, unless otherwise stated.
With a slight abuse of notation we use $\cost^*$ to denote the cost of the optimal weakly-monotone path ensemble.
Finding monotone path ensembles has been shown to be NP-hard in~\cite{geft2021complexity} using a similar grid-based construction without revolving areas.
% We use similar arguments for our construction.

% We focus on the NP-hardness construction while the extension to APX-hardness is given in Appendix~\ref{subsec:apx}.
%, for which we establish the result using a connection to the hardness of finding a monotone path ensemble.

\noindent \subparagraph*{NP-Hardness of weakly-monotone MRMP-RA} 
%\label{subsec:np-hardness}
Let $Q(x_1,\ldots, x_n) = \bigwedge_{i=1}^m C_i$ be an instance of 3SAT with $n$ variables and $m$ clauses. Each clause $C_i$ is a disjunction of three \textit{literals}, which are variables or their negations.
%where each clause $C_i$ is a dijunction of exac $C_i = z_{i1} \lor z_{i2} \lor z_{i3}$ and $z_i \in \set{x_i, \overline{x_i}}$.
We construct a corresponding MRMP-RA instance $\I \coloneqq \I(Q) = (\W, \starts, \finals, \A)$ with $N = 3m+1$ robots and choose a real value $d \ge 0$ such that $\cost^*(\I) \le d$ if and only if $Q$ is satisfiable.
Let $d(\I) \coloneqq \sum_{i=1}^N d_i$, where $d_i$ is the length of the optimal path of $R_i$ from $s_i$ to $f_i$ in $\W$, ignoring other robots. 
In fact, our construction will choose $d$ to be $d(\I)$, that is, $d$ is the lowest possible cost of a feasible path ensemble from $\starts$ to $\finals$ in $\W$.
Our construction will ensure that the lowest cost is attained if and only if $Q$ is satisfiable.
$\I$ is constructed so that a path ensemble with such a cost is possible if and only if (a feasible) monotone motion plan exists.
% We first prove the latter argument and then use the hardness of finding a monotone motion plan to obtain the hardness result.
% We proceed to describe $\I$ by giving a brief overall description and then describing its different gadgets. 
An example of the construction is shown in Figure~\ref{fig:hardness}.

\begin{figure*}[ht]
\centering
\includegraphics[width=0.88\textwidth]{}
\caption{The MRMP-RA instance $\I$ that corresponds to the formula $Q = (\overline{a} \lor \overline{b} \lor c) \land (a \lor \overline{b} \lor  c) \land (a \lor b \lor \overline{c})$.
The start and target positions are the filled and unfilled discs, respectively. Positive literal robots are green, negative literal robots are red.
Obstacles appear in black.
Start and target positions of literal robots are labeled with unique indices in order to distinguish between appearances of the same literal. The path $\pth$ is shown in blue for the assignment $a=T, b=F, c=T$ for which the corresponding path ensemble has robots moving in the following order: $\overline{c}_1, b_1, \overline{a}_1, r_0, a_2, a_1, \overline{b}_2, c_2, \overline{b}_1, c_1$.
}
\label{fig:hardness}
\end{figure*} 

\smallskip
\noindent\textbf{\textit{Overall description.}}
%
%\subparagraph*{Overall description.}
The workspace $\W$ consists of $m+n$ rectangular gadgets, one for each variable and each clause, referred to as \emph{variable} and \emph{clause} gadgets, respectively.
All the gadgets have unit-width \emph{passages} that are wider around revolving areas.
For simplicity, the widened areas are shown as circular arcs, but they can easily be made polygonal.
Each gadget has an entrance on the left and an exit on the right.
The vertical positions of entrances and exits alternate so that a gadget's entrance is connected to the exit of the gadget on its left.

There are $N = 3m+1$ robots, each being a unit disc: one robot for each appearance of a literal in $Q$, which are collectively called \emph{literal robots}, and one special \emph{pivot} robot $R_{0}$ (shown in blue in Figure~\ref{fig:hardness}).
The robot $R_{0}$ has to pass through all the gadgets from left to right, by which it is able to verify the satisfiability of $Q$, and the literal robots will constrain its motion in order to ensure that $\cost^*(\I) \le d$.
%TODO Tzvika: the optimal cost depends on Q so \I'm not sure about stating the latter.

Each variable (resp.clause) gadget contains two (resp. three) horizontal passages, which offer two (resp. three) shortest paths from its entrance to its exit.
Each such path consists of vertical and horizontal line segments.
The horizontal passages of the gadgets contain all the start and target positions of literal robots.
%Specifically, literal robots' start and target positions are located in \emph{assignment gadgets} and \emph{clause gadgets}, respectively.
All the revolving areas are centered at their respective start or target positions, and they do not overlap.

\smallskip
\noindent\textbf{\textit{Gadgets.}}
%\subparagraph*{The gadgets.}
Each variable gadget initially contains robots representing literals of a single variable of $Q$.
The top and bottom horizontal passages of the gadget contain robots representing only positive and negative literals, respectively.
Each clause gadget has three horizontal passages, each containing a target position of one the literals in the corresponding clause.
The gadgets are placed within a horizontal strip from left to right such that variable gadgets are located to the left of clause gadgets.
The order of gadgets of the same type is arbitrary, however it determines the order of the start positions, which is critical: the left to right order of start positions within each variable gadget is set to match the left to right order of the corresponding target positions.
We refer to this order as the \emph{intra-literal order property}.
% We now turn to proving the correctness of the reduction.
%We say that a revolving area $A$ is \emph{occupied} if it contains the robot whose start or target position lies in $A$.
We say that a revolving area $A$ is \emph{congested} if it contains two robots at the same time. Intuitively, both optimal path ensembles and monotone path ensembles need to prevent revolving areas from becoming congested.
\short{The following lemma is proved in Appendix~\ref{subsec:missing-proofs}.} We first establish that finding an optimal weakly-monotone path ensemble is equivalent to finding a monotone one, then show the equivalence between a satisfying assignment and a monotone path ensemble.

\begin{restatable}{lemma}{monotone} 
\label{lem:monotone-connection}
$\I$ has a weakly monotone path ensemble with a cost of $d$ if and only if $\I$ has a monotone path ensemble.
\end{restatable}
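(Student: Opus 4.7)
The proof splits along the two implications. The $(\Rightarrow)$ direction is essentially a length-counting argument, while $(\Leftarrow)$ relies on the geometry of the construction.

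For $(\Rightarrow)$, let $\pathens$ be a weakly-monotone ensemble with $\cost(\pathens)=d=d(\I)=\sum_i d_i$. Since each $\pi_i$ has length at least $d_i$, every $\pi_i$ must have length exactly $d_i$; thus each robot travels along an obstacle-shortest path from $s_i$ to $f_i$ and makes no additional motion inside its revolving areas. Consequently, during the (disjoint) active interval of any other robot $R_j$, $R_i$ must be completely stationary. Executing the robots in the order of their weakly-monotone active intervals therefore produces a schedule in which, while $R_i$ traverses its entire path, all other robots are motionless---this is a monotone motion plan.

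For $(\Leftarrow)$, assume $\I$ admits a monotone plan $\pathens'$ with ordering $\sigma$. A monotone plan is weakly-monotone, so it suffices to produce a weakly-monotone plan of cost exactly $d$. The plan is to replace each $\pi'_i$ by an obstacle-shortest path of length $d_i$ from $s_i$ to $f_i$, executed in the order $\sigma$, and to verify that the resulting ensemble remains feasible. The features of the construction I will exploit are: each literal robot's shortest path is forced to lie inside the unit-width horizontal passage connecting its variable gadget to its clause gadget; the start/target positions of the remaining robots lie in disjoint revolving areas off this passage, so they cannot intersect $R_i$'s shortest path; and the pivot robot's shortest traversal uses, in each variable gadget, the passage that is vacated by the time $R_0$ reaches it---which is exactly what feasibility of $\sigma$ provides.

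I expect $(\Leftarrow)$ to be the main obstacle. While $(\Rightarrow)$ is essentially automatic from $d=d(\I)$, proving $(\Leftarrow)$ requires showing that \emph{any} feasible monotone plan can be converted into one of cost $d$, which in turn hinges on the \emph{intra-literal order property} together with the unit-width passage geometry. These jointly guarantee that whenever a robot begins its move under $\sigma$, its obstacle-shortest path is clear of all stationary robots. Verifying this will likely require a careful case analysis of the gadget interactions, especially of the order in which the literal robots attached to a single variable evacuate and how this meshes with $R_0$'s left-to-right progression through the variable gadgets.
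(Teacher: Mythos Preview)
Your $(\Rightarrow)$ direction is essentially the paper's argument: $\cost(\Pi)=d$ forces every $\pi_i$ to be an obstacle-shortest path with no surplus length, and one can then sequence the robots monotonically. One step is loose, though: from ``$\pi_i$ is a shortest path'' it does not follow that $R_i$ is stationary during $\tau_j$; nothing prevents $R_i$ from advancing along the initial portion of its shortest path inside $A_{s_i}$ while $R_j$ is active. You must reparameterize and then check that feasibility survives, which in turn needs the unit-width geometry you never invoke.

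Your $(\Leftarrow)$ has the right skeleton but the supporting geometric claims are incorrect. It is not true that ``each literal robot's shortest path is forced to lie inside the unit-width horizontal passage connecting its variable gadget to its clause gadget,'' nor that ``the start/target positions of the remaining robots lie \ldots\ off this passage.'' A literal robot traverses \emph{every} intermediate gadget, choosing a horizontal passage in each (so its obstacle-shortest path is far from unique), and the other robots' positions sit precisely on the passage centerlines, not off them. Consequently the case analysis you anticipate, and your appeal to the intra-literal order property, are aimed at the wrong target; that property is used only in the companion lemma linking monotone plans to satisfying assignments, not here.

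What you are missing is the paper's unifying observation: in either a cost-$d$ plan or a monotone plan, \emph{no revolving area is ever congested} (contains two robots at once). For $(\Rightarrow)$, congestion would push some robot off its shortest path, contradicting $\cost=d$. For $(\Leftarrow)$, if $R_i$ entered $A_z$ while $R_j$ sat at $z$, the unit-width passage would drive $R_i$'s center through $z$, a collision. Once you have non-congestion, $(\Leftarrow)$ is immediate with no case analysis: in each gadget the passage that the monotone plan uses for $R_i$ is unoccupied, and since all passages through a gadget have equal (shortest) length, routing $R_i$ along the straight path through those same passages yields a feasible weakly-monotone plan of cost exactly $d$.
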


\eat{\begin{proof}
Let $A$ be a revolving area in $\W$.
We first note that without any loss of generality, in any path ensemble of $\I$ a robot may either be contained in $A$ at some point or never intersect $A$ at all. That is, a robot will not partially penetrate a revolving area without ever fully entering it.
\ 
Let $\pathens$ be a feasible path ensemble with $\cost(\pathens) = d$.
We fix a robot $R_i$ and examine the motion that occurs during its active interval $\tau_i$.
We claim that any motion of a robot $R_j, j \neq i$ during $\tau_i$ is \emph{redundant}, i.e., if $R_j$ does not move during $\tau_i$ then $R_i$ can still perform the same motion.
This suffices in order to conclude that $\pathens$ can be made monotone.
Observe that during the execution of $\pathens$ no revolving area $A$ can become congested, as otherwise the two robots that are simultaneously in $A$ will have to take a path that is longer than the shortest path that ignores other robots.
Therefore, whenever $R_i$ is inside a revolving area $A$, it is the only robot in $A$, and any motion by other robots is redundant.
Whenever $R_i$ is not contained in any revolving area, all other robots must be contained in revolving areas, by definition.
Hence, any motion by other robots at such point in time is also redundant.
So overall, $R_i$ may travel along its whole path without other robots moving.
%Let $A \neq A_{s_i}, A_{f_i}$ be a revolving area through which $R_i$ passes along its path.
%We have the property that $A$ must not be occupied as $R_i$ traverses it, as otherwise $R_i$ cannot take the shortest path that ignores other robots.
%Therefore, $R_i$ can pass through $A$ without any other robot moving.
\ 
For the other direction, in a monotone path ensemble it also holds that no revolving area may become congested (as otherwise robots move simultaneously).
%(i.e., if a revolving area $A$ contains both $R_i$ and $R_j$, $i \neq j$, then $R_i$ and $R_j$ must move in coordination, which is not the case).
Therefore, any revolving area that some robot $R_i$ intersects during its motion must not contain other robots.
For any gadget $g$ that $R_i$ needs to traverse, this allows $R_i$ to take some shortest path through $g$. Therefore, $R_i$ is able to take the shortest path that ignores other robots overall.
Hence, a path ensemble with a cost of $d$ exists.
\end{proof}}
% Therefore, we now turn to the NP-hardness of finding a monotone path ensemble to obtain our result.
% Finding monotone path ensembles has been shown to be NP-hard in~\cite{geft2021complexity} using a similar grid-based construction without revolving areas.
% We use similar arguments for our construction.
\begin{restatable}{lemma}{assignment} \label{thm:monotone-hardness}
$Q$ has a satisfying assignment if and only if $\I$ has a monotone path ensemble.
\end{restatable}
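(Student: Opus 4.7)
The plan is to translate between a satisfying assignment $\alpha$ of $Q$ and a monotone path ensemble for $\I$, mediated by which passage the pivot $R_0$ chooses in each variable gadget. Two facts, which follow from the proof of Lemma~\ref{lem:monotone-connection} together with the construction, will be used throughout: (i) in a monotone ensemble of cost $d$ each robot traverses its shortest path from start to target, and no revolving area is ever congested; and (ii) $R_0$ visits every gadget exactly once from left to right, picking one of two passages inside each variable gadget and one of three passages inside each clause gadget.

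For the direction from a satisfying assignment to a motion plan, I classify each literal robot as \emph{true} or \emph{false} according to $\alpha$ and execute the motion in three phases: first move all false literal robots (each along its shortest path), then $R_0$, and then all true literal robots. Within each phase I order robots representing the same literal using the intra-literal order property, so that multiple copies never need to pass one another. After phase one, the false-polarity passage of every variable gadget is empty (its starting robots have left), while the target of every true literal in every clause gadget is still empty (those robots have not yet moved). Hence $R_0$ may enter each variable gadget through the false-side passage and each clause gadget through the passage of some true literal, which exists because $\alpha$ satisfies the clause. Phase three then fills the remaining clause-gadget targets. Every robot follows a shortest path, so the total cost is exactly $d$.

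For the converse, given a monotone ensemble $\pathens$ of cost $d$, define $\alpha(x)$ to be TRUE iff $R_0$ traverses the negative-literal (bottom) passage of $x$'s variable gadget. To see that $\alpha$ satisfies every clause $C$, let $\ell$ be the literal whose passage $R_0$ uses inside $C$'s gadget, and let $y$ be the underlying variable of $\ell$. The target of $\ell$ lies on this passage and must be unoccupied as $R_0$ crosses it, so $\ell$'s robot has not yet moved and still sits at its start in $y$'s variable gadget. Because variable gadgets lie to the left of clause gadgets, $R_0$ has already crossed $y$'s gadget, and to avoid the robot still present in $\ell$'s starting passage it must have taken the opposite passage there. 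Unpacking the definition of $\alpha$ shows that $\ell$ evaluates to TRUE, so $C$ is satisfied.

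The main obstacle lies in verifying feasibility of the forward construction: one must confirm that the shortest paths of robots belonging to distinct literals do not collide at the (pairwise disjoint) revolving areas they traverse en route. Same-literal conflicts are handled directly by the intra-literal order property; inter-literal conflicts are avoided because literal robots of different variables live in separate variable gadgets whose passages are vertically separated, so a previously-moved robot never lingers in the passage currently being used. A crucial consistency check is that inside each clause gadget at most two of the three literal targets are filled before $R_0$ arrives (since $\alpha$ satisfies the clause), leaving at least one passage through which $R_0$ can safely cross. Carrying out these verifications in geometric detail is where the bulk of the technical work will lie.
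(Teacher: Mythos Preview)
Your three-phase plan (false-literal robots, then $R_0$, then true-literal robots) and your reading of the assignment from $R_0$'s choice of passage are exactly the paper's approach; the backward direction is essentially identical as well.

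The one place where your proposal is genuinely underspecified is the ordering \emph{within} phases one and three. Ordering only same-literal robots via the intra-literal property does not suffice, and your claim that ``literal robots of different variables live in separate variable gadgets whose passages are vertically separated'' overlooks that a literal robot must pass \emph{through} every variable gadget to its right and every clause gadget to the left of its target, crossing revolving areas belonging to other variables' literals. The paper resolves this cleanly by first fixing a single shortest path $\pi_0$ for $R_0$ (through the false-side passage of each variable gadget and through a true-literal target in each clause gadget), then having every false-literal robot follow the sub-path of $\pi_0$ from its own start up to its clause gadget, processed in the \emph{global} right-to-left order of start positions; symmetrically the true-literal robots join $\pi_0$ and are processed in right-to-left order of their targets. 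With this single common path and global ordering, the feasibility check you flag as ``the bulk of the technical work'' collapses to a one-line observation.
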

% \eat{
\begin{proof}
Assume that $Q$ has a satisfying assignment $\ass$.
Let \poslit{} (resp. \neglit{}) denote the set of robots corresponding to literals that evaluate to true (resp. false) according to $\ass$.
That is, for each variable gadget, \poslit{} contains robots that are all initially either in the top or the bottom passage, according to $\ass$.
We show that the robots can move along optimal paths in the order \neglit{}, $R_{0}$, \poslit{}, which is made precise below.

Let $\pi_0$ be a shortest collision-free path from $s_{0}$ to $f_{0}$ that passes only through the start positions of \neglit{} and targets of \poslit{}; see Figure~\ref{fig:hardness}.
%$\pi_0$ is weakly $x$-monotone
The path $\pi_0$ exists because each clause gadget must contain a target of some robot in \poslit{}, or else $\ass$ does not satisfy $Q$.

In the path ensemble, each $R_i \in \neglit{}$ follows the subpath of $\pi_0$ from $s_i$ (through which $\pi_0$ passes) up to the gadget containing $f_i$, from which $R_i$ can reach its final position $f_i$ using the shortest path.
The order in which the robots in \neglit{} move is the right to left order of their start positions, which guarantees no collision with another robot located at its start position.
Since the robots in \neglit{} move before \poslit{}, the targets through which $\pi_0$ passes are unoccupied when the robots in \neglit{} move, guaranteeing no collisions at clause gadgets.
Next, $R_{0}$ moves using $\pi_0$, which passes through empty passages at this point.
Finally, each $R_i \in \poslit{}$ joins $\pi_0$ at the vertical passage to its right, from which point it continues similarly to \neglit{}.
The order of motion of the robots in \poslit{} is the right to left order of their targets, which guarantees no collisions in the clause gadgets.
Note that due to the intra-literal order property we also have no interferences among \poslit{} within variable gadgets.
% To conclude this direction, we have shown a monotone path ensemble for $\I$.

For the other direction, let us assume that there is a monotone path ensemble for $\I$.
Let $\pi_0$ denote the path taken by $R_{0}$.
Without loss of generality, $\pi_0$ is weakly $x$-monotone.
Specifically, it passes through only one horizontal passage in each variable gadget.
Therefore, we define an assignment $\ass$ as follows: $x$ is assigned to be true if and only if $\pi_0$ goes through the bottom passage of $x$'s variable gadget, which corresponds to negative literals.
Let $C$ be a clause of $Q$ and let $f_j$ be a target in $C$'s clause gadget that is unoccupied during $R_{0}$'s motion, which must exist.
It is easy to verify that the literal corresponding to $R_j$ is true according to $\ass$.
Therefore, $C$ is satisfied. 
% and we are done.
%\tzvika{\I'm thinking it might be more intuitive to start with this direction.}
% \begin{comment}
    % For an variable gadget for a variable $x$, $\pi_0$ passes through the row opposite to assignment given to $x$, i.e., $x$ is assigned true then $\pi_0$ passes through the gadget's negative row.
% \end{comment}
\end{proof}

The construction can be carried out in polynomial time, therefore by combining Lemma~\ref{lem:monotone-connection} and Lemma~\ref{thm:monotone-hardness} we obtain the following:
\begin{theorem}
MRMP-RA for weakly-monotone path ensembles is NP-hard.
\end{theorem}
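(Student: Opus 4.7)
The plan is to reduce from 3SAT, which is NP-complete, to the decision version of weakly-monotone MRMP-RA: ``Given $\I$ and $d$, does $\I$ admit a weakly-monotone path ensemble of cost at most $d$?'' Given a 3SAT formula $Q(x_1,\dots,x_n)=\bigwedge_{i=1}^m C_i$, I would take the instance $\I=\I(Q)=(\W,\starts,\finals,\A)$ built in the preceding construction and the threshold $d=d(\I)=\sum_{i=1}^N d_i$ (the sum of individual obstacle-only shortest-path lengths), and argue that $Q\in\mathrm{3SAT}$ iff $(\I,d)$ is a YES-instance.

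First, I would verify that the reduction is polynomial: the workspace $\W$ has $O(n+m)$ rectangular gadgets, each with $O(1)$ vertices and $O(1)$ horizontal passages; there are $N=3m+1$ robots and $|\A|=2N$ revolving areas, each described by a center and radius $2$; every coordinate is of polynomial bit length. Hence $|\I|=\mathrm{poly}(n+m)$ and $d(\I)$ can be computed in polynomial time by running a shortest-path algorithm per robot in the (polygonal) free space of the single robot.

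Next, I would string together the two lemmas already proved. Note that $d=d(\I)$ is the infimum of $\cost(\pathens)$ over \emph{all} feasible ensembles (each individual path must be at least its single-robot shortest path), so ``$\cost^*(\I)\le d$'' in the weakly-monotone world means equality $\cost(\pathens)=d$. By Lemma~\ref{lem:monotone-connection}, $\I$ admits a weakly-monotone ensemble of cost $d$ iff $\I$ admits a monotone ensemble. By Lemma~\ref{thm:monotone-hardness}, $\I$ admits a monotone ensemble iff $Q$ is satisfiable. Chaining the two equivalences yields: $Q$ is satisfiable iff $\I$ has a weakly-monotone ensemble of cost at most $d$. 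Since the reduction is polynomial and 3SAT is NP-complete, the weakly-monotone MRMP-RA decision problem is NP-hard, which is the statement of the theorem.

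The only subtle point is checking that ``cost at most $d$'' is genuinely equivalent to the hypothesis of Lemma~\ref{lem:monotone-connection}, i.e.\ that no feasible ensemble can beat $d(\I)$. This is not really an obstacle: each $\pi_i$ goes from $s_i$ to $f_i$ in $\F$, so $\ell(\pi_i)\ge d_i$, giving $\cost(\pathens)\ge d$ always, so the condition $\cost(\pathens)\le d$ forces $\cost(\pathens)=d$ and hence each $\pi_i$ is a single-robot shortest path, which is exactly the regime handled by the two lemmas. With this remark, the theorem follows immediately.
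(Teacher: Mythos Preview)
Your proposal is correct and follows essentially the same approach as the paper: the paper's proof is simply the one-line observation that the construction is polynomial-time and that combining Lemma~\ref{lem:monotone-connection} with Lemma~\ref{thm:monotone-hardness} yields the theorem. Your write-up spells out the chaining more carefully (in particular the remark that $\cost(\pathens)\ge d(\I)$ forces equality, so ``$\le d$'' matches the hypothesis of Lemma~\ref{lem:monotone-connection}), but the argument is the same.
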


\subparagraph*{Hardness of Approximation}
%\label{subsec:apx}
\newcommand{\satcost}{\mathop{\mathrm{SAT}}}
We now show that MRMP-RA is APX-hard, ruling out any polynomial time $(1+\eps)$-approximation algorithm.
% unless P=NP.
% For brevity, we sketch the main idea of the construction here and defer the full proof to Appendix~\ref{app:apx}.
We first go over some definitions.
For an MRMP-RA instance $\I$, we use $\cost^*(\I)$ to denote the cost of the optimal weakly-monotone path ensemble for $\I$.
For a 3SAT formula $Q$, let $\satcost(Q)$ denote the largest fraction of clauses in $Q$ that can be simultaneously satisfied.
We say that a revolving area $A$ is \emph{occupied} if it contains the robot whose start or target position lies in $A$.

To prove the hardness of approximation we present a gap-preserving reduction from MAX-3SAT(5), which is APX-hard~\cite{vazirani2001approximation}. The input to MAX-3SAT(5) is a 3SAT formula with 5 appearances for each variable and the goal is to find an assignment maximizing the number of satisfied clauses.
Let $Q$ be a MAX-3SAT(5) instance with $n$ variables and $m$ clauses and let $\I \coloneqq \I(Q)$ be the MRMP-RA instance resulting from the NP-Hardness reduction described above, which we slightly modify as follows.
Instead of the single pivot robot $R_0$ in $\I$, we now have $m$ pivot robots.
%Let us denote them by $R_0, \ldots, R_{m-1}$.
To this end, we modify the construction so that there is a horizontal passage that extends to the left of $s_0$ in $I$.
The passage is lengthened to accommodate $m$ start positions that lie on the same horizontal line, passing through $s_0$ in $I$.
Similarly, another such passage is created to the right of $f_0$ to accommodate $m$ target positions.
The left to right order of the start positions of the pivot robots is set to match the left to right order of the corresponding target positions.
%Tzvika: We don't really need a passage, can just have a "room"
Let $\I'$ denote the resulting MRMP-RA instance.

\begin{restatable}{lemma}{lemapx} 
\label{lem:apx-hardness1}
Let $Q$ be a 3SAT formula such that for any assignment to $Q$ there are at least $k$ unsatisfied clauses in $Q$.
Then $\cost^*(\I') > d(\I') + km$.
\end{restatable}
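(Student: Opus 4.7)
The plan is to lower-bound the cost of any weakly-monotone path ensemble for $\I'$ by showing that each of the $m$ pivot robots independently incurs an excess of at least $k$ over its obstacle-only shortest path. Fix an optimal weakly-monotone path ensemble $\Pi^*$ for $\I'$, denote the pivot robots by $R_0^{(1)},\ldots,R_0^{(m)}$, and let $\pi^{(i)}$ denote the path of $R_0^{(i)}$. By the same monotonization argument used in the converse direction of Lemma~\ref{thm:monotone-hardness}, each $\pi^{(i)}$ can be taken to be weakly $x$-monotone and to pass through exactly one of the two horizontal passages in each variable gadget, thus defining an induced truth assignment $\Lambda^{(i)}$. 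By hypothesis, each $\Lambda^{(i)}$ leaves a set $U_i$ of at least $k$ clauses unsatisfied.

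The key geometric claim is that for each pivot $R_0^{(i)}$ and each $C \in U_i$, the path ensemble incurs at least one unit of excess length uniquely attributable to the pair $(i,C)$. Since $C$ is unsatisfied by $\Lambda^{(i)}$, all three literals of $C$ are false under $\Lambda^{(i)}$, so all three of the corresponding literal robots have their start positions in passages that $R_0^{(i)}$ itself traverses (in the respective variable gadgets) and their target positions inside $C$'s clause gadget. Because active intervals are pairwise disjoint in a weakly-monotone plan, during $R_0^{(i)}$'s active interval each such literal robot $R_j$ is confined either to $A_{s_j}$ (sitting in a variable gadget on $R_0^{(i)}$'s path) or to $A_{f_j}$ (sitting in $C$'s gadget). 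A case analysis paralleling the proof of Lemma~\ref{thm:monotone-hardness} shows that however the three literal robots are distributed between these two alternatives, at least one of them must lie on the shortest-path passage that $R_0^{(i)}$ would take through the relevant gadget, and no permissible sidestep within its radius-$2$ revolving area avoids forcing a detour whose length exceeds the corresponding shortest-path segment by at least one unit. Since the excess lengths charged to distinct pairs $(i,C)$ accumulate in distinct pivot paths or in segments lying in distinct gadgets along the same pivot's path, there is no double-counting.

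Summing over pivots yields $\sum_{i=1}^{m} \bigl(\ell(\pi^{(i)}) - d^{(i)}\bigr) \geq \sum_{i=1}^{m} |U_i| \geq mk$, where $d^{(i)}$ is the obstacle-only shortest-path length for pivot $R_0^{(i)}$, with strict inequality arising from the geometric lower bound on the detour length (going around a unit disc inside a unit-width passage is strictly longer than going through it by more than one unit). Since every non-pivot robot still has path length at least its individual shortest path, combining these bounds gives $\cost^*(\I') > d(\I') + km$. The main obstacle lies in the geometric claim: the radius-$2$ revolving areas grant each literal robot some sidestepping room, so one must verify carefully that the three false-literal robots of an unsatisfied clause really do leave no clean route for $R_0^{(i)}$---neither by relocating inside $A_{f_j}$ in $C$'s gadget nor by sidestepping inside $A_{s_j}$ in the variable gadgets---and that the resulting unavoidable detour is bounded below by a constant strictly exceeding $1$ in every configuration.
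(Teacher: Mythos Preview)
Your proposal is correct and follows essentially the same line as the paper's proof: each pivot path induces an assignment, and every unsatisfied clause forces that pivot through an occupied revolving area at an excess cost exceeding one, so summing over the $m$ pivots gives the bound. The paper phrases the per-pivot step contrapositively (if a pivot had $q<k$ bad events one could extract an assignment with at most $q$ unsatisfied clauses) and adds a WLOG normalization moving variable-gadget blockers first so that all bad events lie in clause gadgets, whereas you argue directly and let the blocking literal robot sit in either its variable or its clause gadget; one small tightening is that, since $\Pi^*$ is optimal, each $\pi^{(i)}$ \emph{is} weakly $x$-monotone rather than merely ``can be taken to be'', and the geometric detour bound you flag as the main obstacle is likewise asserted rather than worked out in the paper.
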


% \eat{
\begin{proof}
Let us examine $\pathens^*(\I')$, an optimal path ensemble for $\I'$.
We say that a robot $R_i$ has a \textit{bad event} during the execution of $\pathens^*(\I')$ when $R_i$ traverses an occupied revolving area. % and $A$ is located in a clause gadget.
Note that each bad event results in $R_i$ having a path longer than 1+$d_i$, $d_i$ being the length of $R_i$'s shortest possible path.
%(\tzvika{Right now I'm just using the very basic calculation of $r$ having to take a detour of length $\pathens$ instead of 2. A more refined incurred cost can be included later if we want to be tighter.})
We claim that each of the $m$ pivot robots has $k$ bad events, which suffices for proving the lemma.

Let us assume for a contradiction that one of the pivot robots, say $R_i$, has $q < k$ bad events.
We will show how to obtain an assignment for $Q$ where there are at most $q$ unsatisfied clauses.
Since $\pathens^*(\I')$ is optimal, $\pi_i$, the path taken by $R_i$, is weakly $x$-monotone.
We define an assignment $\ass$ as follows (the same way as in the second direction of the proof of Theorem~\ref{thm:monotone-hardness}): $x$ is assigned to be true if and only if $\pi_i$ goes through the bottom passage of $x$'s variable gadget.
In other words, $\ass$ sets a literal to be true if and only if the corresponding literal-robot's starting position does not lie on $\pi_i$.
Let us examine $\pi_i$ right before it is $R_i$'s turn to move.
%Tzvika: would be nicer to say here examine the configuration
Let $\R$ denote the set of robots that are intersected by $\pi_i$ and are located at variable gadgets at this point in time.
We can assume without any loss of generality that $\R$ is empty.
If it is not, then let us examine the path ensemble $\pathens$ where the robots in $\R$ move to their targets before $R_i$'s turn.
The number of bad events for $R_i$ can only decrease in $\pathens$.
This holds because by having some $R_j \in \R$ move before $R_i$ we eliminate a bad event (for $R_i$) in $R_j$'s variable gadget and possibly introduce a bad event in $R_j$'s clause gadget.%Tzvika: try state cleaner

Since there are $q$ bad events for $R_i$, there are at most $q$ clause gadgets where such an event occurs.
Therefore, to get a contradiction it suffices to show that all other clause gadgets correspond to clauses that are satisfied by $\ass$.
Let $C$ be such a clause, i.e., in the corresponding clause gadget $\pi_i$ passes through some empty revolving area $A_{f_j}$.
Since $\pi_i$ does not pass through any occupied revolving areas in the variable gadgets, the corresponding start position $s_j$ must not lie on $\pi_i$.
Therefore, $r_j$ corresponds to a literal that is true by $\ass$, and so $C$ is satisfied.
\end{proof}

We now make $d(\I')$ explicit using an upper bound for an arbitrary $d_i$.
First, we bound the length of each vertical segment in the corresponding path $\pi_i$ by 10, which provides sufficient distance for our gadgets.
Since each variable appears in $Q$ five times, we bound the horizontal length of an variable gadget by $4\cdot2+3=11$ (i.e., there at most 4 revolving areas on a horizontal passage and some additional length).
%TODO tzvika, check math
Therefore, the path length through any gadget is $O(1)$.
Hence, we have $d_i = O(m)$ and the number of robots is also $O(m)$ (we have $m = 5n/3$).
Therefore, we can set $d(\I')=cm^2$ for some sufficiently large constant $c$ (we can easily lengthen paths in $\I'$ if that is needed for the bound).
\ 
We can now combine the latter equality with Lemma~\ref{lem:apx-hardness1} and the NP-Hardness reduction.
Let us define $f(Q) \coloneqq d(\I')$.
%Be clear how to combine the results. Maybe a small proof will do.
\ 
\begin{theorem}
There is a polynomial time reduction that transforms an instance $Q$ of MAX-3SAT(5) with $m$ clauses to an MRMP-RA instance $\I'$ such that $\satcost(Q) = 1 \Rightarrow \cost^*(\I')=f(Q) \leq cm^2$ for some constant $c >0$ and otherwise 
$\satcost(Q) < \alpha \Rightarrow \cost^*(\I') > f(Q) + (1-\alpha)m \cdot m = 
\left( 1 + \frac{1-\alpha}{c}\right ) f(Q),\  \text{for all } 0 < \alpha < 1$.
\end{theorem}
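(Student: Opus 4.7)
The plan is to verify the two implications in the theorem statement by assembling the pieces already built up: the modified NP-hardness construction with $m$ pivot robots, Lemma~\ref{lem:apx-hardness1}, and the explicit upper bound $d(\I')=cm^2$ for some constant $c$.

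For the first implication, I would extend the ``only if'' direction of Lemma~\ref{thm:monotone-hardness} to the multi-pivot setting. Suppose $\satcost(Q)=1$ and fix a satisfying assignment $\ass$. Following the proof of Lemma~\ref{thm:monotone-hardness}, I pick a shortest collision-free curve $\pi_0$ from left to right that in each variable gadget uses the horizontal passage corresponding to literals made false by $\ass$, and in each clause gadget uses a passage whose target belongs to a literal made true by $\ass$ (such a target exists because $\ass$ satisfies every clause). I then move robots in the order: \neglit{} (right-to-left by start position) along $\pi_0$ to their clause-gadget targets; then each of the $m$ pivots along $\pi_0$ one after another (ordered so that each pivot's target lies to the right of the next pivot's start, which is possible because the left-to-right order of the pivots' starts matches that of their targets by construction); then \poslit{} joining $\pi_0$ from the right and proceeding to their targets right-to-left. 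Just as in Lemma~\ref{thm:monotone-hardness}, the intra-literal order property and the alternation of positive/negative passages prevent intra-gadget collisions, so every robot travels a shortest path and the total cost equals $d(\I')=f(Q)\le cm^2$.

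For the second implication, assume $\satcost(Q)<\alpha$, so every assignment leaves strictly more than $(1-\alpha)m$ clauses unsatisfied; set $k=\lceil(1-\alpha)m\rceil$. Lemma~\ref{lem:apx-hardness1} immediately gives
\[
\cost^*(\I') \;>\; d(\I') + km \;\ge\; f(Q) + (1-\alpha)m\cdot m.
\]
Substituting $f(Q)=cm^2$ yields $(1-\alpha)m^2 = \tfrac{1-\alpha}{c} f(Q)$, hence $\cost^*(\I') > \bigl(1+\tfrac{1-\alpha}{c}\bigr) f(Q)$, as claimed. The construction runs in polynomial time since $|\I'|$ is polynomial in $n+m$.

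Combined with the APX-hardness of MAX-3SAT(5)~\cite{vazirani2001approximation}, this gap-preserving reduction rules out a polynomial-time $(1+\eps)$-approximation for MRMP-RA under the weakly-monotone assumption. The main subtlety I anticipate is making sure the YES-case cost is exactly $d(\I')$ and not inflated by the presence of multiple pivots: this requires checking that the pivots, taken in order, never force one another through an occupied revolving area. This is guaranteed because the pivots' starts and targets are placed on the same horizontal line (in the extended passages added to the left of $s_0$ and right of $f_0$), each one lies in its own revolving area disjoint from the others' starts/targets, and the matching left-to-right order on starts and targets means each pivot can traverse $\pi_0$ without displacing a later pivot from its start or entering an occupied final revolving area.
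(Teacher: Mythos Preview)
Your proposal is correct and follows essentially the same approach as the paper, which simply states that the theorem follows by ``combining the latter equality [$d(\I')=cm^2$] with Lemma~\ref{lem:apx-hardness1} and the NP-Hardness reduction.'' You have spelled out what the paper leaves implicit, particularly the YES-case argument that the $m$ pivots can be routed one after another along $\pi_0$ without mutual interference (thanks to the matching left-to-right order of their starts and targets), which the paper does not write out explicitly.
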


\section{Algorithm}
\label{sec:alg}
Let $\I = (\W, \starts, \finals, \A)$ be an instance of MRMP-RA. Let $n$ be the number of robots and $m$ be the complexity of the environment $W$.
We describe an $(n(n+m)\log m)$ algorithm for computing a weakly-monotone path ensemble $\tilde{\pathens} \coloneqq \tilde{\pathens}(\I)$ for $R_1, \dots, R_n$ such that $\cost(\tilde{\pathens}) = O(1) \cdot \cost^*(\I)$. 
We remark that $\tilde{\pathens}$ is weakly-monotone but $\pathens^*(\I)$ need not be, i.e. $\tilde{\pathens}$ is an $O(1)$-approximation of any feasible motion plan.
We parameterize the paths in $\tilde{\pathens}$ over the common interval $J = [0, n]$.
We need a few definitions and concepts related to revolving areas. 
For any $z \in \starts \cup \finals$, let $c_z$ denote the center of the revolving area $A_z$, and let $C_z$ (resp. $B_z$) be the disc of radius $1$ (resp. 3) centered at $c_z$, i.e., $C_z \subset A_z \subset B_z$. 
If $x \not \in B_z$ then $D_x \cap A_z = \emptyset$. 
We refer to $C_z$ and $B_z$ as the \textit{core} and \textit{buffer}, respectively, of revolving area $A_z$. 
See Figure~\ref{fig:simple-revolving-area}.

% \danny{The knowledgeable reviewer may find the description of the algorithm problematic as it does not distinguish between what was done before, even if it is fairly straightforward, and what is new here. It should be clearly delineated, and we should emphasize why a more subtle retraction is needed. We should say that we give the full description of the algorithm for completeness of exposition.}
\subparagraph{Overview of the Algorithm.}
The algorithm consists of three stages. We note that Stage (I) and (II) are used in~\cite{SolomonHalperin2018}.  
However, Stage (III) differs significantly from previous work in order to ensure the total cost of paths is within an $O(1)$ factor of that of the optimal motion plan. 
We describe all stages for completeness. 

\begin{figure}[t]
    \centering
    \includegraphics[width=0.7\textwidth]{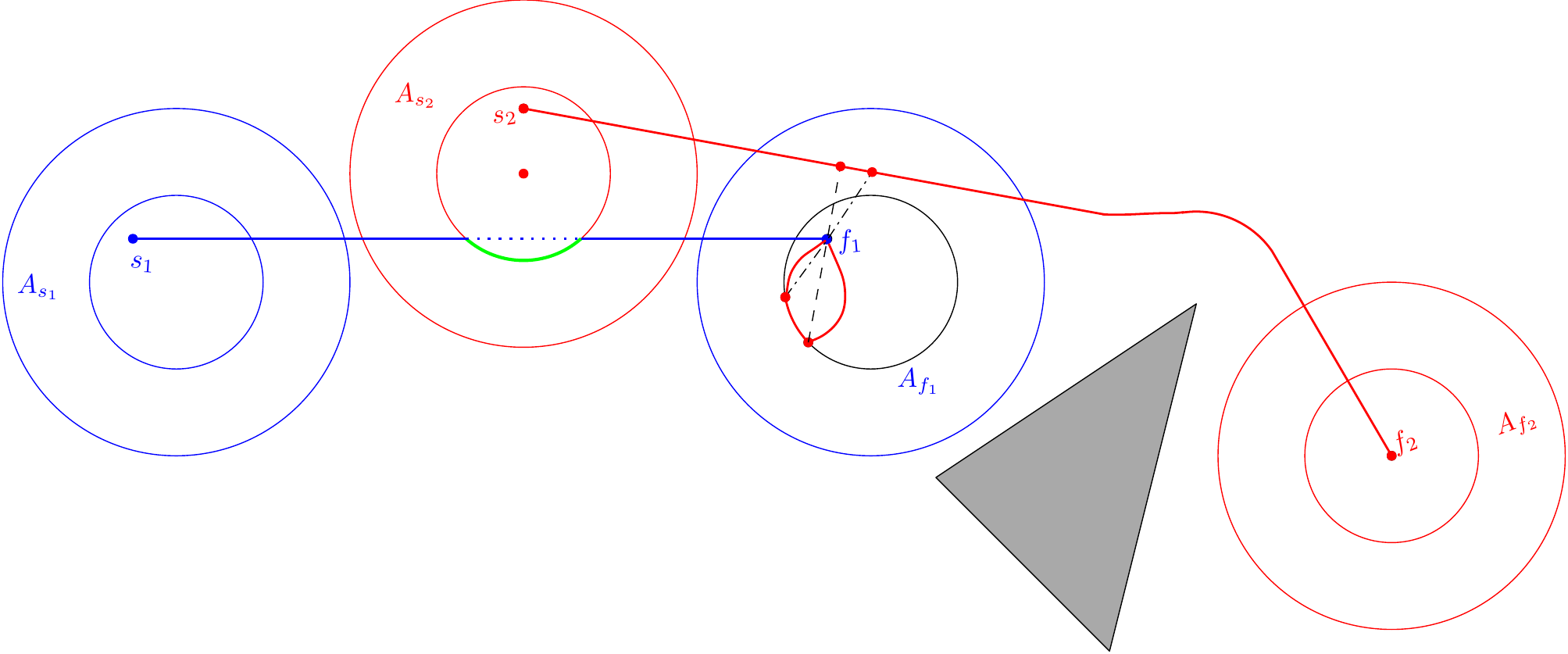}
    \caption{Path $\gamma_1$ is shown in blue from $s_1$ to $f_1$. Assume $R_1$ is active before $R_2$. In $\bar{\gamma}_1$, the dotted portion of the path is replaced with the green arc along $\partial C_{s_2}$.
   Path $\pi_2$ is shown in red from $s_2$ to $f_2$. $R_1$ must follow the red retraction during the movement of $R_2$ in $B_{f_2}$.}
    \label{fig:overall-paths}
\end{figure}

\begin{description}
    \item[I.] We compute the free space $\F$ (with respect to one robot) using the algorithm of Ó'Dúnlaing and Yap~\cite{DBLP:journals/dcg/Yap87,o1985retraction}. 
    If $s_i$ and $f_i$, for some $i \in [n] \coloneqq \{ 1, 2, \dots, n \}$, do not lie in the same connected component of $\F$, then a feasible path does not exist for $R_i$ from $s_i$ to $f_i$.
    Therefore, we stop and return that no feasible motion plan exists from $\starts$ to $\finals$. 
    Next, for each $i$, we compute a shortest path $\gamma_i$ from $s_i$ to $f_i$, ignoring other robots using the algorithm of Chen and Wang~\cite{chen2015computing}.
    % Assume $\gamma_i$ exists for all $i = 1, \dots, n$, and 
    Let $\shrtpath = \{ \gamma_1, \dots, \gamma_n \}$ be the path ensemble computed by the algorithm. 
    
    Although $\shrtpath$ does not intersect $\O$, it may not be feasible since two robots may collide during the motion. 
    The next two steps deform $\shrtpath$ to convert it into a feasible motion plan. 
    We take an arbitrary permutation $\sigma$ of $[n]$. 
    Without loss of generality assume $\sigma = \seq{1, 2, \dots, n}$.
    We say that $R_i$ is \emph{active} during the subinterval $[i-1, i]$ of $J \coloneqq [0, n]$, during which it moves from $s_i$ to $f_i$. During $[0, i-1]$ (resp. $[i, n]$) $R_i$ only moves within the revolving area $A_{s_i}$ (resp. $A_{f_i}$).
    \item[II.] For each $i$, we first modify $\gamma_i$, as described below in Section~\ref{subsec:path-deformation}, so that it does not intersect the interior of the core $C_j$ of any revolving area $A_j$ that is occupied by a robot $R_j$, for $j \neq i$; see Figure~\ref{fig:overall-paths}.
    Let $\overline{\gamma}_i$ be the deformed path. 
    Abusing the notation a little, let $\overline{\gamma}_i: [i-1, i] \rightarrow \F$ denote a uniform parameterization of the path $\overline{\gamma}_i$, i.e. $R_i$ moves with a fixed speed during $[i-1, i]$ from $s_i$ to $f_i$ along $\overline{\gamma}_i$. 
    We extend $\bar{\gamma}_i$ to the interval $[0, n]$ by setting $\bar{\gamma}_i(t) = s_i$ for $t \in [0, i-1]$ and $\bar{\gamma}_i(t) = f_i$ for $t \in [i, n]$. 
    Set $\bar{\Gamma} = \{ \bar{\gamma}_1, \dots, \bar{\gamma}_n\}$. 
    \item[III.] Next, for each distinct pair $i, j \in [n]$, we construct a \emph{retraction map} $\rho_{ij}: \F \rightarrow \F$ that specifies the position of $R_j$ for a given position of $R_i$ during the interval $[i-1, i]$ when $R_i$ is active so that $R_i$ and $R_j$ do not collide as $R_i$ moves along $\overline{\gamma}_i$.
    The retraction map ensures that $R_j$ stays within the revolving area $A_{s_j}$ (resp. $A_{f_j}$) for $j < i$ (resp. $j > i$), and it does not collide with any $R_k$ for $k \neq i, j$, as well. See Figure~\ref{fig:overall-paths}.
    Using this retraction map, we construct the path $\pi_j: J \rightarrow \F$ as follows: 
    $\pi_j(t) = \begin{cases} \rho_{ij}(\overline{\gamma}_i(t)) \ &\text{for } t \in [i-1, i] \text{ and } i \neq j, \\ 
    \overline{\gamma}_j(t)  &\text{for } t \in [j-1, j]. 
    \end{cases}$
    
    We prove below that each $\pi_j$ is a continuous path. In Section~\ref{sec:analysis}, we prove that $\pathens = \{ \pi_1, \dots, \pi_n\}$ is a feasible path ensemble with $\cost(\pathens) = O(1)\cdot \optc$.
\end{description}

\subsection{Modifying  path \texorpdfstring{$\bm{\gamma}_i$}{TEXT}}
\label{subsec:path-deformation}
Fix an $i \in [n]$. 
For $j < i$, let $z_j = f_j$ and for $j > i$, let $z_j = s_j$. 
Set $Z = \{z_j : 1 \leq j \neq i \leq n \}$.
This step modifies $\gamma_i$ to ensure that the path of $R_i$ does not enter the core $C_{z}$ of any $z \in Z$.

% When $B_i$ is active, $B_j$ is at $z_j$ for $j \neq i$. 
Fix a $z \in Z$. If $\gamma_i \cap C_{z} = \emptyset$, then $R_j$ does not affect $\gamma_i$. 
If $\gamma_i \cap C_z \neq \emptyset$, then we modify $\gamma_i$ as follows: let $p_z$, $q_z$ be the first and last intersection points of $\gamma_i$ and $C_{z}$ along $\gamma_i$, respectively. 
Let $Q_z$ be the shorter arc of $\partial C_{z}$, the boundary of the core $C_{z}$, between $p_z$ and $q_z$. 
We replace $\gamma_i[p_z, q_z]$ with $Q_z$. 
We repeat this step for all $z \in Z$. 
Let $\overline{\gamma}_i$ be the resulting path from $s_i$ to $f_i$; $\overline{\gamma}_i$ does not intersect $\interior(C_z)$ for any $z \in Z$. 
Note that $C_{z_j}$'s are pairwise disjoint, and that $\gamma_i$ is a shortest path from $s_i$ to $f_i$ in $\F$, therefore $\gamma_i[p_z, q_z]$ and $\gamma_i[p_{z'}, q_{z'}]$, for any pair $z, z' \in Z$, are disjoint. 
We can thus process $Z$ in an arbitrary order and the resulting path does not depend on the ordering. 
Furthermore $C_z \subseteq \F$ (since $A_z \subseteq \W$), so $\bar{\gamma}_i \subset \F$ for all $i$. 

\subsection{Retracting a robot \texorpdfstring{$\boldmath{R_j}$}{}}

\begin{figure}
    \captionsetup[subfigure]{justification=centering}
     \centering
     \begin{subfigure}[b]{0.4\textwidth}
         \centering
         \includegraphics[width=\textwidth]{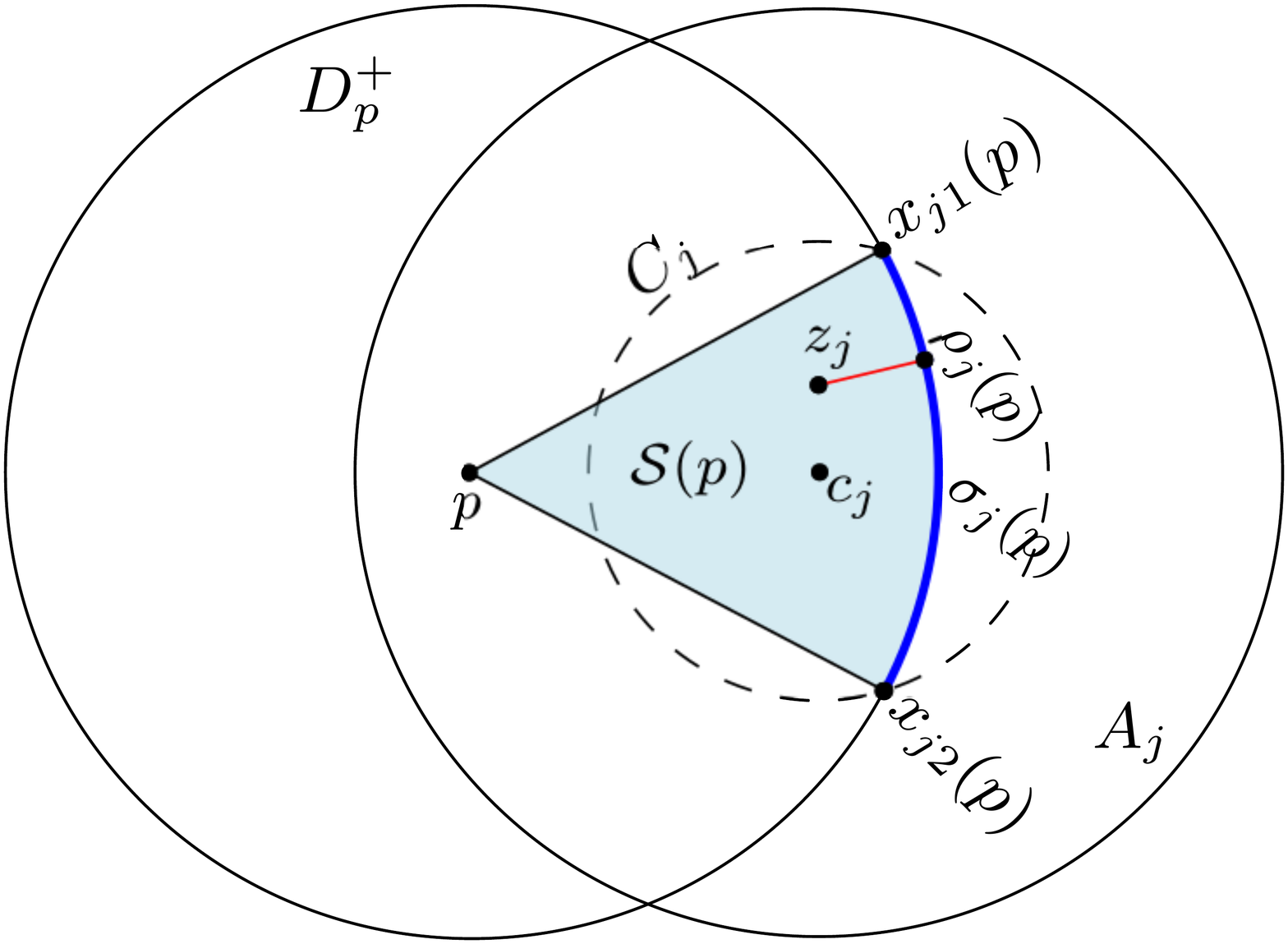}
         \caption{}
     \end{subfigure}
     \hfill
     \begin{subfigure}[b]{0.4\textwidth}
         \centering
         \includegraphics[width=\textwidth]{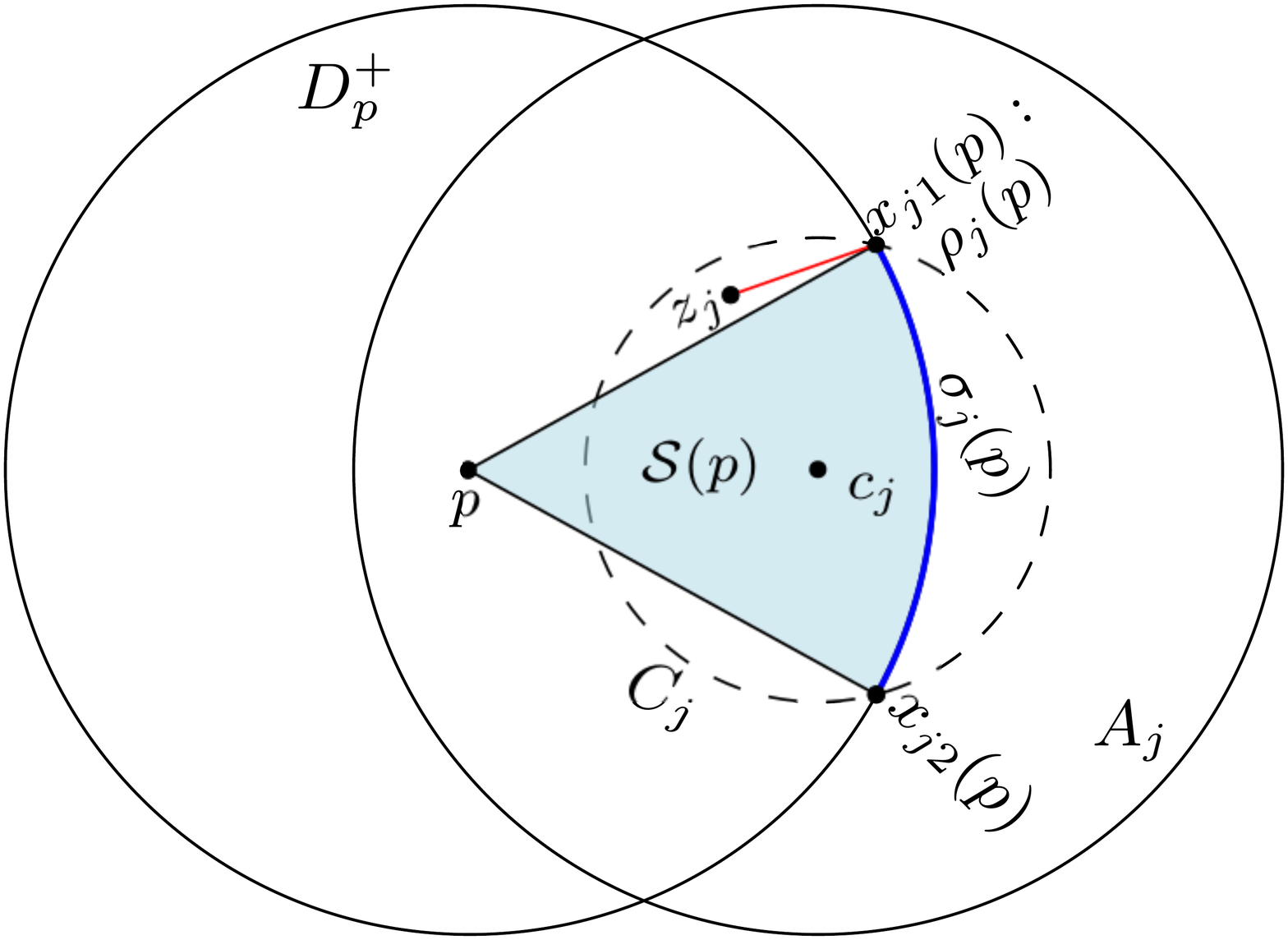}
         \caption{}
     \end{subfigure}
\caption{Retraction Map. (a) A sector type retraction (when $z_j$ lies in $S(p)$); (b) an intersection type retraction, $z_j$ lies outside $S(p)$, the retraction point is $x_{j1}(p)$.}
\label{fig:retraction}
\end{figure}

We now describe the retraction motion of $R_j$ when $R_i$ is active, so that they do not collide. 
Note that for all $t \in [i-1, i]$, $\bar{\gamma}_j(t) = z_j$, i.e., before applying the retraction $R_j$ is at $z_j$ when $R_i$ is active. 
% When $R_i$ is active, $R_j$ is at the location $z_j$. 
We define the \emph{retraction function} $\rho_{ij}: \reals^2 \rightarrow \reals^2$ that specifies the motion of $R_j$ within $A_{j}$ during $[i-1, i]$. 
Since $i$ is fixed, for simplicity we use $\rho_j$ to denote $\rho_{ij}$, and we use $C_j$ (resp. $A_j, B_j$) for disc $C_{z_j}$ (resp. $A_{z_j}, B_{z_j}$).
If the center of $R_i$ is at distance at least $2$ from $z_j$, then $R_i$ does not intersect $D_{j}$, so there is no need to move $R_j$ from $z_j$. 
Therefore we set $\rho_{j}(p) = z_j$ for all $p \in \pi_i$ such that $\norm{p - z_j} \geq 2$. 
On the other hand, $\overline{\gamma}_i$ does not intersect the interior of $C_{j}$ so $\rho_j(p)$ is undefined for $p \in \interior(C_{j})$. 
We thus focus on the case when $\norm{p - z_j} \leq 2$, in which case $p$ lies in the buffer disc $B_{j}$, and $p \not \in \interior(C_{j})$, i.e., $p \in B_{j} \setminus \interior(C_{j})$. 

Let $D^+(p)$ be the disc of radius $2$ centered at $p$. 
Note that for a point $q \in \reals^2$, $\interior(D_p) \cap\  \interior(D_q) \neq \emptyset$ if and only if $q \in D^+(p)$. 
Intuitively, we move the center of $R_j$ from $z_j$ (within $C_{j}$) as little as possible so that $R_j$ does not collide with $R_i(p)$.
Formally, we define $\rho_j$ as: $\rho_j(p) = \argmin_{q \in C_{j} \setminus D^+_p} \norm{q - z_j}$ if $p \not \in \interior(C_{j})$, and undefined otherwise.  

In the remainder of the discussion, we assume $\norm{z_j - p} \leq 2$ and $p \not \in C_j$, so $p \in B_j \setminus C_j$. 
Therefore, $\rho_j(p)$ exists and additionally $\rho_j(p)$ is unique. 
We now discuss the two possible types of retraction. Refer to Figure~\ref{fig:retraction} throughout this paragraph.
Note that $\partial C_j$ and $\partial D^+_p$ intersect at exactly two points since $p \in B_j \setminus C_j$, say, $x_{j1}(p), x_{j2}(p) $.
% Additionally, $C_j \not \subset D^+_p$ since $p \not \in C_j$. 
% Let $\partial C_j \cap \partial D^+_p = \{ x_{j1}(p), x_{j2}(p) \}$.
Let $\sigma_j(p)$ be the smaller of the two arcs of $\partial D^+_p$ induced by $x_{j1}(p)$ and $x_{j2}(p)$, and let $S(p) = \text{conv}(\sigma_j(p) \cup \{ p\}) \subseteq D^+_p$ be the \emph{sector} of $D^+(p)$ induced by $x_{j1}(p)$, $x_{j2}(p)$.
Observe that the retraction point $\rho_j(p)$ lies on $\sigma_j(p)$. 
If $z_j \in S(p)$, then $\rho_j(p)$ is the intersection point of the ray $\overrightarrow{pz_j}$ with $\partial D^+_p$, as the closest point in $C_{j} \setminus D^+_p$ from $z_j$ is on the straight line from $z_j$ to $D^+_p$.
Since $z_j$ lies inside $S(p)$, $\rho_j(p) \in \partial S(p)$.
If $z_j \not \in S(p)$, the retraction point is
$\argmin_{q \in \{x_{j1}(p), x_{j2}(p) \}} \norm{ q - z_j }$, i.e., the closest point to $z_j$ in $C_{j} \setminus D^+_p$ is an endpoint of $\sigma_j(p)$.
Note that our retraction ensures that $R_j$ will be centered back at $z_j$ after robot $R_i$ moves away.
% is intersection point between $\partial D^+_p$ and $\partial C_j$, since there is no point on the line $pz_j$  that lies in the feasible retraction region $C_j \setminus D^+_p$.

% \begin{figure}[h]
%     \centering
%     \includegraphics[width=0.5\textwidth]{figures/retraction.eps}
%     \caption{$z_2$ lies in sector $\S(p)$. The disc $D+^+$ intersects $C_z$ at two points: $x_{j1}(p)$ and $x_{j2}(p)$. $z_1$ lies outside sector $\S(p)$. The retraction point $\rho_j(p) = x_{j1}(p)$. }
%     \label{fig:retraction}
% \end{figure}

In the remainder of the paper, if $\rho_j(p) \in \{ x_{j1}(p), x_{j2}(p) \}$ we say the that the retraction is of \emph{intersection} type, otherwise we say that the retraction is of \emph{sector} type.
Since $\rho_j(p) \in C_j$ for all $p \not \in C_j$ and none of the $\bar{\gamma}_i$'s enter $C_j$, the retraction path $\pi_j$ of $R_j$ lies in $\F$. 
%%removed... 
% \textbf{Remark.} Figure~\ref{fig:retraction-compare2} shows an example of the more conservative retraction defined in this section in comparison to the retraction of~\cite{SolomonHalperin2018}. 
% Consider the a portion of the shortest path $\gamma_{i}$ from $s_{i}$ close to $B_j$ (shown in red). 
% Due to the point obstacles, $\gamma_{i}$ goes in and out of the revolving area $A_{j}$ multiple times, resulting in multiple retraction motions by $R_i$. 
% In~\cite{SolomonHalperin2018}, the retraction path followed by $R_j$ goes to the boundary of $C_{j}$ and back, while our algorithm has no retraction in this setting.
% \begin{figure}
%     \centering
%     \includegraphics[width=\textwidth]{figures/retraction-differences.eps}
%     \caption{(Adapted from~\cite{mastersthesisIsraelaSolomon}). The core of each revolving area is shown in green. The grey rectangles are obstacles, as are the points on the boundary of the revolving area (they can be replaced by tiny triangles). The path from $s_{n-1}$ to $f_{n-1}$ is shown in red.}
%     \label{fig:retraction-compare1}
% \end{figure}
%removed...
% \begin{figure}
%     \centering
%     \includegraphics[width=.7\textwidth]{figures/retract_compare.eps}
%     \caption{(Adapted from~\cite{mastersthesisIsraelaSolomon}). The solid blue line shows the retraction map of~\cite{SolomonHalperin2018} as $R_{i}$ follows its path between the pair of blue dots. Since the path of $R_{i}$ is never within distance $2$ of $s_j$, there is no retraction in our algorithm.}
%     \label{fig:retraction-compare2}
% \end{figure}
We conclude this section with the following lemma, which follows from the fact that $\rho_j$ is a continuous function, $\rho_j(p) = z_j$ for all $p$ such that $\norm{p - z_j} \geq 2$, and $\norm{z_j - s_i}, \norm{z_j - f_i} \geq 2$.  

\begin{lemma}
For any $1 \leq i \leq n$, $\pi_i$ is a continuous path from $s_i$ to $f_i$. 
\end{lemma}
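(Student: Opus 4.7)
The plan is to verify continuity of $\pi_j$ on $J = [0,n]$ by partitioning $J$ into the $n$ unit subintervals $[k-1,k]$ induced by the active intervals, showing continuity on each subinterval, and then checking agreement of the one-sided limits at every integer seam $t = k$.

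Within each subinterval the argument is quick. On the active interval $[j-1,j]$ we have $\pi_j = \bar{\gamma}_j$, a concatenation of straight segments and circular arcs by the construction of Section~\ref{subsec:path-deformation}, hence continuous. On any other subinterval $[i-1,i]$ with $i \neq j$, $\pi_j(t) = \rho_{ij}(\bar{\gamma}_i(t))$; since $\bar{\gamma}_i$ is continuous and $\rho_{ij}$ is continuous (recalled just above the lemma), the composition is continuous.

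The main obstacle, and the step that carries the real content, is gluing at the seams. Everything hinges on a single \emph{boundary identity}: for every $i \neq j$ and every $y \in \{s_i, f_i\}$, $\rho_{ij}(y) = z_j$. To prove it I would unpack the revolving-area separation axiom. By definition $A_{z_j}$ has radius $2$, contains $D_{z_j}$, and is disjoint from $D_y$ for every other start or final $y$. Hence $\|c_{z_j} - y\| \geq 3$ and $\|z_j - c_{z_j}\| \leq 1$, so the triangle inequality yields $\|z_j - y\| \geq 2$. The definition of $\rho_{ij}$ on the set $\{p : \|p - z_j\| \geq 2\}$ then forces $\rho_{ij}(y) = z_j$.

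Given the identity, every seam check is immediate. At a seam $t = k$ that does not bound the active interval of $R_j$, both one-sided limits evaluate via $\rho$ on $f_{i-1}$ or $s_i$, and hence equal $z_j$. At $t = j-1$ (resp.\ $t = j$) the retraction side evaluates to $z_j = s_j$ (resp.\ $z_j = f_j$), matching $\bar{\gamma}_j(j-1) = s_j$ (resp.\ $\bar{\gamma}_j(j) = f_j$). The endpoint values $\pi_j(0) = s_j$ and $\pi_j(n) = f_j$ fall out of the same computation, with the obvious special-case handling when $j = 1$ or $j = n$. I expect the revolving-area separation estimate to be the only subtle step; once the inequality $\|z_j - y\| \geq 2$ is in hand, continuity is forced by the piecewise structure.
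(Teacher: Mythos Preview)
Your proposal is correct and follows essentially the same approach as the paper: the paper's proof is a one-line remark that $\rho_j$ is continuous, that $\rho_j(p)=z_j$ whenever $\|p-z_j\|\ge 2$, and that $\|z_j-s_i\|,\|z_j-f_i\|\ge 2$, which are exactly the ingredients you assemble. Your derivation of the separation inequality $\|z_j-y\|\ge 2$ via $\|c_{z_j}-y\|\ge 3$ and $\|z_j-c_{z_j}\|\le 1$ is precisely the content of Lemma~\ref{lem:ra-ppts}(iii)--(iv), which the paper states (slightly later) but does not spell out at this point; so your write-up is simply a more explicit version of the paper's argument.
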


% In fact, we show that the robots can be moved in any order, so our algorithm can be extended to an online setting where $R_i$'s $(s_i, f_i)$'s are given in an online manner, or $R_i$'s may have to execute multiple tasks which are given in an online manner-- the so-called life-long planning problem. Our algorithm ensures an $O(1)$ competitive ratio, i.e., the cost is $O(1)$ times the optimal cost of the offline problem.

\section{Correctness and Analysis of the Algorithm}
\label{sec:analysis}
We first prove that $\Pi$ is feasible (Section~\ref{sec:feasibility}), then we bound $\cost(\Pi)$ (Section~\ref{sec:cost}), and finally analyze the running time in Section~\ref{sec:runningtime}. 
We begin by summarizing a few relevant properties of revolving areas (see Figure~\ref{fig:revolving-area-lemma}), which are straightforward to prove. 
% which will be useful for proving that $\pi$ is a feasible motion plan and for bounding $\cost(\pi)$. 
% with the following lemma. 

\begin{lemma}
\label{lem:ra-ppts}
\begin{inparaenum}[(i)]
Let $x, y \in \starts \cup \finals$ such that $x \neq y$:
\item  $x \in C_x$, that is, each start or final position lies inside the core of $A_x$;
\item $\norm{c_x - c_y} \geq 2$, i.e., $\interior\ C_x \cap \interior\ C_y = \emptyset$;
\item for any $p \in C_x$, $\norm{p - y} \geq 2$, i.e., $\interior\ D_p \cap \interior\ D_y = \emptyset$; 
\item $\norm{x - c_y} \geq 3$, i.e., each start/final position lies outside the buffer of any other start/final position. 
% $x$ does not lie in the buffer $B_y$
\end{inparaenum} 
\end{lemma}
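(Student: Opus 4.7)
All four parts are elementary consequences of the defining conditions of revolving areas together with the triangle inequality, so the plan is simply to unpack the definitions in the right order; there is no genuine obstacle, only bookkeeping of radii. Recall the data: $A_z$ is a disc of radius $2$ centered at $c_z$, $C_z$ the concentric disc of radius $1$, $D_z$ the unit disc centered at $z$, and the axioms are $D_z \subseteq A_z$ together with $A_z \cap D_y = \emptyset$ for every other $y \in \starts \cup \finals$.

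I would first prove (i) and (iv) directly from the axioms, since the other two parts reduce to them. For (i), observe that $D_x$ is a unit disc inscribed in the radius-$2$ disc $A_x$; hence its center satisfies $\|x - c_x\| \leq 2 - 1 = 1$, i.e. $x \in C_x$. For (iv), disjointness of the radius-$2$ disc $A_y$ and the unit disc $D_x$ forces their centers to be separated by at least the sum of the radii, giving $\|x - c_y\| \geq 2 + 1 = 3$.

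Next I would derive (ii) and (iii) from these two facts by a one-line triangle inequality each. For (ii), apply the triangle inequality to $c_x, x, c_y$ and use (i) and (iv):
\[
\|c_x - c_y\| \;\geq\; \|c_y - x\| - \|x - c_x\| \;\geq\; 3 - 1 \;=\; 2,
\]
which is equivalent to $\interior\, C_x \cap \interior\, C_y = \emptyset$. For (iii), take any $p \in C_x$, so $\|p - c_x\| \leq 1$; combining with (iv) for the pair $(x, y)$ reversed (i.e.\ $\|c_x - y\| \geq 3$) yields
\[
\|p - y\| \;\geq\; \|c_x - y\| - \|p - c_x\| \;\geq\; 3 - 1 \;=\; 2,
\]
which in turn gives $\interior\, D_p \cap \interior\, D_y = \emptyset$. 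The whole argument is short enough to present as a single paragraph in the paper, with the four bounds labelled (i)--(iv) in the order above.
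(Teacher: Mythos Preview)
Your argument is correct and is exactly the kind of ``straightforward'' derivation the paper alludes to without writing out: the paper states the lemma and notes the properties are easy consequences of the revolving-area axioms, and your unpacking via the containment $D_x\subseteq A_x$, the disjointness $A_y\cap D_x=\emptyset$, and two triangle inequalities is the intended route.
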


\begin{figure*}[h]
\centering
\includegraphics[width=0.35\textwidth]{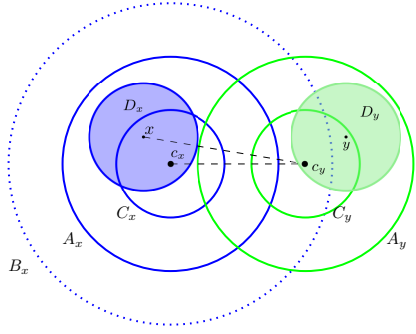}
\caption{Illustration of Lemma~\ref{lem:ra-ppts}. $D_x$ and $D_y$ are two robots located in their respective revolving areas $A_x, A_y$. The distance between $c_x$ and $c_y$ is at least $2$; $y$ lies outside the buffer of $x$, i.e., $y \not \in B_x$.}
\label{fig:revolving-area-lemma}
\end{figure*}
% See Figure~\ref{fig:revolving-area-lemma}. Part (i) follows from condition (i) of the definition of revolving areas,  (ii) is Lemma~1 in~\cite{SolomonHalperin2018}, and (iii) and (iv) follow from condition (iii) of the definition of revolving areas. 

\subsection{Feasibility}
\label{sec:feasibility}
In this section, we show that path ensemble $\pathens$ is feasible. 
%The main intricacy is showing that non-active robots, i.e., those performing a retraction, do not collide.
%Tzvika try add this in
Recall that stage (I) of the algorithm reports that there is no feasible solution if any $s_i \in \starts$ and $f_i \in \finals$ do not lie in the same connected component. 
So assume that Stage I computes a feasible path $\gamma_i$ for each $R_i$. 
Stages II and III modify these paths so that they remain in $\F$. 
% We can thus conclude that no $R_i$ collides with any obstacle while executing the motion plan $\pathens$. 
Hence, we only need to show that no two robots collide with each other during the motion, i.e., for any $1 \leq i \neq j \leq n$ and for any $t \in J$, $\interior\ D_{\pi_i(t)} \cap \interior\ D_{\pi_j(t)} = \emptyset$.
We fix some $i\in [n]$ and the corresponding active interval $T_i \coloneqq [i-1, i] \subseteq J$ and prove the feasibility of $\Pi$ during this interval. 
% \tzvika{note the naming $T$}
Note that $R_i$ is the only active robot in $T_i$ and other robots stay in their revolving areas. 
By the definition of retraction, for any $t \in T_i$, and for any $j \neq i$, $\norm{\pi_i(t) - \rho_{ij}(\pi_i(t))} \geq 2$, so $R_i$ does not collide with $R_j$ during interval $T_i$.
Thus, we only need to show that for any pair $j, k \neq i$, $R_j$ and $R_k$ do not collide while $R_j$ moves along its retraction path. \short{The following lemmas are proved in Appendix~\ref{sec:appendix}.} Since Lemma~\ref{lem:no-collision} holds for any interval $T_i$, we obtain the final statement of feasibility. 

\eat{\begin{figure}[h]
    \captionsetup[subfigure]{justification=centering}
     \centering
     \begin{subfigure}[b]{0.45\textwidth}
         \centering
         \includegraphics[width=\textwidth]{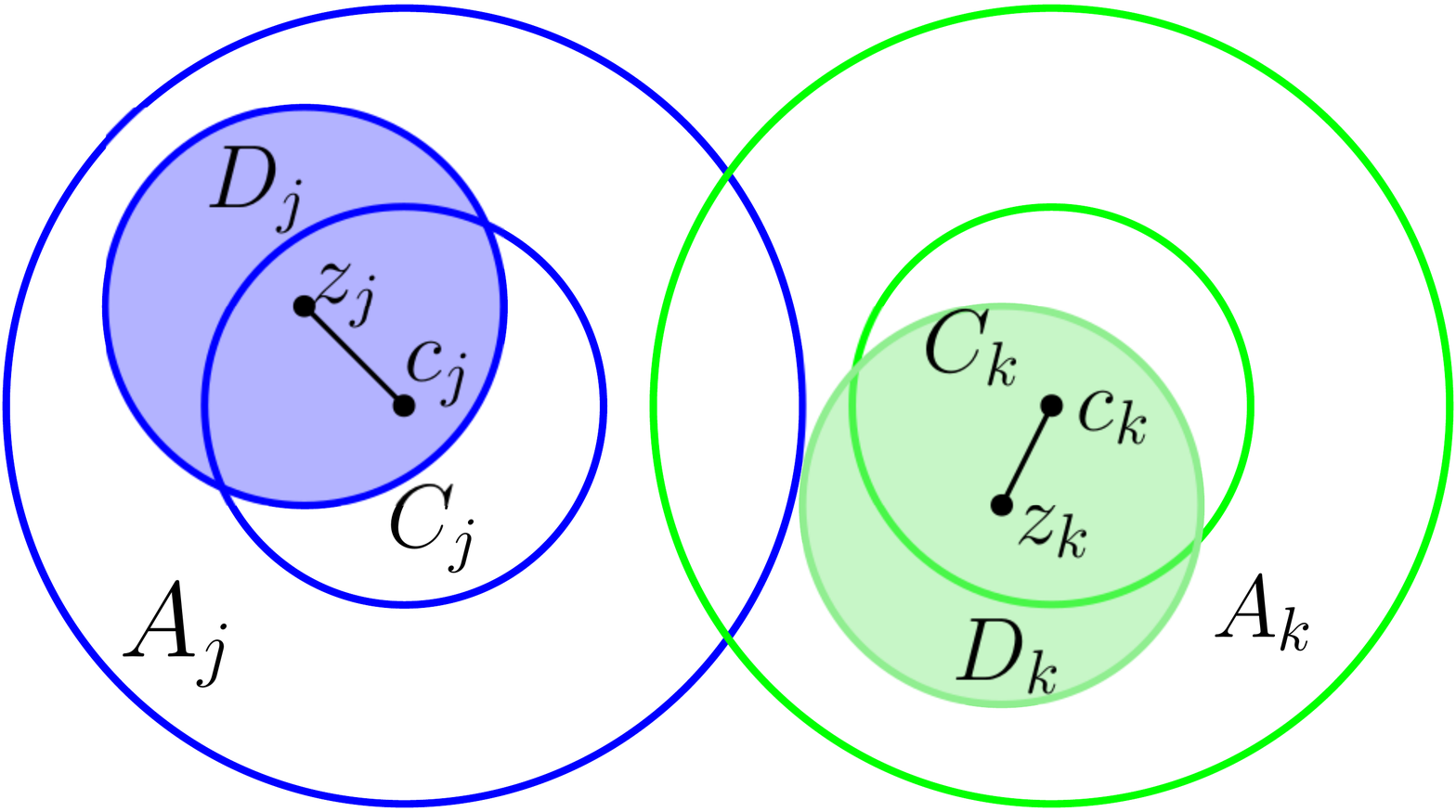}
         \caption{}
     \end{subfigure}
     \hfill
     \begin{subfigure}[b]{0.45\textwidth}
         \centering
         \includegraphics[width=\textwidth]{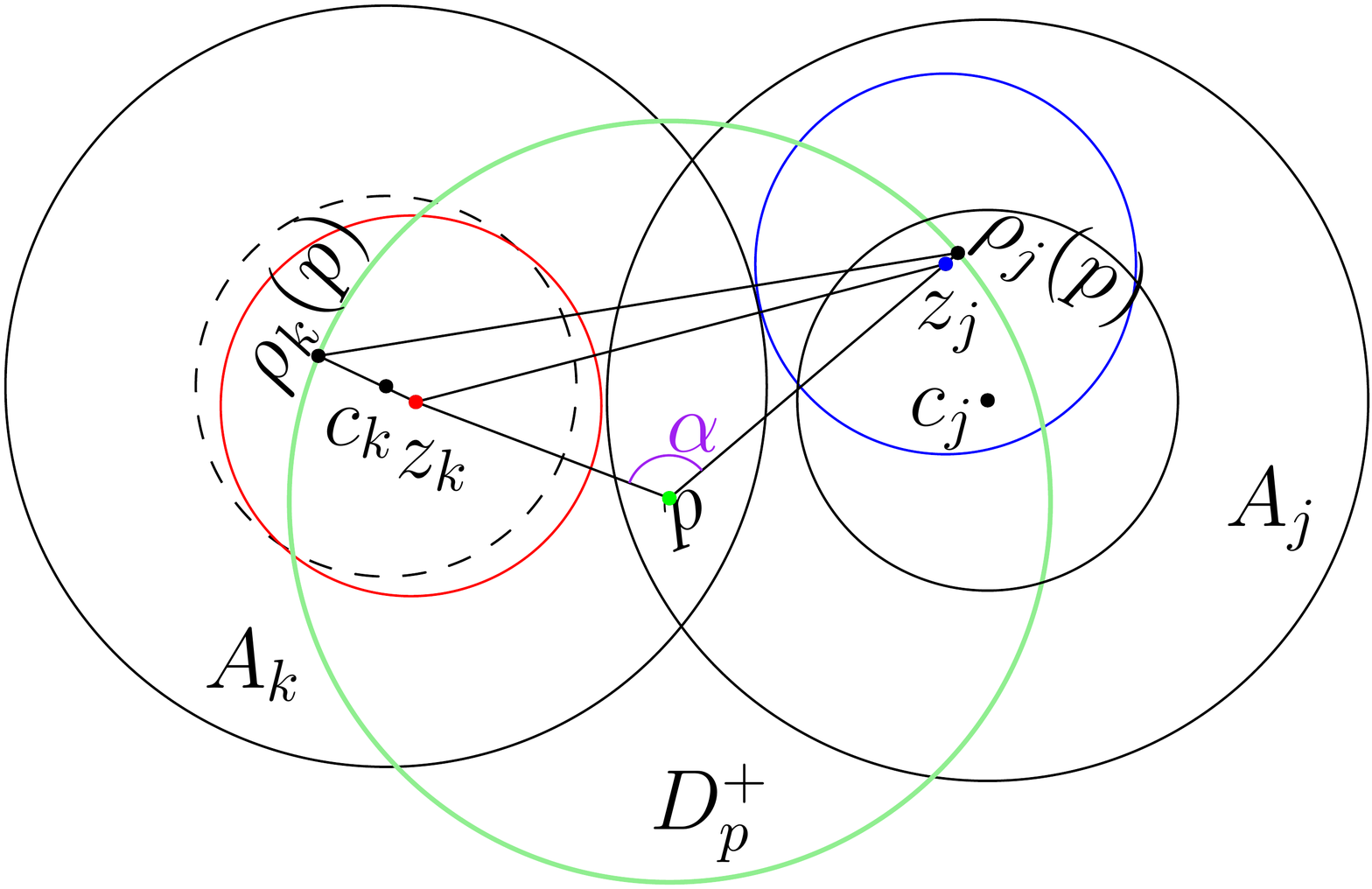}
         \caption{}
     \end{subfigure}
  \caption{ (a) Illustration of Lemma~\ref{lem:dist-segments}. (b) Case 2 of Lemma~\ref{lem:no-collision}, The angle $ \alpha \geq \pi/3$. \tzvika{Perhaps separate into two figures?}} 
    \label{fig:segment}
\end{figure}}

\begin{restatable}{lemma}{segment}
\label{lem:dist-segments}
For any $j \neq k$ and $z_j, z_k \in \starts \cup \finals$, the minimum distance between the line segments $c_j z_j$ and $c_k z_k$ is at least $2$, i.e., 
% $\min_{y_j \in c_jz_j, y_k \in c_k z_k}\norm{y_j - y_k} \geq 2.$
$\min_{\substack{y_j \in c_jz_j,\\ y_k \in c_k z_k}} \norm{y_j - y_k} \geq 2.$ 
\end{restatable}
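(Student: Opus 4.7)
The plan is to prove the lemma by using the perpendicular bisector of the segment $c_j c_k$ as a separating line: I will show that every point of $[c_j, z_j]$ lies at perpendicular distance at least $1$ on the $c_j$-side of the bisector, and symmetrically every point of $[c_k, z_k]$ lies at distance at least $1$ on the $c_k$-side, which yields the required separation of $2$.

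To set up, I would place $c_j$ at the origin and $c_k = (d, 0)$ with $d = \|c_j - c_k\| \geq 2$, using Lemma~\ref{lem:ra-ppts}(ii); the bisector is then the vertical line $x = d/2$. It suffices to show $z_{j,x} \leq d/2 - 1$ (and symmetrically $z_{k,x} \geq d/2 + 1$), because the $x$-coordinate along $[c_j, z_j]$ is a convex combination of $0$ and $z_{j,x}$, both of which would then be at most $d/2 - 1$.

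The bound on $z_{j,x}$ will combine two facts from Lemma~\ref{lem:ra-ppts}: $\|z_j\| \leq 1$ from part~(i) (since $z_j \in C_{z_j}$ and $c_j$ is at the origin) and $\|z_j - c_k\| \geq 3$ from part~(iv). Expanding $\|z_j - c_k\|^2 \geq 9$ gives $z_{j,x} \leq (\|z_j\|^2 + d^2 - 9)/(2d) \leq (d^2 - 8)/(2d) = d/2 - 4/d$, while the norm bound directly gives $z_{j,x} \leq \|z_j\| \leq 1$. A short case split then confirms $\min(d/2 - 4/d,\, 1) \leq d/2 - 1$: if $d \leq 4$ then $4/d \geq 1$ so the first bound is tight enough, and if $d \geq 4$ then $1 \leq d/2 - 1$ so the second is. The lemma then follows from $\|y_j - y_k\| \geq y_{k,x} - y_{j,x} \geq (d/2 + 1) - (d/2 - 1) = 2$.

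The main subtlety, more than a real obstacle, is that neither ingredient alone is strong enough: the norm bound $z_{j,x} \leq 1$ fails when $d$ is small (at $d = 2$ we in fact need $z_{j,x} \leq 0$), while the distance bound $z_{j,x} \leq d/2 - 4/d$ fails when $d$ is large. Combining them via the minimum produces a uniform bound of $d/2 - 1$ for every admissible $d \geq 2$, which is exactly what the perpendicular-bisector separation argument requires.
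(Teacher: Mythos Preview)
Your proof is correct and takes a genuinely different route from the paper's. The paper argues directly on the closest pair $(y_j,y_k)$: since the two segments lie in disjoint cores they cannot cross, so at least one of $y_j,y_k$ is a segment endpoint; depending on which endpoint it is, one of the bounds $\|c_j-z_k\|\ge 3$ or $\|c_k-z_j\|\ge 3$ from Lemma~\ref{lem:ra-ppts}(iv) applies, and the triangle inequality together with the fact that the other segment has length at most $1$ finishes the job.

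Your argument instead exhibits the perpendicular bisector of $c_jc_k$ as an explicit separating line with margin $1$ on each side, combining parts (i) and (iv) of Lemma~\ref{lem:ra-ppts} via a short case split on $d$. This is slightly more computational, but it yields a stronger geometric conclusion (a single line separating the two segments with slab width $2$) and it sidesteps the closest-pair-has-an-endpoint observation entirely. The paper's version is shorter and coordinate-free; yours is more constructive. Both rely on exactly the same ingredients from Lemma~\ref{lem:ra-ppts}.
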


%Tzvika:

% \erin{This proof is moved to the appendix.}
\eat{\begin{proof}
Let $y_j, y_k$ be the closest pair of points on the segments $c_jz_j$ and $c_kz_k$. 
Note that $z_j c_j$ and $z_k c_k$ are disjoint since $z_j c_j \in C_j$ and $z_k c_k \in C_k$ and these cores do not intersect (cf Lemma~\ref{lem:ra-ppts}). 
This implies that either $y_j$ or $y_k$ must be an endpoint of the respective segment. 
\ 
Assume without loss of generality that $y_j$ is an endpoint of $z_j c_j$.
By Lemma~\ref{lem:ra-ppts}, $\norm{c_j - z_k}, \norm{c_k - z_j} \geq 3$. 
Let $y'_k$ be the endpoint of $c_z z_k$ at distance $3$ from $y_j$ ($y'_k = z_k$ if $y_j = c_j$ and $y'_k = c_k$ otherwise). 
Since $z_k \in C_k$, $\norm{y_k - y'_k} \leq 1$, 
Then $\norm{y_j - y_k} \geq \norm{y_j - y'_k} - \norm{y'_k - y_k} \geq 3 - 1 = 2$. 
\end{proof}}

\begin{restatable}{lemma}{collisionfree}
\label{lem:no-collision}
For any $j \neq k$, $R_j$ and $R_k$ do not collide during the interval $T$.
\end{restatable}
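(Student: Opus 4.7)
The plan is to show that for every point $p$ on $\bar{\gamma}_i$, the retracted centers satisfy $\|\rho_{ij}(p) - \rho_{ik}(p)\| \geq 2$; this is precisely the non-collision condition for $R_j$ against $R_k$ during $T_i$, and complements the non-collision between $R_i$ and each $R_j$ that is already built into the definition of $\rho_{ij}$. Writing $\rho_j, \rho_k$ for $\rho_{ij}, \rho_{ik}$, I split on whether both retractions are non-trivial.

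If $\|p - z_j\| \geq 2$, so that $\rho_j(p) = z_j$, then $\rho_k(p) \in C_k$ gives $\|\rho_k(p) - c_k\| \leq 1$, while Lemma~\ref{lem:ra-ppts}(iv) gives $\|z_j - c_k\| \geq 3$; the triangle inequality immediately yields $\|z_j - \rho_k(p)\| \geq 3 - 1 = 2$. The case $\|p - z_k\| \geq 2$ is symmetric. Otherwise both $\rho_j(p)$ and $\rho_k(p)$ lie on $\partial D^+_p$, so $\|\rho_j(p) - \rho_k(p)\| = 4 \sin(\alpha/2)$ with $\alpha = \angle \rho_j(p)\, p\, \rho_k(p)$, and it suffices to show $\alpha \geq \pi/3$. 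When both retractions are of sector type, $\rho_j(p)$ and $\rho_k(p)$ lie on the rays from $p$ through $z_j$ and $z_k$, so $\alpha = \angle z_j\, p\, z_k$; setting $a = \|z_j - p\|$, $b = \|z_k - p\|$, $c = \|z_j - z_k\|$, the law of cosines gives $\cos\alpha = (a^2 + b^2 - c^2)/(2ab)$. A direct computation shows $\max_{a, b \in [0,2]} (a^2 - ab + b^2) = 4$, attained at the corners of the square, and Lemma~\ref{lem:ra-ppts}(iv) combined with the triangle inequality gives $c \geq 3 - 1 = 2$. Hence $a^2 - ab + b^2 \leq 4 \leq c^2$, so $\cos\alpha \leq 1/2$ and $\alpha \geq \pi/3$, as required.

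The remaining sub-case, in which at least one retraction is of intersection type, is where I expect the main difficulty: then $\rho_j(p) \in \partial C_j$ sits at an endpoint of $\sigma_j(p)$ rather than on $\overrightarrow{p z_j}$, so the identification $\alpha = \angle z_j\, p\, z_k$ breaks down. My plan is to use that $\rho_j(p)$ still lies on the arc $\sigma_j(p)$ whose angular half-width $\alpha_j = \arccos((3 + r_j^2)/(4 r_j))$ (with $r_j = \|c_j - p\|$) is at most $\pi/6$, maximized at $r_j = \sqrt{3}$, and to combine the core separation $\|c_j - c_k\| \geq 2$ with the stronger clearance $\|z_j - c_k\|, \|z_k - c_j\| \geq 3$ from Lemma~\ref{lem:ra-ppts}(iv) to control the angular positions of both arcs on $\partial D^+_p$. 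The delicate configuration is when $z_j$ and $z_k$ lie on the same side of $p$, so that both intersection-type retraction endpoints are pulled toward one another; there, the extra clearance forced by Lemma~\ref{lem:ra-ppts}(iv) is exactly what keeps the two endpoints at angular separation $\geq \pi/3$ on $\partial D^+_p$.
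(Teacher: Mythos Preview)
Your Case~1 and the sector--sector subcase of Case~2 are correct and match the paper's argument. The gap is the remaining subcase, where at least one retraction is of intersection type: you give only a plan, not a proof, and the plan of bounding the angular half-width of $\sigma_j(p)$ by $\pi/6$ and then controlling the angular positions of the two arcs on $\partial D^+_p$ does not obviously close. In particular, you would need to show that the \emph{specific} endpoint of $\sigma_j(p)$ chosen by the retraction (the one nearer to $z_j$) and the corresponding point for $k$ are at angular separation at least $\pi/3$; the half-width bound alone does not give this, and the ``delicate configuration'' you flag is exactly where a direct angular argument becomes awkward.

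The paper's device is cleaner and handles both retraction types uniformly. For each $j$ it picks an auxiliary point $f_j(p)$ on the segment $c_j z_j$ that also lies on the segment $p\,\rho_j(p)$: when the retraction is of sector type, $f_j(p) = z_j$; when it is of intersection type with $\rho_j(p) = x_{j,1}(p)$, the segment $p\,x_{j,1}(p)$ must cross $c_j z_j$ (because $c_j \in S(p)$ while $z_j \notin S(p)$), and $f_j(p)$ is that crossing point. Then Lemma~\ref{lem:dist-segments} gives $\|f_j(p) - f_k(p)\| \geq 2$, and since $f_j(p), f_k(p) \in D^+_p$, the angle $\angle f_j(p)\,p\,f_k(p) = \angle \rho_j(p)\,p\,\rho_k(p)$ is at least $\pi/3$, after which $\|\rho_j(p) - \rho_k(p)\| \geq 2$ follows exactly as in your sector--sector computation. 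The missing ingredient in your proposal is precisely this reduction to Lemma~\ref{lem:dist-segments} via the auxiliary points on $c_j z_j$.
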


\eat{
\begin{proof} 
In view of the above discussion, we assume $j, k \neq i$. 
The claim is equivalent to showing that $\norm{\rho_j(\pi_i(t)) - \rho_k(\pi_i(t))} \geq 2$ for every $t \in T$. Let $p = \pi_i(t)$.
There are two cases: 
\subparagraph*{Case 1: $\rho_j(p) = z_j$ or $\rho_k(p) = z_k$.}
Without loss of generality, assume that $\rho_j(p) = z_j$. 
By construction, $\rho_k(p) \in C_k$, therefore by Lemma~\ref{lem:ra-ppts}(iii), $\norm{\rho_j(p) - \rho_k(p)} = \norm{z_j - \rho_k(p)} \geq 2$. 
\subparagraph*{Case 2: $\rho_j(p) \neq z_j$ and $\rho_k(p) \neq z_k$.}
Recall $D^+_p$ is the disc of radius 2 centered at $p$, and $x_{j,1}, x_{j,2}$ are the intersection points of the core of $A_j$ and $\delta D_p^+$. 
In this case, $z_j, z_k \in D^+_p$. 
We consider the triangle formed by the retraction points $\rho_k(p)$, $\rho_j(p)$ and $p$. We show that $\angle \rho_k(p) p \rho_j(p) \geq \pi/3$. 
We will first define a point $f_j(p)$ based on the current retraction type of $R_j$. 
% Combined with Lemma~\ref{lem:dist-segments}, we can conclude that the distance between retraction points is at least $2$ as follows (see Figure~\ref{fig:segment}). 
%
If the retraction of $R_j$ is type sector, then $z_j$ lies within the sector $S(p)$, let $f_j(p) = z_j$.
Otherwise, the retraction is of type intersection and without loss of generality we assume $\rho_j(p)$ is $x_{j, 1}(p)$. In this case, consider the segments $px_{j,1}(p)$ and $z_j c_j$. These two segments must intersect, as $c_j \in \S(p)$ and $z_j \not \in \S(p)$. 
We let $f_j(p)$ be the intersection point of segments.
Note that in either case, $f_j(p)$ lies on segment $p \rho_j(p)$.
We analogously  define $f_k(p)$. 
See Figure~\ref{fig:segment} for an example where $f_j(p) = z_j$ and $f_j(p) = z_k$.
By definition, $f_j(p) \in z_j c_j$ and $f_k(p) \in z_k c_k$ and Lemma~\ref{lem:dist-segments} implies that $\norm{f_j(p) - f_k(p)} \geq 2$.
Additionally, $f_j(p) \in D^+_p$, so $\norm{p - f_j(p)} \leq 2$ (similarly  $\norm{p - f_k(p)} \leq 2$). 
% Consider the triangle $\triangle p f_k(p)$ $f_j(p)$. %
Let $\alpha$ be the angle $\angle f_k(p) p f_j(p)$.
Since $\norm{p - f_j(p)}, \norm{p - f_k(p)} \leq 2$, and $\norm{f_j(p) - f_k(p)} \geq 2$, $\alpha \geq \pi/3$. 
Now consider the triangle formed by the retraction points and $p$.
% $\triangle \rho_k(p) p \rho_j(p)$. 
By construction, $\angle f_k(p) p f_j(p) = \angle \rho_k(p) p \rho_j(p)$. 
The distance between $p$ and each retraction point is $2$: $\abs{p\rho_j(p)} = \abs{p \rho_k(p)} = 2$. 
This implies the other two angles in the triangle are equal ($\angle p \rho_k(p) \rho_j(p) = \angle p \rho_j(p) \rho_k(p)$).
Since $\angle \rho_k(p) p \rho_j (p) = \alpha \geq \pi/3$, $\rho_j(p) \rho_k(p)$ is the longest edge of the triangle $\triangle p \rho_j(p) \rho_k(p)$.
% see Figure~\ref{fig:triangle}. 
The other two sides have length $2$, so $\norm{\rho_j(p) - \rho_k(p)} \geq 2$, as desired.
\end{proof}}

\begin{corollary}
The path ensemble $\Pi$ returned by the algorithm is feasible. 
\end{corollary}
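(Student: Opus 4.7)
The plan is to assemble the corollary from the two requirements of feasibility: that each path stays in $\F$, and that no two robots collide at any time $t \in J = [0,n]$. I would organize the argument by fixing an arbitrary time $t$ and showing both requirements hold simultaneously, leveraging the lemmas already stated.

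First, I would dispose of the free-space requirement. If any pair $(s_i, f_i)$ lies in different connected components of $\F$, Stage~I already reports infeasibility correctly; so assume each $\gamma_i \subset \F$. Stage~II only modifies $\gamma_i$ by replacing arcs with portions of $\partial C_z$ for $z \in Z$, and since $C_z \subseteq A_z \subseteq \W$ and $C_z$ is free, we have $\bar{\gamma}_i \subset \F$. In Stage~III, the retracted position $\rho_j(p)$ is always inside $C_{z_j} \subseteq \F$ (the closest point is taken within $C_{z_j} \setminus D^+_p$), so each $\pi_j$ remains in $\F$ throughout.

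Second, for collision avoidance, I would fix $t \in J$ and locate the unique active interval $T_i = [i-1,i]$ containing $t$; set $p = \pi_i(t) = \bar{\gamma}_i(t)$. There are two kinds of pairs to consider. The pair $(R_i, R_j)$ for $j \neq i$: by the very definition of the retraction function, $\rho_j(p)$ is chosen so that $\rho_j(p) \notin \interior D^+_p$, i.e.\ $\norm{p - \rho_j(p)} \geq 2$, which is exactly the condition that $\interior D_p \cap \interior D_{\rho_j(p)} = \emptyset$. (If $\norm{p - z_j} \geq 2$ to begin with, $\rho_j(p) = z_j$ and the same conclusion holds immediately.) The pair $(R_j, R_k)$ with both $j, k \neq i$: this is exactly the content of Lemma~\ref{lem:no-collision}, which shows $\norm{\rho_j(p) - \rho_k(p)} \geq 2$ using the auxiliary distance bound in Lemma~\ref{lem:dist-segments} together with the angular argument at $p$.

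Since $t \in J$ was arbitrary, and every $t$ lies in some (unique) active interval $T_i$ where the above two bullets exhaust all ordered pairs, conditions (i) and (ii) in the definition of a feasible ensemble both hold, yielding the corollary. The only subtlety worth flagging is the boundary instants $t = i$ where the active interval changes hands; here continuity of each $\pi_j$ (already recorded in the unnamed lemma at the end of Section~3.2) together with the fact that $\rho_{ij}(\bar{\gamma}_i(i)) = \rho_{ij}(f_i) = z_j$ and $\rho_{(i+1)j}(\bar{\gamma}_{i+1}(i)) = \rho_{(i+1)j}(s_{i+1}) = z_j$ (because $\norm{z_j - f_i}, \norm{z_j - s_{i+1}} \geq 2$ by Lemma~\ref{lem:ra-ppts}(iv)) ensures consistency across handoffs, so no additional work is needed. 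I do not expect a genuine obstacle here: the corollary is essentially a bookkeeping consequence of the feasibility analysis, and the nontrivial content has been isolated in Lemmas~\ref{lem:dist-segments} and~\ref{lem:no-collision}.
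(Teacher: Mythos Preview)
Your proposal is correct and follows essentially the same approach as the paper: establish that each $\pi_j$ stays in $\F$ by construction of Stages~II and~III, then fix an active interval $T_i$ and split collision-avoidance into the active-versus-retracted case (handled directly by the definition of $\rho_j$) and the retracted-versus-retracted case (handled by Lemma~\ref{lem:no-collision}). Your extra discussion of the handoff instants $t=i$ is harmless but unnecessary, and the citation there should be to the revolving-area separation assumption (or Lemma~\ref{lem:ra-ppts}(iii)) rather than part~(iv), since what you need is $\norm{z_j - f_i},\,\norm{z_j - s_{i+1}} \geq 2$, not the buffer bound $\norm{x - c_y} \geq 3$.
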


\subsection{Cost of path ensemble}
\label{sec:cost}
We now analyze the cost of the path ensemble $\pathens$ the algorithm returns. 
The algorithm starts by computing $\Gamma$, the shortest paths  of all robots in $\F$ while ignoring other robots. 
Clearly, we have $\cost(\shrtpath) \leq \cost^*(\I)$. 
We show that $\cost(\pathens) = O(\cost(\shrtpath))$. 
\short{In Appendix~\ref{section:gammabar}, we prove $\cost(\bar{\Gamma}) \leq 2 \cost(\Gamma)$,
so we focus on bounding the length of retraction paths of non-active robots, which is one of the main technical contributions of the paper.}
\eat{Stage (II) of the algorithm deforms $\shrtpath$ to $\overline{\Gamma}$. 
Path $\gamma_i \neq \overline{\gamma}_i$ only if  $\gamma_i \cap\  \interior\ C_j \neq \emptyset$ for some $j \neq i$, otherwise $\ell(\gamma_i) = \ell(\overline{\gamma}_i)$.  
Suppose  $\gamma_i \cap C_j \neq \emptyset$ for some $j \neq i$. Then in $\overline{\gamma}_i$, $\gamma_i \cap C_j$ is replaced with the shorter arc $\sigma$ of $\partial C_j$, determined by the first and last endpoints, say $p$ and $q$, of $\gamma_i \cap \partial C_j$.
Therefore, $\ell(\sigma) \leq 2 \sin^{-1}\Paren{ \frac{\norm{p - q}}{2}} \leq 2 \ell(C_j \cap \gamma_i)$. 
Hence, $\ell(\bar{\gamma}_i) \leq 2 \ell(\gamma_i)$ and we obtain:
$\cost(\bar{\Gamma}) \leq 2 \cost(\Gamma)$.}

\eat{We now focus on bounding the length of retraction paths of non-active robots, which is one of the main technical contributions of the paper.}
Let $\pi_{ji} = \pi_j[i-1, i]$, and let $\Delta_{ij} = \{t \in [i-1, i] : \norm{\pi_i(t) - z_j} \leq 2 \}$, i.e., $\pi_{ji}$ is the retraction of $R_j$ due to the motion of $R_i$ and $\pi_i[\Delta_{ij}]$ is the part of $\pi_i$ that causes the retraction motion of $R_j$. 
Refer to Figure~\ref{fig:overall-paths}. 
We show that $\ell(\pi_{ji}) = O(\ell(\pi_i[\Delta_{ij}]))$ (cf Corollary~\ref{cor:all-retraction}) and charge $\pi_{ji}$ to $\pi_i[\Delta_{ij}]$. 
We bound $\ell(\pi_{ji})$ by splitting into two scenarios. 
Roughly speaking, if $\pi_i$ does not penetrate the buffer $B_j$ too deeply, we use a Lipschitz condition on the retraction map to show $\ell(\pi_{ji}) = O(\ell(\pi_i[\Delta_{ij}]))$. 
More concretely, for $z \in \starts \cup \finals$, let $W_z$ be the disk of radius $3/2$ centered at $z$. 
We prove a Lipschitz condition when the active robot lies outside $W_j$ (cf Corollary~\ref{cor:outsidewj}).
% \footnote{The constant $3/2$ is not important. One can choose any constant $\eps \in (0, 1)$, and we have not tried to optimize this value.}
On the other hand, if $\pi_i$ travels into $W_j$ then the Lipschitz condition may not hold, but we argue that $\ell(\pi_i[\Delta_{ij}]) = \Omega(1)$ and that $\ell(\pi_{ji}) = O(1)$ (cf Lemma~\ref{lem:inside-wj}). 
Finally, using a packing argument, we show that each ``point" of $\pi_{i}$ is only charged $O(1)$ times, and thus $\cost(\pathens) = O(\cost(\bar{\shrtpath})) = O(\cost(\shrtpath))$. 

\eat{\subparagraph{Cost of $\overline{\Gamma}$.} 
\label{section:gammabar}
Stage (II) of the algorithm deforms $\shrtpath$ to $\overline{\Gamma}$. 
Path $\gamma_i \neq \overline{\gamma}_i$ only if  $\gamma_i \cap\  \interior\ C_j \neq \emptyset$ for some $j \neq i$, otherwise $\ell(\gamma_i) = \ell(\overline{\gamma}_i)$.  
Suppose  $\gamma_i \cap C_j \neq \emptyset$ for some $j \neq i$. Then in $\overline{\gamma}_i$, $\gamma_i \cap C_j$ is replaced with the shorter arc $\sigma$ of $\partial C_z$, determined by the first and last endpoints, say $p$ and $q$, of $\gamma_i \cap \partial C_j$.
Therefore, $\ell(\sigma) \leq 2 \sin^{-1}\Paren{ \frac{\norm{p - q}}{2}} \leq 2 \ell(C_j \cap \gamma_i)$. 
Hence, $\ell(\bar{\gamma}_i) \leq 2 \ell(\gamma_i)$ and we obtain:
\ 
\begin{lemma}
\label{lem:gammabar}
$\cost(\bar{\Gamma}) \leq 2 \cost(\Gamma)$.
\end{lemma}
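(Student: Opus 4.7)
The plan is to bound $\ell(\bar{\gamma}_i)$ against $\ell(\gamma_i)$ separately for each robot $i$ and then sum. Since the deformation from $\gamma_i$ to $\bar{\gamma}_i$ is purely local — it only modifies the path where $\gamma_i$ enters the core $C_j$ of some other robot's revolving area — I would analyze one such replacement at a time and then sum the contributions over the (pairwise disjoint) cores that $\gamma_i$ meets.

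Fix $i$ and a core $C_j$ with $j \neq i$ such that $\gamma_i \cap C_j \neq \emptyset$. Let $p, q$ be the first and last intersections of $\gamma_i$ with $\partial C_j$, so that the algorithm replaces $\gamma_i[p,q]$ by the shorter boundary arc $\sigma \subset \partial C_j$ connecting $p$ and $q$. The key step is the chord–arc inequality for the unit circle: since $\|p-q\| \leq 2$, we have $\ell(\sigma) = 2\sin^{-1}(\|p-q\|/2) \leq 2\|p-q\|$, using the elementary bound $\sin^{-1}(x) \leq 2x$ for $x \in [0,1]$. On the other hand, because $\gamma_i$ is a shortest path in $\F$ (ignoring the other robots), the subpath $\gamma_i[p,q]$ has length at least the Euclidean distance $\|p-q\|$, i.e., $\ell(\gamma_i \cap C_j) \geq \|p-q\|$. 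Combining these gives $\ell(\sigma) \leq 2\,\ell(\gamma_i \cap C_j)$.

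Next, I would invoke the observation (already noted in Section~\ref{subsec:path-deformation}) that the cores $C_{z_j}$ are pairwise disjoint and the replaced subpaths $\gamma_i \cap C_j$ are disjoint intervals along $\gamma_i$, so their individual length contributions can be summed without overcounting. Summing the local bound above over all $j$ such that $\gamma_i \cap C_j \neq \emptyset$, and adding the unchanged portions of $\gamma_i$ (which trivially satisfy the same factor-$2$ bound), yields $\ell(\bar{\gamma}_i) \leq 2\,\ell(\gamma_i)$. Summing over $i \in [n]$ then gives $\cost(\bar{\Gamma}) \leq 2\,\cost(\Gamma)$.

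There is no real obstacle here; the only substantive ingredient is the chord–arc inequality on the unit circle $\partial C_j$, which relies on the fact that $C_j$ has radius exactly $1$. All other steps — disjointness of cores, the shortest-path property of $\gamma_i$, and the accounting across replacements — are immediate from the setup.
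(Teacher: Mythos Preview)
Your argument is correct and is essentially the paper's own proof: both use the chord--arc bound $\ell(\sigma)=2\sin^{-1}(\|p-q\|/2)\leq 2\|p-q\|\leq 2\,\ell(\gamma_i\cap C_j)$ for each core $C_j$ that $\gamma_i$ meets, then sum over the pairwise disjoint cores and over $i$. You are slightly more explicit than the paper in spelling out the intermediate inequality $\sin^{-1}(x)\leq 2x$ and the shortest-path lower bound $\ell(\gamma_i[p,q])\geq\|p-q\|$, but the structure is identical.
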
}
\subparagraph{Retraction of $\mathbf{R_j}$ outside $\mathbf{W_j}$.} 
\label{section:outsidewj}
As in Section~\ref{sec:feasibility}, we fix an interval $[i-1, i]$ for some $i \in [n] \setminus \{j \}$. 
Let $\Delta^o_j = \{t \in [i-1, i] : \norm{\pi_i(t) - z_j} \leq 2 \text{ and } \pi_i(t) \not \in W_j \}$. 
That is, $\Delta^o_j$ is the interval(s) of time in which the path of robot $R_i$ forces the retraction of robot $R_j$ while the center of $R_i$ lies outside $W_i$. 
Let $\Phi_{ij}$ be the restriction of path $\pi_i$ of robot $R_i$ during the interval $\Delta^o_j$, i.e. $\Phi_{ij}(t) = \pi_i(t)$ for $t \in \Delta^o_j$. 
Let $\Psi_{ji}: \Delta^o_j \subset \Delta_{ij} \rightarrow C_j$ be the retraction of $R_j$ during $\Delta^o_j$, i.e., $\Psi_{ji}(t) = \rho_{ij}(\pi_i(t))$ for $t \in \Delta^o_j$. 
We show that $\ell(\Psi_{ji}) = O(\ell(\Phi_{ij}))$ by proving a Lipschitz condition on $\ell(\Psi_{ji})$.

We will divide $\Phi_{ij}$ into subpaths, referred to as pathlets, so that there is only one type of retraction point associated with the subpath.  
We call a time instance $t \in \Delta^o_j$ an \emph{event} if $t$ is either an endpoint of a connected component of $\Delta^o_j$ (i.e., $\norm{\pi_i(t) - c_j} = 3/2$ or $\norm{\pi_i(t) - z_j} = 2$) or $z_j \in \partial \S_j(\pi_i(t))$, (i.e., the type of retraction point $\rho_j(\pi_i(t))$ changes at time $t$). 
Let $t_0 < t_1 < \dots < t_k$ be the event points.
We divide $\Phi_{ij}$ and $\Psi_{ji}$ into \emph{pathlets} at these events, i.e., $\Phi_{ij} = \varphi_1 \circ \varphi_2 \circ \cdots \circ \varphi_g$ and $\Psi_{ji} = \psi_1 \circ \psi_2 \circ \cdots \circ \psi_g$ where $\varphi_k = \pi_i[t_{k-1}, t_k]$ and $\psi_k = \rho_j(\varphi_k) = \pi_j[t_{k-1}, t_k]$.
We prove the Lipschitz condition for each pathlet. 
All points on $\rho_j(\varphi_k)$ have the same type of of retraction by construction of $\Phi_{ji}$. 
We call $\varphi_k$ a \emph{sector-type} (\emph{intersection-type}) pathlet if all points have sector (resp. intersection) type retraction.  
 
% We first consider a pathlet $\varphi_k$ of $\Phi_{ji}$ such that for each $p \in \varphi_k$, $\rho_j(p)$ is type sector, i.e.  $\rho_j(p)$ lies on the ray $\overrightarrow{p z_j}$ at distance $2$ from $p$. 
% All points on $\rho_j(\varphi_k)$ have the same type of of retraction by construction of $\Phi_{ji}$. 
% We call $\varphi_k$ a \emph{sector-type} pathlet. 

% known as the Conchoid of  Nicomedes
% See Figure~\ref{fig:lipschitz}. 

\begin{figure}
    \captionsetup[subfigure]{justification=centering}
     \centering
     \begin{subfigure}[b]{0.45\textwidth}
         \centering
         \includegraphics[width=\textwidth]{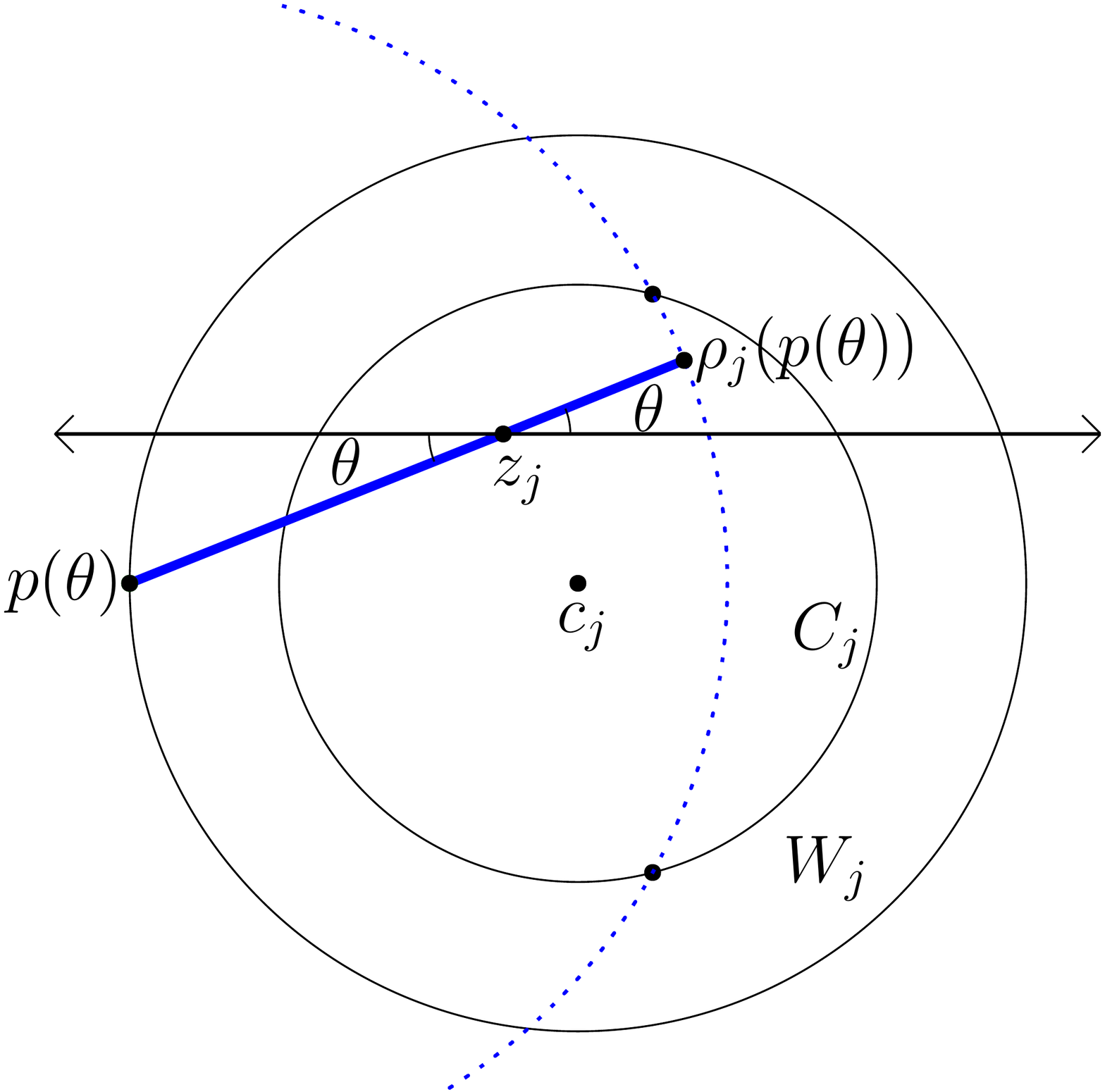}
         \caption{}
     \end{subfigure}
     \hfill
     \begin{subfigure}[b]{0.45\textwidth}
         \centering
         \includegraphics[width=\textwidth]{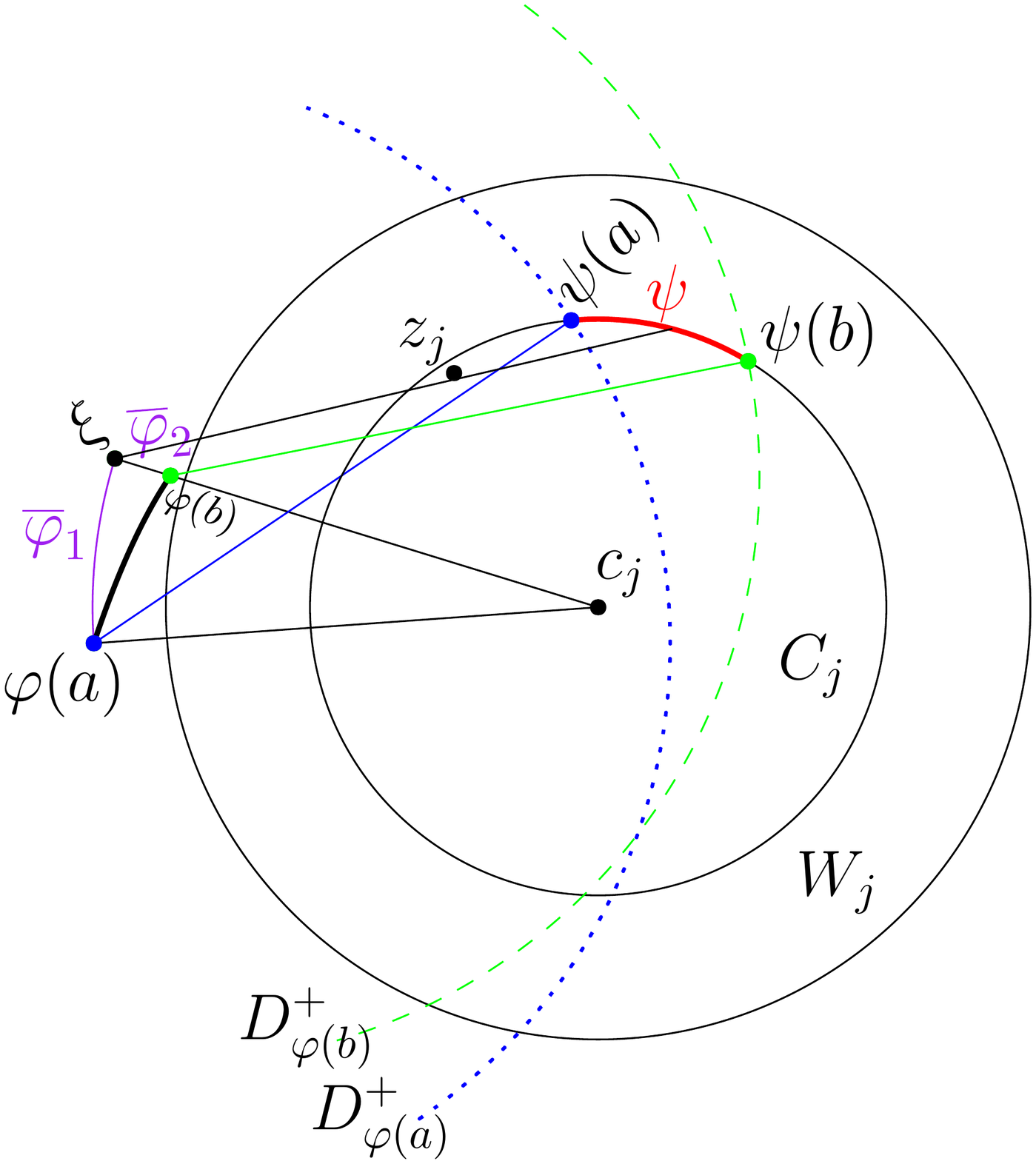}
         \caption{}
     \end{subfigure}
\caption{Illustration of Lemma~\ref{lem:cost-sector} and~\ref{lem:cost-intersect}. 
(a) On the left, the figure shows a sector-type retraction. $p(\theta) = (r(\theta), -\theta)$ and $\rho_j(p(\theta)) = (2-r(\theta), \theta)$. 
(b) On the right, the figure shows an intersection-type retraction. The arc $\psi$ on $\partial C_j$ is the retraction path.}
\label{fig:lipschitz}
\end{figure}

% In the following lemma, we show that the length of the retraction path $\rho_j(\varphi)$ is at most a constant factor more than the length of pathlet $\varphi$ by showing a Lipschitz condition between the pathlet and corresponding retraction path. 

\begin{lemma}
\label{lem:cost-sector}
For a sector-type pathlet $\varphi_k$ of $\Phi_{ji}$, $\ell(\rho_j(\varphi_k)) = O(\ell(\varphi_k))$. 
\end{lemma}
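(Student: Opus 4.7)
The plan is to work in polar coordinates centered at $z_j$. By the definition of $\Delta^o_j$, every point $p$ on the pathlet $\varphi_k$ satisfies $3/2 \leq \|p-z_j\| \leq 2$, so I parameterize
\[
p(t) = z_j + r(t)\,\mathbf{u}(t), \qquad r(t)\in[3/2,2],
\]
where $\mathbf{u}(t)$ is the unit vector from $z_j$ to $p(t)$, which is well-defined since $r(t)\ge 3/2>0$. Because $\varphi_k$ is sector-type, $z_j\in S(p(t))$ throughout, so the retraction rule from Section~\ref{sec:alg} reduces to a simple reflection: $\rho_j(p(t))$ is the point on the ray from $p(t)$ through $z_j$ at distance exactly $2$ from $p(t)$, which in our coordinates reads $\rho_j(p(t))=z_j-(2-r(t))\,\mathbf{u}(t)$.

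Differentiating both expressions and using $\dot{\mathbf{u}}\cdot\mathbf{u}=0$ yields
\[
\|\dot p(t)\|^2=\dot r(t)^2+r(t)^2\|\dot{\mathbf{u}}(t)\|^2, \qquad \left\|\tfrac{d}{dt}\rho_j(p(t))\right\|^2=\dot r(t)^2+(2-r(t))^2\|\dot{\mathbf{u}}(t)\|^2.
\]
Since $r(t)\in[3/2,2]$ we have $(2-r(t))^2\le 1/4\le r(t)^2$, and therefore $\left\|\tfrac{d}{dt}\rho_j(p(t))\right\|\le\|\dot p(t)\|$ almost everywhere. Integrating over the time interval underlying $\varphi_k$ gives $\ell(\rho_j(\varphi_k))\le\ell(\varphi_k)$, which is the desired $O(1)$ Lipschitz bound (in fact with constant~$1$).

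The main obstacle is essentially bookkeeping: verifying the reflective formula for $\rho_j$ in the sector-type regime (immediate from the geometry in Section~\ref{sec:alg}), checking that the derivative identity is valid even though $\bar\gamma_i$ is only piecewise smooth (it is a concatenation of line segments and circular arcs, so the identity holds almost everywhere and the integration is legal), and confirming that $\mathbf{u}(t)$ is never singular, which follows from $r(t)\ge 3/2$. That lower bound on $r(t)$ is the whole point of introducing $W_j$: if $r$ were allowed to approach $0$, the ratio $(2-r)/r$ would blow up and the Lipschitz argument would break down. The complementary ``inside $W_j$'' regime is exactly what Lemma~\ref{lem:inside-wj} is designed to absorb separately.
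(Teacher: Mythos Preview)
Your proof is correct and essentially identical to the paper's: both work in polar coordinates centered at $z_j$, write the sector-type retraction as $\rho_j(p)=z_j-(2-r)\mathbf{u}$, and compare speeds via the ratio $(2-r)/r$. You obtain the sharper Lipschitz constant~$1$ by reading $W_j$ as the disk of radius $3/2$ centered at $z_j$ (hence $r\ge 3/2$), whereas the paper's argument uses only $r\ge 1/2$ (treating $W_j$ as centered at $c_j$, via $z_j\in C_j$) and arrives at constant~$3$; either bound suffices for the $O(1)$ claim.
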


\begin{proof}
For each $p \in \varphi_k$, $\rho_j(p)$ is type sector, i.e.  $\rho_j(p)$ lies on the ray $\overrightarrow{p z_j}$ at distance $2$ from $p$.
In this case, the retraction map $\psi_k = \rho_j(\varphi_k)$ traces a portion of a Conchoid~\cite{schwartz1983piano}.  

We parameterize points on $\varphi \coloneqq \varphi_k$ and $\psi \coloneqq \psi_k$ using polar coordinates, with $z_j$ as the origin. 
Let $\varphi(\theta) = (r(\theta), \theta)$ be a point on $\varphi$, where $\theta$ is the orientation of the point with respect to the $x$-axis (with $z_j$ as the origin). 
Then, $\psi(\theta) = \rho_j(\varphi(\theta)) = (2 - r(\theta) - \theta)$. 
See Figure~\ref{fig:lipschitz}.
% We show that $\norm{\psi'(\theta)} \leq 3 \norm{\varphi'(\theta)}$.
Note that $\norm{\varphi'(\theta)}^2 = r^2(\theta) + (r'(\theta))^2$ and $\norm{\psi'(\theta)}^2 = (2 - r(\theta))^2 + (r'(\theta))^2$. 
Since $\varphi$ lies outside $W_j$ and $z_j \in C_j$, we have $r(\theta) \in [1/2, 2]$. 
Therefore, $2 - r(\theta) \leq 3r(\theta)$ and $\norm{\psi'(\theta)} \leq 3\norm{\varphi'(\theta)}$. Hence, 
\[ \ell(\psi) = \int \norm{\psi'(\theta)} d\theta \leq 3 \int \norm{\varphi'(\theta)} d\theta = 3\ell(\varphi). \qedhere\] 
\end{proof}

% \subparagraph{Remark.} We note that the above Lipschitz condition fails if $p$ comes very close to $z_j$, thus we only apply the condition for pathlets lying outside $W_j$ and use a global argument to bound the length of the retraction path of $R_j$ for the portion of $\pi_i$ lying inside $W_j$.

\begin{lemma}
\label{lem:cost-intersect}
For an intersection-type pathlet $\varphi_k$ of $\Phi_{ji}$, $\ell(\rho_j(\varphi_k)) = O(\ell(\varphi_k))$. 
\end{lemma}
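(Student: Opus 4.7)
The plan is to mirror the structure of Lemma~\ref{lem:cost-sector} but adapted to the new geometry: in the intersection-type case, $\rho_j(p)\in \partial C_j\cap \partial D_p^+$, so the retraction trace $\psi_k$ is an arc on the unit circle $\partial C_j$ rather than lying on the ray through $p$ and $z_j$. Accordingly, I would parameterize using polar coordinates around $c_j$: write $\rho_j(p) = c_j + (\cos\phi, \sin\phi)$ and $p - c_j = r(\cos\alpha, \sin\alpha)$. The defining constraint $\|\rho_j(p) - p\|^2 = 4$ expands to $1 + r^2 - 2r\cos(\phi-\alpha) = 4$, giving
$$ \cos(\phi - \alpha) = \frac{r^2 - 3}{2r}, $$
which implicitly determines $\phi$ as a function of $(r,\alpha)$ (the choice of branch, $\pm\arccos$, is fixed throughout the pathlet since the type of intersection point, $x_{j,1}$ versus $x_{j,2}$, does not change).

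Second, I would differentiate the constraint in $t$ and read off $|\dot\phi|$ via a kinematic argument. Consider the triangle with vertices $c_j, p, \rho_j(p)$ and side lengths $1, r, 2$; let $\theta$ denote its angle at $\rho_j(p)$. The law of cosines gives $\cos\theta = (5-r^2)/4$, hence
$$ \sin\theta \;=\; \tfrac{1}{4}\sqrt{(r^2-1)(9-r^2)}. $$
Because $\dot\rho_j(p)$ is tangent to $\partial C_j$ at $\rho_j(p)$ while $\dot\rho_j(p) - \dot p$ must be perpendicular to $\rho_j(p)-p$ (differentiating $\|\rho_j(p)-p\|^2=4$), decomposing along $\rho_j(p)-p$ yields the pointwise bound
$$ |\dot\phi| \;=\; \|\dot\rho_j(p)\| \;\leq\; \frac{\|\dot p\|}{|\sin\theta|}. $$

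Third, and this is the main obstacle, I would show that $|\sin\theta|$ is bounded below by a universal positive constant on an intersection-type pathlet lying outside $W_j$. Using $\|p - z_j\| > 3/2$ together with $z_j \in C_j$ (so $\|z_j - c_j\| \leq 1$) and $\|p - z_j\|\leq 2$, I would argue that $r=\|p-c_j\|$ is bounded inside an interval $[r_{\min}, r_{\max}] \subset (1,3)$ that is independent of the instance. The delicate part is ruling out $r$ close to $1$: here the sector $S(p)$ becomes so narrow (of angular width $\Theta(\sqrt{r-1})$) that the intersection-type condition $z_j\notin S(p)$, combined with $z_j\in C_j$ and the constraint $\|p-z_j\|>3/2$, forces $r$ to be strictly greater than $1$ by a constant gap. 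A symmetric argument using $p \in B_j$ bounds $r$ away from $3$. These two bounds combine to give $|\sin\theta| \geq c$ for a universal $c>0$, so $|\dot\phi| \leq \|\dot p\|/c$.

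Finally, integrating over the pathlet,
$$ \ell(\rho_j(\varphi_k)) \;=\; \int_{\varphi_k} |\dot\phi|\,dt \;\leq\; \frac{1}{c}\int_{\varphi_k}\|\dot p\|\,dt \;=\; O(\ell(\varphi_k)), $$
as required. Combined with Lemma~\ref{lem:cost-sector}, this will yield the desired Lipschitz bound on each pathlet of $\Phi_{ij}$, from which Corollary~\ref{cor:outsidewj} will follow by summing over pathlets.
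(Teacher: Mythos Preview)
Your route is genuinely different from the paper's: the paper avoids any pointwise Lipschitz estimate and instead uses that $\psi$ is monotone on $\partial C_j$, replacing $\varphi$ by an auxiliary arc-plus-radial path and bounding the total angular variation of $\rho_j$ along each piece by $\Delta\varphi_\theta$ and $O(\Delta\varphi_r)$ separately. Your kinematic inequality $\|\dot\rho_j\|\le\|\dot p\|/|\sin\theta|$ is correct and cleaner, but everything then rests on bounding $\sin\theta=\tfrac14\sqrt{(r^2-1)(9-r^2)}$ away from $0$, i.e., bounding $r=\|p-c_j\|$ away from $1$ and $3$. Both of your arguments for this have gaps.

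\emph{Near $r=3$.} The condition $p\in B_j$ gives only $r\le 3$, not $r\le 3-c$; as stated your bound on $\sin\theta$ degenerates. The fix is to use the intersection-type hypothesis itself: one can show that intersection type forces $r\le\sqrt5$. Indeed, for $r>\sqrt5$ the entire lens $C_j\cap D_p^+$ lies inside $S(p)$. To see this, parameterize the near arc $\partial C_j\cap D_p^+$ as $(\cos s,\sin s)$ for $|s|\le\delta$ (with $c_j$ at the origin, $p=(r,0)$) and check that the slope $\sin s/(r-\cos s)$ from $p$ is increasing on $[0,\delta]$; this holds iff $\cos\delta=(r^2-3)/(2r)>1/r$, i.e., $r^2>5$. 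Since $z_j\in C_j\cap D_p^+$ whenever the retraction is active, $z_j\in S(p)$, contradicting intersection type. With $r\in[3/2,\sqrt5]$ you get $\sin\theta\ge\sqrt{135}/16$ and your integration goes through.

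\emph{Near $r=1$.} The argument you sketch actually fails: when $r\approx 1$ the sector $S(p)$ is thin, which makes it \emph{easier}, not harder, for $z_j$ to lie outside it. Concretely, with $c_j=0$, $p=(1+\epsilon,0)$, and $z_j=(-\tfrac18,\tfrac{\sqrt{63}}{8})\in\partial C_j$, one has $\|p-z_j\|>3/2$ and $z_j\notin S(p)$, yet $r=1+\epsilon$. This is, however, a red herring: the paper's own proof of the lemma and Corollary~\ref{cor:outsidewj} actually use the hypothesis $\|p-c_j\|\ge 3/2$, i.e., $r\ge 3/2$ directly (the definition of $W_z$ as centered at $z$ rather than $c_z$ appears to be a slip in the text). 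With $r\ge 3/2$ there is nothing to prove at this end.
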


\begin{proof}
Again, we prove the lemma by showing that a Lipschitz condition holds. 
Let $\varphi \coloneqq \varphi_k$.
We parameterize both $\varphi$ and $\rho_j(\varphi)$ in polar coordinates, but with $c_j$ as the origin. 
Let $I = [a, b]$ be the interval over which $\varphi$ is defined.
Let $\varphi(t) = (r(t), \theta(t))$ for $t \in I$. 
We assume that $\varphi$ is sufficiently small (otherwise we divide it into smaller pathlets and argue for each pathlet) so that $\varphi$ both $r$- and $\theta$-monotone.
% is either monotonically coming closer to the core, or monotonically moving away from the core. 

Set $\Delta \varphi_r = \abs{r(b) - r(a)}$ and $\Delta \varphi_\theta = \abs{\theta(b) - \theta(a)}$. 
Since $\varphi$ lies outside $W_j$, $r(t) \geq 3/2$ for all $t\in [a, b]$. 
W.l.o.g., assume both $r(t)$ and $\theta(t)$ are monotonically non-decreasing. 
We obtain:
\[ \ell(\varphi) =  \int_I \sqrt{r'(t)^2 + r(t) \theta'(t))^2} \geq  \frac{1}{\sqrt{2}} \int_I \Paren{r'(t) + \frac{3}{2}\theta'(t)}\  dt \geq   \frac{1}{\sqrt{2}} (\Delta \varphi_r + \Delta \varphi_\theta). \]

The retraction path $\psi(t)$ varies monotonically on the unit circle $\partial C_z$. Thus, we parameterize $\psi$ by its direction on $\partial C_z$, and $\ell(\psi) = \abs{\int_I \psi'(t)\ dt} = \abs{\psi(b) - \psi(a)}$. 
To bound $\ell(\psi)$, consider the following path from $\varphi(a)$ to $\varphi(b)$, see Figure~\ref{fig:lipschitz}.
Let $\overline{\varphi}_1$ be the arc from $\varphi(a)$ to point $\xi = (r(a), \theta(b))$ along the circle of radius $r(a)$ centered at $c_z$. 
Let $\overline{\varphi}_2$ be the segment $\xi$ to $\varphi(b)$, this is a radial segment on line $\xi c_z$. 
Then, $\ell(\psi) \leq \ell(\rho_j(\overline{\varphi}_1)) + \ell(\rho_j(\overline{\varphi}_2))$. 
Since the radius along $\overline{\varphi}_1$ does not change, $\ell(\rho_j(\overline{\varphi}_2)) = \abs{\theta(b) - \theta(a)} = \Delta \varphi_{\theta}$. 

For a point $p = (r, \theta)$, the orientation of $\rho_j(p)$ is $\theta + \cos^{-1} \Paren{\frac{3 - r^2}{2r}}$ (by the law of cosines, considering triangle $\triangle \rho_j(p) c_z p$). 
Since $\theta$ does not change along $\overline{\varphi}_2$ and $r(a), r(b) \in [3/2, 3]$, we obtain $\ell(\rho_j(\overline{\varphi}_2)) = O(\Delta \varphi_r)$. 

Putting everything together, $\ell(\rho_j(\psi)) = \ell(\psi) = O(\Delta \varphi_r + \Delta \varphi_\theta) = O(\ell(\varphi)).$
\end{proof}

Applying Lemmas~\ref{lem:cost-sector} and~\ref{lem:cost-intersect} to all pathlets of $\Phi_{ij}$, we obtain the following:
% we can bound the length of the retraction path of $R_j$ caused by $R_i$ while it is outside $W_j$. 

\begin{corollary}
\label{cor:outsidewj}
Let $1 \leq i \neq j \leq n$. Let $\Phi_{ij}$ be the portion of $\pi_i$ during the interval $t \in [i-1, i]$ such that $\norm{\pi_i(t) - z_j} \leq 2$ and $\norm{\pi_i(t) - c_j} \geq 3/2$, and let $\Psi_{ji}$ be the retraction of $R_j$ corresponding to $\Phi_{ij}$. 
Then $\ell(\Psi_{ji}) = O(\ell(\Phi_{ij}))$. 
\end{corollary}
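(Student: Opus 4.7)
The plan is to sum the per-pathlet bounds from Lemmas~\ref{lem:cost-sector} and~\ref{lem:cost-intersect} over the pathlet decomposition $\Phi_{ij} = \varphi_1 \circ \varphi_2 \circ \cdots \circ \varphi_g$ that was already set up in the paragraph preceding Lemma~\ref{lem:cost-sector}. Recall the event times $t_0 < t_1 < \cdots < t_k$ were chosen so that they contain (i) every endpoint of a connected component of $\Delta^o_j$ (i.e., each time $\pi_i$ crosses $\partial W_j$ or $\partial D^+(z_j)$), and (ii) every time at which $z_j \in \partial S_j(\pi_i(t))$, which is exactly where the retraction type can flip. Since $\pi_i$ is piecewise a straight segment or a circular arc (coming from the shortest-path construction and the arc replacements on core boundaries), each of these event conditions is satisfied at only finitely many times, so $g$ is finite and the decomposition is well-defined.

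Next I would verify that on each pathlet $\varphi_k$ the retraction type is constant. Between two consecutive events no type flip occurs by the choice of the events, and the pathlet lies in $B_j \setminus W_j$ by the definition of $\Delta^o_j$, so $\rho_j(p)$ is defined throughout $\varphi_k$ and is either of sector type on all of $\varphi_k$ or of intersection type on all of $\varphi_k$. Lemma~\ref{lem:cost-sector} applies in the first case and Lemma~\ref{lem:cost-intersect} in the second; both yield a universal constant $c > 0$ (independent of $i, j, k$) such that $\ell(\rho_j(\varphi_k)) \leq c \cdot \ell(\varphi_k)$. Note that the proof of Lemma~\ref{lem:cost-intersect} already further subdivides an intersection pathlet to enforce $r$- and $\theta$-monotonicity, which is legitimate because the bound is additive and the constant is uniform.

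Finally, since $\Psi_{ji} = \rho_j \circ \Phi_{ij} = \rho_j(\varphi_1) \circ \rho_j(\varphi_2) \circ \cdots \circ \rho_j(\varphi_g)$ as a concatenation of paths, its length is additive, and
\[
\ell(\Psi_{ji}) \;=\; \sum_{k=1}^{g} \ell\bigl(\rho_j(\varphi_k)\bigr) \;\leq\; c \sum_{k=1}^{g} \ell(\varphi_k) \;=\; c \cdot \ell(\Phi_{ij}),
\]
which is the desired $O(\ell(\Phi_{ij}))$ bound.

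The only step that could pose any obstacle is justifying that the event set really is finite (so summation rather than an integral argument suffices); this rests on the piecewise-smooth nature of $\pi_i$ inherited from the shortest-path computation in Stage~(I) and the core-arc replacements in Stage~(II). If one instead allows $\pi_i$ to be an arbitrary rectifiable curve, the same conclusion follows by applying the two lemmas as local Lipschitz estimates and integrating, but for the paths produced by the algorithm the finite-partition argument above is clean and self-contained.
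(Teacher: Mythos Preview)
Your proposal is correct and matches the paper's own argument: the paper simply states ``Applying Lemmas~\ref{lem:cost-sector} and~\ref{lem:cost-intersect} to all pathlets of $\Phi_{ij}$, we obtain the following,'' and you have spelled out exactly this summation over the pathlet decomposition. Your added remarks on finiteness of the event set and the piecewise-smooth structure of $\pi_i$ are reasonable elaborations but not something the paper itself supplies; one small slip is that the pathlets lie in $D^+(z_j)\setminus W_j$ rather than $B_j\setminus W_j$, though this does not affect the argument.
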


\subparagraph{Retraction path inside $\mathbf{W_j}$.} 
\label{subsection:retraction-core}
% We assume $\pi_i \cap W_j \neq \emptyset$, otherwise Corollary~\ref{cor:outsidewj} gives the required bound. 
% Let $\varphi$ be the pathlet $\pi_i \cap W_j$. 
Recall that $\pi_i$ does not intersect $(\interior\ C_j)$, but possibly travels along $\partial C_j$. 
For a point $p \in \pi_i$, if $p \in \partial C_j$, then $\rho_j(p)$ is the point on $\partial C_j$ diametrically opposite $p$.
Thus, $\ell(\pi_i \cap C_j) = \ell(\rho_j(\pi_i \cap C_j))$.
In the following, we consider only $\pi_i \setminus \partial C_j$. 

\begin{restatable}{lemma}{insidewj}
\label{lem:inside-wj}
For a pathlet $\varphi$ (i.e., a connected subpath) of path $\pi_i$ such that $\varphi \subset W_j \setminus C_j$ for some $j \neq i$, $\ell(\rho_j(\varphi)) = O(\ell(\pi_i \cap A_j))$.
\end{restatable}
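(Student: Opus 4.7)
The plan is to establish two bounds: (i) $\ell(\rho_j(\varphi)) = O(1)$, and (ii) $\ell(\pi_i \cap A_j) = \Omega(1)$. Combining them immediately yields the desired $O(1)$ ratio, explaining why the Lipschitz-type argument of Section~\ref{section:outsidewj} can be bypassed here by using an absolute upper bound on the retraction and an absolute lower bound on the denominator.

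For (i), I observe that the image $\rho_j(\varphi)$ lies in $C_j$, a disc of radius $1$. I partition $\varphi$ into $O(1)$ maximal sub-pathlets via the same events as in Section~\ref{section:outsidewj}: transitions between sector-type and intersection-type retraction, entering/leaving $W_j$, and points where $\bar{\gamma}_i$ switches between a straight segment and an arc along some other core $\partial C_k$. The number of such events is $O(1)$ because, by Lemma~\ref{lem:ra-ppts}(ii), only a constant number of cores $C_k$ with $k \neq i,j$ can lie close enough to $W_j$ to affect the (modified) shortest path. For each intersection-type sub-pathlet, $\rho_j$ traces an arc of $\partial C_j$, of length at most $2\pi$. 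For each sector-type sub-pathlet, I use the explicit formula $\rho_j(p) = z_j + (2 - \|p - z_j\|)(z_j - p)/\|p - z_j\|$; writing $p$ in polar coordinates centered at $z_j$, the image is a bounded curve in $C_j$ whose arclength is $O(1)$ by a direct calculation (analogous to Lemma~\ref{lem:cost-sector} but without the $3/2$-gap, so I bound the image absolutely rather than relatively).

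For (ii), I use Lemma~\ref{lem:ra-ppts}(iv): $s_i$ lies outside the buffer $B_j$, i.e., $\|s_i - c_j\| \ge 3$, and similarly for $f_i$. Since $\varphi \subset W_j$, it contains a point $p^{\ast}$ with $\|p^{\ast} - z_j\| \le 3/2$, hence $\|p^{\ast} - c_j\| \le 5/2$. Consequently $\pi_i$ must traverse the spherical shell $\{p : 2 \le \|p - c_j\| \le 5/2\}$ on both its approach to and departure from $\varphi$. The ``entry'' into $\varphi$ from the shell, together with the fact that $\pi_i$ cannot penetrate $\interior C_j$, forces $\pi_i$ to spend at least a constant length inside $A_j = B(c_j, 2)$ in a neighborhood of $\varphi$.

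The main obstacle is the subcase in which $\varphi \subseteq W_j \setminus A_j$, i.e.,\ $\varphi$ lies in the ``outer'' part of $W_j$ that pokes past $\partial A_j$ when $\|z_j - c_j\| > 1/2$; here $\pi_i$ might reach $\varphi$ from the far side of $c_j$ without crossing $\partial A_j$ along this pathlet. My plan to resolve this is to show that in this subcase the retraction $\rho_j$ is in fact Lipschitz on $\varphi$ (because $p$ is bounded away from $z_j$), so the bound $\ell(\rho_j(\varphi)) = O(\ell(\varphi))$ holds directly; then I reduce to the argument already handled in Section~\ref{section:outsidewj} by noting that $\pi_i$ still has to traverse the shell $\{2 \le \|p - c_j\| \le 3\}$ near $\varphi$, yielding the required $\ell(\pi_i \cap A_j) = \Omega(\ell(\varphi))$.
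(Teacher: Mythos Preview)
Your high-level plan---show $\ell(\rho_j(\varphi))=O(1)$ and $\ell(\pi_i\cap A_j)=\Omega(1)$, then divide---is exactly the paper's strategy, so the approaches coincide. The differences are in how (i) is executed, and there you have a real gap. Your claim ``for each intersection-type sub-pathlet, $\rho_j$ traces an arc of $\partial C_j$, of length at most $2\pi$'' conflates the image set with the traced curve: the image sits on $\partial C_j$, but without a monotonicity argument the retraction point could sweep back and forth and accumulate arbitrarily large arclength. The same issue hides behind your sector-type ``direct calculation.'' The paper closes this gap by exploiting that $\pi_i$ restricted to $W_j$ is (a modification of) a shortest path inside the obstacle-free disc $A_j$, hence can be broken into at most two $r,\theta$-monotone pieces; on each monotone piece the intersection-type retraction moves monotonically along $\partial C_j$ and the sector-type retraction has bounded total variation in both polar coordinates. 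Your event-based partition is compatible with this (each straight segment or core-arc can be further cut into $O(1)$ monotone sub-pieces), but you must actually invoke monotonicity on each piece---that is the technical crux, and it is missing from your sketch.

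Your ``main obstacle'' paragraph stems from reading $W_z$ as centered at $z$. The paper's literal definition says this, but its proofs of Lemmas~\ref{lem:cost-sector} and~\ref{lem:cost-intersect} use $W_j$ as the disc of radius $3/2$ about $c_j$ (e.g., ``outside $W_j$'' is used to conclude $\|p-c_j\|\ge 3/2$, and the bound $r\ge 1/2$ in the sector case comes from $\|p-c_j\|\ge 3/2$ together with $\|z_j-c_j\|\le 1$). With that intended reading $W_j\subset A_j$, so your obstacle evaporates and the $\Omega(1)$ lower bound is immediate: $\pi_i$ must cross the annulus $\{3/2\le\|p-c_j\|\le 2\}\subset A_j$ at least twice. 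Under your literal reading, your proposed fix is also broken as written: the shell $\{2\le\|p-c_j\|\le 3\}$ is $B_j\setminus A_j$, so traversing it contributes nothing to $\ell(\pi_i\cap A_j)$, and hence does not yield the claimed $\ell(\pi_i\cap A_j)=\Omega(\ell(\varphi))$.
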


% \erin{Proof in appendix.} 
% \eat{
\begin{proof}
Since $\varphi \subset W_j$, $\ell(\pi_i \cap A_j) = \Omega(1)$, therefore we only need to argue that $\ell(\rho_j(\varphi))$ is constant. 
We will bound the length of both types of retraction maps (intersection and sector) separately for $\varphi$, and use the sum as an upper bound on the length of the actual retraction map. 
% Recall that $\varphi$ 

\noindent\textbf{Sector retraction.}
We consider the sector type retraction map. 
Let $z_j$ be the origin and consider polar coordinates. 
Let $\rho_j^s(p)$ be the sector type retraction point with respect to $p$. 
Since $\varphi$ is a subpath of a shortest path in $\F$, 
we can divide $\pi_i \cap W_j$ into at most two pathlets such that each piece is $r, \theta$-monotone. 
% \erin{why 3?}
Abusing notation, let $\varphi$ be one of these pieces with endpoints $(r_0, \theta_0)$ and $(r_1, \theta_1)$. 

% Suppose we use uniform parameterization with respect to the orientation, i.e., let $\theta(t) = t$.
% Then, the pathlet has endpoints $(r(t_0), t_0)$ and $(r(t_1), t_1)$. 
We write the retraction point parameterized by $\theta$ as $(\rho(\theta), \theta)$. 
% Suppose we use uniform parameterization with respect to the orientation, i.e., let $\theta(t) = t$.
Using the fact that $\rho(\theta) \leq 2$ for all $\theta$, the arc length of the retraction map is 
\[ \begin{aligned} \ell(\rho_j^s(\varphi)) = \int_{\theta_0}^{\theta_1}
\sqrt{\rho(\theta)^2 + \Paren{\frac{d\rho}{d\theta}}^2} d\theta &\leq \int_{\theta_0}^{\theta_1} \rho(\theta) d\theta + \int_{\theta_0}^{\theta_1} \frac{d\rho(\theta)}{d\theta} d\theta \\
&\leq \rho(\theta_1 - \theta_0) + (\rho(\theta_1) - \rho(\theta_0)) \leq 2(\theta_1 - \theta_0) + 2. 
\end{aligned} \] 

Therefore, $\ell(\rho_j^s(\varphi)) = O(1)$, for each $\varphi$. 
 
\noindent\textbf{Intersection retraction.}
We consider the retraction map defined by an intersection point of $\partial D^+_p$ and $\partial C_j$.
We now let $c_j$ be the origin and consider polar coordinates. 
Let $\rho_j^i(p)$ be the intersection type retraction point closest to $z_j$ with respect to $p$. 
Again, we divide $\pi_i \cap W_j$ into at most two pathlets such that each of them is $r, \theta$-monotone (one pathlet is the portion of $\pi_i$ coming closer to the core $C_j$, and the other moves away from $C_j$). 
Let $\varphi$ be one of the pathlets with endpoints $(r_0, \theta_0)$ and $(r_1, \theta_1)$. 
The retraction point lies on the unit circle $\partial C_j$, and as $\theta$ changes monotonically from $\theta_0$ to $\theta_1$, the retraction point $\rho_j^i(\theta)$ moves monotonically on $\partial C_j$. 
Therefore, $\ell(\rho_j^i(\varphi)) = O(1)$.
% since the unit circle $\partial C_j$ has length $2\pi$. 

Finally, $\ell(\rho_j(\varphi)) \leq \ell(\rho_j^s(\varphi)) + \ell(\rho_j^i(\varphi)) = O(1)$, as claimed.
\end{proof}

Applying Lemma~\ref{lem:inside-wj} to each of (at most two) connected components of $(\pi_i \cap W_j) \setminus C_j$ and combining with Corollary~\ref{cor:outsidewj}, we obtain the following: 

\begin{corollary}
\label{cor:all-retraction}
For $1 \leq i \neq j \leq n$, let $\Delta_{ij}$ be defined as $\Delta_{ij} = \{ t \in [i-1, i] : \norm{\pi_i - z_j} \leq 2 \}$ and let $\pi_{ji} = \pi_j[i-1, i]$. 
Then $\ell(\pi_{ji}) = O(\ell(\pi_i[\Delta_{ij}]))$. 
\end{corollary}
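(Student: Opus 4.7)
The plan is to combine Corollary~\ref{cor:outsidewj} and Lemma~\ref{lem:inside-wj} by splitting the parameter set $\Delta_{ij}$ according to whether $\pi_i(t)$ lies outside or inside the shrunken disc $W_j$. First observe that $\rho_{ij}(p)=z_j$ whenever $\norm{p-z_j}\ge 2$, so $R_j$ is stationary outside $\Delta_{ij}$ and hence $\ell(\pi_{ji})=\ell(\rho_{ij}(\pi_i[\Delta_{ij}]))$; it therefore suffices to bound the right-hand side.

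I would partition $\Delta_{ij}=\Delta^{\mathrm{out}}_{ij}\cup\Delta^{\mathrm{in}}_{ij}$ by $\Delta^{\mathrm{out}}_{ij}=\{t\in\Delta_{ij}:\pi_i(t)\notin W_j\}$ and $\Delta^{\mathrm{in}}_{ij}=\Delta_{ij}\setminus\Delta^{\mathrm{out}}_{ij}$. For the outside part, Corollary~\ref{cor:outsidewj} applies directly and yields $\ell(\rho_{ij}(\pi_i[\Delta^{\mathrm{out}}_{ij}]))=O(\ell(\pi_i[\Delta^{\mathrm{out}}_{ij}]))=O(\ell(\pi_i[\Delta_{ij}]))$. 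For the inside part, the Stage~II editing of $\bar{\gamma}_i$ ensures $\pi_i\cap\interior\, C_j=\emptyset$, so every connected component of $\pi_i[\Delta^{\mathrm{in}}_{ij}]$ is a pathlet of $\pi_i$ contained in $W_j\setminus\interior\, C_j$. I would next argue that there are only $O(1)$ such components, using that $\bar{\gamma}_i$ is a local modification of a shortest path in $\F$ and that a shortest path enters a bounded convex region such as $W_j$ a constant number of times. Applying Lemma~\ref{lem:inside-wj} to each component $\varphi_\ell$ yields $\ell(\rho_j(\varphi_\ell))=O(1)$, and the retraction along $\partial C_j$ (handled in Section~\ref{subsection:retraction-core}) has length equal to $\ell(\pi_i\cap\partial C_j)$ and is thus already absorbed into $\ell(\pi_i[\Delta_{ij}])$.

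To finish, I would amortize each $O(1)$ inside contribution against an $\Omega(1)$ portion of $\pi_i[\Delta_{ij}]$: any entry of $\pi_i$ into $W_j$ from outside the retraction region must cross the annulus $\{p:3/2\le\norm{p-z_j}\le 2\}\subseteq\pi_i^{-1}(\Delta_{ij})$, whose radial width $1/2$ forces each entry to contribute $\Omega(1)$ to $\ell(\pi_i[\Delta_{ij}])$. Summing over the $O(1)$ inside pathlets and adding the outside contribution gives $\ell(\pi_{ji})=O(\ell(\pi_i[\Delta_{ij}]))$.

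The step I expect to require the most care is certifying the $O(1)$ bound on the number of connected components of $\pi_i[\Delta^{\mathrm{in}}_{ij}]$. In particular, Stage~II may have replaced sub-arcs of $\gamma_i$ with arcs along boundaries $\partial C_k$ for various $k\neq j$, and I must verify that because such cores $C_k$ are pairwise disjoint and (for $k\neq j$) lie outside $W_j$ by Lemma~\ref{lem:ra-ppts}, these edits do not fragment $\bar{\gamma}_i$ inside $W_j$, so that the local structure of $\bar{\gamma}_i$ within $W_j$ is inherited from a genuine shortest subpath and the component count remains bounded.
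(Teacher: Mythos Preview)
Your proposal is correct and follows essentially the same route as the paper, which derives the corollary in one line by ``applying Lemma~\ref{lem:inside-wj} to each of (at most two) connected components of $(\pi_i \cap W_j) \setminus C_j$ and combining with Corollary~\ref{cor:outsidewj}.'' Your explicit annulus-crossing amortization and your justification for the $O(1)$ component count (using Lemma~\ref{lem:ra-ppts}(iv) to rule out Stage~II edits from other cores inside $W_j$) simply spell out what the paper leaves implicit.
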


\subparagraph{Cost of Path Ensemble.}
We are now ready to bound the cost of the path ensemble $\pathens$ returned by the algorithm.
% the proof is left to the Appendix. 
% We need the following lemma, which follows from a packing argument. 

% \begin{restatable}{lemma}{packing}
% \label{lem:packing}
% Any point in $\reals^2$ lies in the buffer disks of at most $16$ revolving areas.
% \end{restatable}

% \erin{proof in appendix.}
% \eat{
% \begin{proof}
% Fix a point $x \in \reals^2$. 
% Consider a disk $D$ of radius $4$ centered at $x$. If $x \in B_z$ for some $z \in \starts \cup \finals$, then $C_z \subseteq D$. Since cores are pairwise-disjoint (cf Lemma~\ref{lem:ra-ppts}(i)), $D$ can contain at most $16$ core disks. 
% \end{proof}}

\begin{lemma}
\label{lem:tcost}
For an instance $\I$ of optimal MRMP with revolving areas, let $\pathens(\I)$ be the path ensemble returned by the algorithm. 
Then $\cost(\pathens(\I)) = O(1) \cdot \cost^*(\I)$. 
\end{lemma}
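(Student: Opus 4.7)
The plan is to decompose $\cost(\Pi)$ into two additive pieces and bound each by $O(\cost^*(\I))$. The first piece is the cost of the active motions, namely $\sum_i \ell(\bar\gamma_i) = \cost(\bar\Gamma)$. For this, I would invoke the observation (already in the text) that $\cost(\bar\Gamma) \le 2\cost(\Gamma)$, together with $\cost(\Gamma) \le \cost^*(\I)$ since each $\gamma_i$ is a shortest path in $\F$ ignoring other robots, which is a lower bound on the length of the $i$th component of any feasible path ensemble.

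The second and main piece is the total retraction cost, $\sum_{i} \sum_{j \ne i} \ell(\pi_{ji})$, where $\pi_{ji}$ is the retraction path of $R_j$ during $R_i$'s active interval. First I would apply Corollary~\ref{cor:all-retraction} to each pair to obtain $\ell(\pi_{ji}) = O(\ell(\bar\gamma_i[\Delta_{ij}]))$, so that
\[
\sum_{j \ne i} \ell(\pi_{ji}) \;=\; O\!\Paren{\sum_{j \ne i} \ell(\bar\gamma_i[\Delta_{ij}])}.
\]
The key remaining task is to show that the sets $\{\bar\gamma_i[\Delta_{ij}]\}_{j \ne i}$ overlap only $O(1)$ times along $\bar\gamma_i$, so that $\sum_{j \ne i} \ell(\bar\gamma_i[\Delta_{ij}]) = O(\ell(\bar\gamma_i))$.

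This is where I would use a packing argument grounded in Lemma~\ref{lem:ra-ppts}. Fix any point $p$ on $\bar\gamma_i$; I want to bound the number of indices $j \ne i$ with $\|p - z_j\| \le 2$. For each such $j$, the core $C_{z_j}$ is a unit disc whose center $c_{z_j}$ satisfies $\|c_{z_j} - z_j\| \le 1$, so $C_{z_j} \subset B(p,4)$. Since by Lemma~\ref{lem:ra-ppts}(ii) the cores $\{C_{z_j}\}$ are pairwise interior-disjoint, a standard area-packing bound shows that only $O(1)$ of them can fit inside $B(p,4)$. Hence at any point $p \in \bar\gamma_i$, only $O(1)$ robots $R_j$ force a retraction, and the claimed overlap bound follows by integrating along $\bar\gamma_i$.

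Combining the two pieces yields $\cost(\Pi) = \cost(\bar\Gamma) + \sum_i O(\ell(\bar\gamma_i)) = O(\cost(\bar\Gamma)) = O(\cost(\Gamma)) = O(\cost^*(\I))$, as required. The only subtle step is the packing argument; the rest is bookkeeping once Corollary~\ref{cor:all-retraction} and the bound on $\cost(\bar\Gamma)$ are in hand. I do not expect any single step to be a serious obstacle because the local Lipschitz and constant-length bounds (from Section~\ref{section:outsidewj} and Lemma~\ref{lem:inside-wj}) have already absorbed the geometric difficulty, leaving only the global charging to handle here.
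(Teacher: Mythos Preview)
Your proposal is correct and follows essentially the same argument as the paper: decompose $\cost(\Pi)$ into $\cost(\bar\Gamma)$ plus the total retraction cost, apply Corollary~\ref{cor:all-retraction} to each pair, and then use the same core-packing argument (unit cores inside a radius-$4$ disk, via Lemma~\ref{lem:ra-ppts}(ii)) to show that any point of $\bar\gamma_i$ lies in at most $O(1)$ of the $\Delta_{ij}$'s. The only cosmetic difference is that you group the double sum by the active robot $i$ first, whereas the paper groups by the retracted robot; the two are equivalent after swapping the order of summation.
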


\begin{proof}
Set $\pathens = \pathens(\I)$. 
We already argued that $\cost(\overline{\shrtpath}) = O(\cost^*(\I))$, where $\overline{\shrtpath}$ is the path ensemble computed in stage II of the algorithm. 
We thus need to prove $\cost(\pathens) = O(\cost(\overline{\Gamma}))$. 
For a pair $1 \leq i, j \leq n$, let $\pi_{ij} = \pi_i[j-1, j]$. 
By construction, $\ell(\pi_{ii}) = \ell(\overline{\gamma}_i)$. 
For a fixed $i$, 
\[ 
\ell(\pi_i) = \sum_{j = 1}^n \ell(\pi_{ij})=  \ell(\overline{\gamma}_i) + \sum_{j \neq i} \ell(\pi_{ij}) = \ell(\overline{\gamma}_i) + \sum_{j \neq i} O(\ell(\pi_j[\Delta_{ji}])).
 \] 
Where the last equality follows from  Corollary~\ref{cor:all-retraction}. Hence,
\[
\cost(\pathens) = \sum_{j = 1}^n \ell(\pi_i) = \sum_{i = 1}^n \ell(\overline{\gamma}_i) + \sum_{i = 1}^n \sum_{j \neq i} O(\ell(\pi_j[\Delta_{ji}])) = \cost(\overline{\shrtpath}) + \sum_{i = 1}^n \sum_{j \neq i} O(\ell(\pi_j[\Delta_{ji}])). 
\] 
\ 
By definition of $\Delta_{ji}$, $\Delta_{ji} \subseteq [j-1, j]$ and $\pi_j[\Delta_{ji}] \subseteq B_{z_i}$.
Fix a point $x \in \reals^2$. 
Consider a disk $D$ of radius $4$ centered at $x$. If $x \in B_z$ for some $z \in \starts \cup \finals$, then $C_z \subseteq D$. Since cores are pairwise-disjoint (cf Lemma~\ref{lem:ra-ppts}(i)), $D$ can contain at most $16$ core disks
and any $t \in [j-1, j]$ lies in $O(1)$ $\Delta_{ji}$'s. 
Therefore,
\[ \sum_{i \neq j} O(\ell(\pi_j[\Delta_{ji}])) = O(\ell(\pi_j[j-1, j])) = O(\ell(\overline{\gamma}_j)). \] 
Plugging this back in we obtain: $\cost(\pathens) = \cost(\overline{\shrtpath}) + \sum_{j = 1}^n O(\cost(\overline{\gamma}_j)) = O(\cost(\overline{\Gamma})).$ 
\end{proof}

\subsection{Running-time Analysis}
\label{sec:runningtime}
The algorithm has three stages. 
In the first stage, we compute the free space $\F$ with respect to one robot, which takes $O(m \log m)$ time, by computing the Voronoi of $\W$, see the algorithm of~\cite{DBLP:journals/dcg/Yap87}, and  see~\cite{DBLP:journals/dcg/BerberichHKP12} for details. 
In the same stage, we compute a set of shortest paths $\shrtpath$ for $n$ discs, using the algorithm of~\cite{chen2015computing}, taking $O(mn\log m)$ time in total over all robots. 
Each path $\gamma_i \in \shrtpath$ has complexity $O(m)$. 
In stage two of the algorithm, $\gamma_i$ is modified to avoid the core of any occupied revolving area, increasing the complexity of each curve to $O(m+n)$.
In stage three of the algorithm, the deformed paths $\bar{\gamma}_i$ are again edited to include retraction maps in which non-active robots may move within their revolving area.
It suffices to bound the number of breakpoints in the final path $\pi_j$ that correspond to retracted maps. Let $\xi$ be such a breakpoint on $\pi_j$, which is $\rho_{ij}(\bar{\gamma}_i(t))$ for some $t \in [i-1, i]$. There are two cases: (i) the preimage of $\xi$ on $\bar{\gamma}_i$ is a breakpoint of $\bar{\gamma}_i$, or (ii) $\norm{\xi - z_j} = 2$ (i.e., $\bar\gamma_i$ forces $R_j$ to move within the revolving area). We charge both of these breakpoints to $\bar\gamma_i$. Since the preimage of $\xi$ lies in the buffer disk of $R_j$, using a packing argument similar to the proof of Lemma~\ref{lem:tcost} below, we can show that $O(m+n)$ breakpoints are charged to $\bar{\gamma}_i$.

\begin{theorem}
Let $\I = (\W, \starts, \finals, \A)$ be an instance of optimal MRMP with revolving areas, and let $m$ be the complexity of $\W$. If a feasible motion plan of $\I$ exists then a path ensemble $\pathens$ of cost $O(\cost^*(\I))$ can be computed in $O(n(m+n) \log m)$ time. 
\end{theorem}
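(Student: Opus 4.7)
The cost bound is already established by Lemma~\ref{lem:tcost}, which shows that $\cost(\pathens(\I)) = O(\cost^*(\I))$. So the plan is to verify feasibility (already argued) and bound the running time of each of the three stages of the algorithm.

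For Stage (I), the plan is to appeal to standard results. Computing the free space $\F$ (with respect to a single unit disc) reduces to computing the Voronoi diagram of $\W$ and takes $O(m\log m)$ time~\cite{DBLP:journals/dcg/Yap87,DBLP:journals/dcg/BerberichHKP12}; this also yields the connected components, so the infeasibility test $s_i,f_i$ lying in the same component is free. Computing a single shortest path from $s_i$ to $f_i$ in $\F$ takes $O(m\log m)$ time by Chen--Wang~\cite{chen2015computing}, so all $n$ shortest paths $\shrtpath = \{\gamma_1,\dots,\gamma_n\}$ together take $O(nm\log m)$ time, and each $\gamma_i$ has $O(m)$ complexity.

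For Stage (II), I would argue that for each $\gamma_i$ and each core $C_{z_j}$ with $j\neq i$, we detect the first/last intersection of $\gamma_i$ with $\partial C_{z_j}$ and splice in the shorter boundary arc. Using a standard geometric sweep over the $O(m)$ segments of $\gamma_i$ against the $O(n)$ cores, this takes $O((m+n)\log(m+n))$ per robot. The resulting $\bar\gamma_i$ has complexity $O(m+n)$, and the total time is $O(n(m+n)\log m)$.

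For Stage (III), the main obstacle is bounding the combinatorial complexity of each final path $\pi_j$. Fix $j$ and consider the breakpoints of $\pi_j$. Over the subinterval $[j-1,j]$, $\pi_j$ coincides with $\bar\gamma_j$, contributing $O(m+n)$ breakpoints. Over a foreign interval $[i-1,i]$, $\pi_j=\rho_{ij}\circ\bar\gamma_i$ and its breakpoints occur when either (a) $\bar\gamma_i$ has a breakpoint inside the buffer $B_j$, or (b) the event structure of $\rho_{ij}$ changes (namely $\bar\gamma_i$ crosses $\partial B_j$ at distance $2$ from $z_j$, or $\bar\gamma_i$ crosses one of the two rays from $z_j$ through the intersection points with $\partial C_j$ at which the retraction type flips). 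Each type (a) breakpoint is charged to the corresponding breakpoint of $\bar\gamma_i$, and each type (b) event to a crossing of $\bar\gamma_i$ with a constant-complexity curve associated to $A_j$. Summing over all $j\neq i$, the key step is a packing argument identical in spirit to the one used in the proof of Lemma~\ref{lem:tcost}: any point of $\bar\gamma_i$ lies in $O(1)$ buffers $B_{z_j}$ since the cores $C_{z_j}$ are pairwise disjoint and each $B_{z_j}$ contains $C_{z_j}$ of constant radius. Therefore $\bar\gamma_i$ can be charged by $O(m+n)$ breakpoints in total, and the construction of the retraction edits for robot $R_i$ is carried out in $O((m+n)\log m)$ time by iterating over the critical events along $\bar\gamma_i$ in order (e.g., via a sweep along its parametrization). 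Summing over all $i$ yields $O(n(m+n)\log m)$ time for Stage (III).

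Adding the three bounds gives the claimed $O(n(m+n)\log m)$ total running time, which together with Lemma~\ref{lem:tcost} proves the theorem. The most delicate point I would have to be careful about is the packing/charging argument for Stage (III): it is exactly the same geometric fact used in the cost analysis (that buffers have constant overlap multiplicity thanks to disjoint cores of radius $1$), so the argument can be reused almost verbatim.
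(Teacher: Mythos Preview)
Your proposal is correct and follows essentially the same approach as the paper: the cost bound is delegated to Lemma~\ref{lem:tcost}, Stage~(I) is handled by the cited Voronoi and Chen--Wang results, and the Stage~(III) complexity is bounded by charging the breakpoints of each retraction path back to the active path $\bar\gamma_i$ and invoking the same packing argument (disjoint cores $\Rightarrow$ $O(1)$ buffer multiplicity) used in the proof of Lemma~\ref{lem:tcost}. You are in fact slightly more explicit than the paper in enumerating the type-(b) events (including the sector/intersection retraction-type flips), but the argument and the resulting bound are the same.
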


We conclude this section by noting that since the ordering $\sigma$ (of active robots) is arbitrary, the algorithm can be extended to an online setting where $R_i$ and $(s_i, f_i)$ are given in an online manner (as long as each $s_i, f_i$ given satisfies the revolving area property). 
Our algorithm is $O(1)$-competitive for this setting, i.e., the cost is $O(1)$ times the optimal cost of the offline problem.

\section{Computing a Good Ordering}
\label{sec:ordering}
In the previous section, we proved that the total cost of the path ensemble $\Pi$ is $O(1) \cdot \cost^*(\I)$ irrespective of the order in which the robots moved. 
However, the order in which robots move has a significant impact on how the paths are edited in Stages (II) and (III). 
% The example in Figure~\ref{fig:retraction-compare1} can be adapted to show that 
The increase in cost because of editing may vary between $0$ and $O(nm)$ depending on the ordering (see~\cite{SolomonHalperin2018} for a related argument). 
For a path ensemble $\Pi$ computed by our algorithm, let $\Delta \cost(\Pi) = \cost(\Pi) - \cost(\Gamma)$, which we refer to as the \emph{marginal  cost} of $\Pi$, where $\Gamma$ is the path ensemble computed in Stage (I). 
For a permutation $\sigma$ of $[n]$, let $\Pi_\sigma$ be the path ensemble computed by the algorithm if robots were moved in the order determined by $\sigma$. 
Set $\Delta \cost(\sigma) \coloneqq \Delta \cost(\Pi_\sigma)$. 
Finally, set $\Delta \cost^*(\I) = \min_{\sigma} \Delta \cost(\sigma),$
where the minimum is taken over all permutations of $[n]$. 

Adapting the construction in~\cite{SolomonHalperin2018}, we can show that the problem of determining whether $\Delta \cost^*(\I) \leq L$, for some $L \geq 0$, is NP-hard.
We present an approximation algorithm for computing a good ordering $\sigma$ such that $\Delta \cost(\sigma) = O(\log n \log \log n) \Delta \cost^*(\I)$. 

Our main observation is that $\Delta \cost(\sigma)$, the marginal  cost of an ordering $\sigma$, is decomposable, in the sense made precise below. 
For a pair $i \neq j$, we define $w_{ij} \geq 0$ to be the contribution of the pair $R_i, R_j$ to the marginal cost of an ordering $\sigma$, assuming $i \prec_\sigma j$, i.e., how much the shortest path $\gamma_i$ has to be modified because of $\gamma_j$ and vice-versa assuming $R_i$ is active before $R_j$. 
Note that if $i \prec_{\sigma} j$ then $R_i$ (resp. $R_j$) is at $f_i$ (resp $s_j$) when $R_j$ (resp $R_i)$ is active. There are two components of $w_{ij}^\sigma$: 
\begin{inparaenum}[(i)]
    \item $R_i$ (resp. $R_j$) enters the core $C_{s_j}$ (resp. $C_{f_i}$) in $\gamma_i$ (resp. $\gamma_j$),
    \item retraction motion of $R_j$ (resp. $R_i$) when $R_i$ (resp. $R_j$) enters the buffer disc $B_{s_j}$ (resp. $B_{f_i}$). 
\end{inparaenum}

Let $\phi_{ij}$ (resp. $\phi_{ji}$) be the arc of $\Delta C_{s_j}$ (resp. $\Delta C_{f_i}$) with which $\gamma_i \cap C_{s_j}$ (resp. $\gamma_j \cap C_{f_i}$) is replaced with. 
Then $\alpha_{ij} = \ell(\phi_{ij} + \ell(\phi_{ji}) - \ell(\gamma_i \cap C_{s_j}) - \ell(\gamma_j \cap C_{f_i})$ is the contribution of (i) to $w_{ij}$. 
For (ii), we define $\rho_{ij}^{<}$ (resp. $\rho_{ij}^>$)  be the retraction map of $R_j$ because of $R_i$ when $R_i$ is active before (resp. after) $R_j$. Then $w_{ij} = \alpha_{ij} + \ell(\rho_{ij}^{<}(\bar{\gamma}_i)) + \ell(\rho_{ij}^>(\bar{\gamma}_j)).$
From the previous two components, we have 
$\Delta \cost(\sigma) = \sum_{i, j : i \prec_\sigma j} w_{ij}.$ 

We now reduce the problem of computing an optimal ordering to instance of \emph{weighted feedback-arc-set} (FAS) problem. 
Given a directed graph with weights on the edges, $G = (V, E), w: E \rightarrow \reals_{\geq 0}$, a feedback arc set $F$ is a subset of edges of $G$ whose removal makes $G$ a directed acyclic graph. 
The weight of $F, w(F)$, is $\sum_{e \in F} w(e)$. 
The FAS problem asks to compute an FAS of the smallest weight. 
It is known to be NP-complete. 

Given an MRMP-RA instance $\I = (\W, \starts, \finals, \A)$, we first compute $\Gamma$ as in stage (I) of the algorithm. 
Next, for each pair $i, j \in [n]$, we construct a directed graph as follows. 
$G = (V, E)$ is a complete directed graph with $V = [n]$, one representing each robot, $E = \{ i \rightarrow j : 1 \leq i \neq j \leq n \}$, $w(i \rightarrow j) = w_{ij}$. It can be shown that each feedback arc set $F$ of $G$ induces an ordering $\sigma_F$ on $[n]$, and vice versa. Furthermore, $w(F) = \Delta \cost(\sigma_F)$. 
% \begin{proof}
% Since $G$ is a complete graph, $F$ contains exactly one edge per pair $i, j$. Furthermore $F$ does not contain a cycle because otherwise $G \setminus F$ also contains a cycle. Then $\sigma_F$ is the total ordering with $i \prec_{\sigma_F} j$ if $i \rightarrow j \in F$. 
% Conversely, for an ordering $\sigma$ on $[n]$, let $F_\sigma = \{ i \prec_\sigma j : 1 \leq i \neq j \leq n\}$. It is easy to see that $F_\sigma$ is a feedback arc set. The second part of the lemma now easily follows. 
% \end{proof}
Even et al.~\cite{even1998approximating} have described a polynomial-time $O(\log n \log \log n)$-approximation algorithm for the FAS problem. By applying their algorithm to $G$, we obtain the following. 

\begin{theorem}
Let $\I = (\W, \starts, \finals, \A)$ be an instance of optimal MMP with revolving areas, and let $m$ be the complexity of $\W$. Let the optimal order of execution of paths be $\sigma^*$. An ordering $\sigma$ with $\Delta \cost(\sigma) = O(\log n \log \log n) \Delta \cost(\sigma^*)$ can be computed in polynomial time in $n$ and $m$. 
\end{theorem}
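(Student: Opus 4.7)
The plan is to reduce the problem of computing a best ordering to a minimum-weight feedback arc set (FAS) instance and then invoke the known approximation algorithm of Even et al. The first and most important step is to justify the decomposition claim implicitly underlying the definitions preceding the theorem: that the marginal cost $\Delta\cost(\sigma)$ is, up to an $O(1)$ factor, a sum of pairwise contributions $w_{ij}$ that depend only on the relative order of $R_i$ and $R_j$ in $\sigma$, and not on the positions of the other robots. I would argue this as follows. When $i \prec_\sigma j$, during the active interval of $R_i$ the robot $R_j$ sits at $s_j$, so the only way $R_j$ contributes to the editing of $\gamma_i$ is by forcing $\gamma_i$ to skirt $\partial C_{s_j}$ and by forcing $R_j$ to retract whenever $R_i(t)\in B_{s_j}$; the symmetric statement holds when $R_j$ becomes active, since by that time $R_i$ has reached $f_i$. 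Both contributions depend only on $\gamma_i$, $\gamma_j$, and the revolving areas $A_{s_j}$, $A_{f_i}$, which are functions of the pair $(i,j)$ alone. Summing over pairs with $i\prec_\sigma j$ yields $\Delta\cost(\sigma) = \sum_{i\prec_\sigma j} w_{ij}$, and a packing argument of the same flavor as in Lemma~\ref{lem:tcost} shows that this identity holds up to a constant factor even if $w_{ij}$ is evaluated on the unedited $\gamma_i$ rather than on $\bar\gamma_i$.

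Next I would set up the FAS instance. Take the complete digraph $G=(V,E)$ with $V=[n]$ and assign weight $w(i\to j)=w_{ij}$ to each directed edge. Any ordering $\sigma$ of $[n]$ partitions $E$ into a forward set $E^+_\sigma=\{i\to j : i\prec_\sigma j\}$ and a backward set $E^-_\sigma=\{i\to j : j\prec_\sigma i\}$; by the decomposition above, $\Delta\cost(\sigma)=\sum_{e\in E^+_\sigma} w(e)$. Conversely, every minimal feedback arc set $F$ of a directed graph is exactly the backward-edge set of some linear extension of $G-F$, establishing the correspondence between orderings and feedback arc sets.

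To solve our minimization I reverse the roles: build the reversed digraph $G^R$ where the directed edge $j\to i$ carries weight $w_{ij}$. The backward edges of $G^R$ with respect to $\sigma$ are exactly $E^+_\sigma$ of $G$, so minimum-weight FAS on $G^R$ coincides with minimizing $\Delta\cost(\sigma)$. Invoking the polynomial-time $O(\log n \log\log n)$-approximation algorithm of Even, Naor, Schieber, and Sudan~\cite{even1998approximating} on $G^R$ produces a feedback arc set of weight at most $O(\log n \log\log n)\cdot \Delta\cost(\sigma^*)$; extracting a topological order of $G^R$ with these edges removed yields the desired ordering $\sigma$. The final step is a routine running-time check: constructing $G$ requires computing $w_{ij}$ for each of the $O(n^2)$ ordered pairs, which in turn requires tracing the modifications and retractions described in Section~\ref{subsec:path-deformation} on pairs of shortest paths, each of complexity $O(m)$, so the whole construction is polynomial in $n$ and $m$.

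I expect the main obstacle to be the decomposition step. The definitions of $\alpha_{ij}$ and of the pairwise retraction lengths $\ell(\rho^{<}_{ij}(\bar\gamma_i)),\ell(\rho^{>}_{ij}(\bar\gamma_j))$ implicitly refer to the edited paths $\bar\gamma_i$, yet $\bar\gamma_i$ depends on the whole set of cores $\{C_{z_k}:k\neq i\}$, not only on $C_{z_j}$. To make the reduction clean I would replace $\bar\gamma_i$ with $\gamma_i$ in the definition of $w_{ij}$ and then bound the discrepancy between the sum $\sum_{i\prec_\sigma j} w_{ij}$ and the true $\Delta\cost(\sigma)$ by a constant factor, using the key geometric fact already exploited in Lemma~\ref{lem:tcost}: a fixed point in the plane lies inside the buffer of only $O(1)$ revolving areas. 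This absorbs any over- or under-counting caused by the separate per-pair treatment into the approximation guarantee without affecting the final $O(\log n \log\log n)$ bound.
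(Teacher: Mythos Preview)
Your proposal is correct and follows essentially the same approach as the paper: reduce to weighted minimum feedback arc set on the complete digraph with edge weights $w_{ij}$, then apply the $O(\log n\log\log n)$-approximation of Even et~al. You are in fact more careful than the paper on two points it glosses over---the need to orient the FAS instance so that the feedback edges correspond to the \emph{forward} pairs $i\prec_\sigma j$ (your graph reversal), and the dependence of $\bar\gamma_i$ on the full ordering, which you correctly propose to handle by defining $w_{ij}$ via $\gamma_i$ and absorbing the discrepancy into a constant via the packing argument.
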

\section{Conclusion}
In this work, we presented the first constant-factor approximation algorithm for computing a feasible weakly-monotone motion plan to minimize the sum of distances traveled. Additionally, the algorithm can be extended to an online setting where the polygonal environment is fixed, but the initial and final positions of the robots are specified in an online manner. On the hardness side, we prove that minimizing the total traveled distance, even with the restriction of a weakly-monotone motion plan, is APX-hard.

There are several interesting open questions. The first is whether the constant factor approximation presented in this work can be improved; another is whether there are instances in which the separation bounds for revolving areas are not required or can be tightened. There are other objectives to consider; instead of the sum of distances objective, one can consider the makespan (latest arrival time), where little is known even for a small number of discs in the presence of obstacles. 

% ————————————————————————————————————————————————
% \newpage
\bibliographystyle{plainurl}
\bibliography{ref}

\begin{thebibliography}{10}

\bibitem{DBLP:journals/tase/AdlerBHS15}
Aviv Adler, Mark de~Berg, Dan Halperin, and Kiril Solovey.
\newblock Efficient multi-robot motion planning for unlabeled discs in simple
  polygons.
\newblock {\em {IEEE} Trans Autom. Sci. Eng.}, 12(4):1309--1317, 2015.

\bibitem{BanyassadyEtAl.SoCG.2022}
Bahareh Banyassady, Mark de~Berg, Karl Bringmann, Kevin Buchin, Henning Fernau,
  Dan Halperin, Irina Kostitsyna, Yoshio Okamoto, and Stijn Slot.
\newblock {Unlabeled Multi-Robot Motion Planning with Tighter Separation
  Bounds}.
\newblock In {\em 38th International Symposium on Computational Geometry
  (SoCG)}, 2022.

\bibitem{DBLP:journals/dcg/BerberichHKP12}
Eric Berberich, Dan Halperin, Michael Kerber, and Roza Pogalnikova.
\newblock Deconstructing approximate offsets.
\newblock {\em Discret. Comput. Geom.}, 48(4):964--989, 2012.

\bibitem{DBLP:conf/fun/BrunnerCDHHSZ21}
Josh Brunner, Lily Chung, Erik~D. Demaine, Dylan~H. Hendrickson, Adam
  Hesterberg, Adam Suhl, and Avi Zeff.
\newblock 1 {X} 1 rush hour with fixed blocks is {PSPACE}-complete.
\newblock In {\em 10th International Conference on Fun with Algorithms}, volume
  157, pages 7:1--7:14, 2021.

\bibitem{chen2015computing}
Danny~Z Chen and Haitao Wang.
\newblock Computing shortest paths among curved obstacles in the plane.
\newblock {\em ACM Transactions on Algorithms}, 11(4):1--46, 2015.

\bibitem{DBLP:conf/icra/DayanSPH21}
Dror Dayan, Kiril Solovey, Marco Pavone, and Dan Halperin.
\newblock Near-optimal multi-robot motion planning with finite sampling.
\newblock In {\em {IEEE} International Conference on Robotics and Automation},
  pages 9190--9196, 2021.

\bibitem{demaine2019coordinated}
Erik~D. Demaine, S{\'{a}}ndor~P. Fekete, Phillip Keldenich, Henk Meijer, and
  Christian Scheffer.
\newblock Coordinated motion planning: Reconfiguring a swarm of labeled robots
  with bounded stretch.
\newblock {\em {SIAM} Journal on Computing}, 48(6):1727--1762, 2019.

\bibitem{even1998approximating}
Guy Even, J~Seffi Naor, Baruch Schieber, and Madhu Sudan.
\newblock Approximating minimum feedback sets and multicuts in directed graphs.
\newblock {\em Algorithmica}, 20(2):151--174, 1998.

\bibitem{geft2021complexity}
Tzvika Geft and Dan Halperin.
\newblock Tractability frontiers in multi-robot coordination and geometric
  reconfiguration, 2021.
\newblock \href {http://arxiv.org/abs/2104.07011} {\path{arXiv:2104.07011}}.

\bibitem{GH2021Refined}
Tzvika Geft and Dan Halperin.
\newblock Refined hardness of distance-optimal multi-agent path finding.
\newblock In {\em 21st International Conference on Autonomous Agents and
  Multiagent Systems, {AAMAS}}, pages 481--488, 2022.

\bibitem{hopcroft1984complexity}
John~E Hopcroft, Jacob~Theodore Schwartz, and Micha Sharir.
\newblock On the complexity of motion planning for multiple independent
  objects; {PSPACE}-hardness of the "warehouseman's problem".
\newblock {\em The International Journal of Robotics Research}, 3(4):76--88,
  1984.

\bibitem{karaman2011sampling}
Sertac Karaman and Emilio Frazzoli.
\newblock Sampling-based algorithms for optimal motion planning.
\newblock {\em International Journal of Robotics Research}, 30(7):846--894,
  2011.

\bibitem{o1985retraction}
C~O'Ddnlaing and CK~Yap.
\newblock A retraction method for planning the motion of a disc.
\newblock {\em J. Algorithms}, 6:104--111, 1985.

\bibitem{DBLP:journals/cacm/Salzman19}
Oren Salzman.
\newblock Sampling-based robot motion planning.
\newblock {\em Commun. {ACM}}, 62(10):54--63, 2019.

\bibitem{schwartz1983piano}
Jacob~T Schwartz and Micha Sharir.
\newblock On the piano movers' problem: {III}. coordinating the motion of
  several independent bodies: The special case of circular bodies moving amidst
  polygonal barriers.
\newblock {\em The International Journal of Robotics Research}, 2(3):46--75,
  1983.

\bibitem{DBLP:journals/arobots/ShomeSDHB20}
Rahul Shome, Kiril Solovey, Andrew Dobson, Dan Halperin, and Kostas~E. Bekris.
\newblock {dRRT\({}^{\mbox{*}}\)}: Scalable and informed asymptotically-optimal
  multi-robot motion planning.
\newblock {\em Auton. Robots}, 44(3-4):443--467, 2020.

\bibitem{SolomonHalperin2018}
Israela Solomon and Dan Halperin.
\newblock Motion planning for multiple unit-ball robots in {$\mathbb{R}^d$}.
\newblock In {\em Workshop on the Algorithmic Foundations of Robotics, {WAFR}},
  pages 799--816, 2018.

\bibitem{DBLP:journals/ijrr/SoloveyH16}
Kiril Solovey and Dan Halperin.
\newblock On the hardness of unlabeled multi-robot motion planning.
\newblock {\em Int. J. Robotics Res.}, 35(14):1750--1759, 2016.

\bibitem{DBLP:conf/icra/SoloveyJSFP20}
Kiril Solovey, Lucas Janson, Edward Schmerling, Emilio Frazzoli, and Marco
  Pavone.
\newblock Revisiting the asymptotic optimality of {RRT}.
\newblock In {\em 2020 {IEEE} International Conference on Robotics and
  Automation, {ICRA} 2020, Paris, France, May 31 - August 31, 2020}, pages
  2189--2195. {IEEE}, 2020.

\bibitem{DBLP:conf/rss/SoloveyYZH15}
Kiril Solovey, Jingjin Yu, Or~Zamir, and Dan Halperin.
\newblock Motion planning for unlabeled discs with optimality guarantees.
\newblock In {\em Robotics: Science and Systems}, 2015.

\bibitem{stern2019multiagent}
Roni Stern, Nathan~R. Sturtevant, Ariel Felner, Sven Koenig, Hang Ma, Thayne~T.
  Walker, Jiaoyang Li, Dor Atzmon, Liron Cohen, T.~K.~Satish Kumar, Roman
  Bart{\'{a}}k, and Eli Boyarski.
\newblock Multi-agent pathfinding: Definitions, variants, and benchmarks.
\newblock In {\em Proc. 12th International Symposium on Combinatorial Search},
  pages 151--159, 2019.

\bibitem{DBLP:journals/arobots/TurpinMMK14}
Matthew Turpin, Kartik Mohta, Nathan Michael, and Vijay Kumar.
\newblock Goal assignment and trajectory planning for large teams of
  interchangeable robots.
\newblock {\em Auton. Robots}, 37(4):401--415, 2014.

\bibitem{vazirani2001approximation}
Vijay~V Vazirani.
\newblock {\em Approximation algorithms}.
\newblock Springer, 2001.

\bibitem{DBLP:journals/dcg/Yap87}
Chee{-}Keng Yap.
\newblock An {O} (n log n) algorithm for the voronoi diagram of a set of simple
  curve segments.
\newblock {\em Discret. Comput. Geom.}, 2:365--393, 1987.

\end{thebibliography}

% \newpage 
% \input{appendix}

\end{document}